\pdfoutput=1
\documentclass[14pt]{extarticle}
\PassOptionsToPackage{dvipsnames,table,svgnames}{xcolor}
\usepackage{arxiv,times}

\usepackage[authoryear, sort&compress, round]{natbib}
\bibliographystyle{abbrvnat}

\usepackage[inkscapeformat=png]{svg}

\usepackage[most, breakable, skins]{tcolorbox}

\tcbuselibrary{skins}
\usepackage{lipsum}
\usepackage{tabularx}
\usepackage{afterpage}
\usepackage{booktabs}
\usepackage{subcaption}
\usepackage[frozencache,cachedir=minted-cache]{minted}
\usepackage{makecell}
\usepackage{multirow}
\usepackage{multicol}
\usepackage{array}
\usepackage{epigraph}
\usepackage{float}
\usepackage{listings, listings-rust}
\usepackage{fontawesome5}
\usepackage{amssymb,graphicx}
\usepackage[dvipsnames]{xcolor}

\usepackage[capitalise, noabbrev]{cleveref}
\usepackage{longtable}
\usepackage{graphicx}
\usepackage{pdflscape}
\usepackage{adjustbox}
\usepackage[section]{placeins}  
\usepackage{tikz}
\usepackage{rotating}

\usetikzlibrary{shapes.geometric, arrows, positioning, fit}
\usepackage{tcolorbox} 
\pdfimageresolution=300

\usepackage{amsmath}
\usepackage{amssymb}
\usepackage{amsthm}
\usepackage{mathtools}
\usepackage[english]{babel}
\usepackage{enumitem}
\usepackage{wrapfig}
\usepackage{hyperref}
\newtheorem{theorem}{Theorem}[section]
\newtheorem{proposition}[theorem]{Proposition}
\newtheorem{lemma}[theorem]{Lemma}

\newtheorem{definition}[theorem]{Definition}

\newtheorem{example}[theorem]{Example}
\newtheorem{assumption}[theorem]{Assumption}
\newtheorem{remark}[theorem]{Remark}

\newcommand*{\colorboxed}{}
\def\colorboxed#1#{%
  \colorboxedAux{#1}%
}
\newcommand*{\colorboxedAux}[3]{%
  \begingroup
    \colorlet{cb@saved}{.}%
    \color#1{#2}%
    \boxed{%
      \color{cb@saved}%
      #3%
    }%
  \endgroup
}

\lstset{
basicstyle=\footnotesize\ttfamily,
columns=flexible,
frame=single,
xleftmargin=1em,
breaklines=true,
breakindent=0em
}

\title{Understanding Learning with Sliced-Wasserstein Requires Rethinking Informative Slices}



\author{%
Huy Tran$^{1,}$\thanks{Equal contribution}, Yikun Bai$^{1,*}$, Ashkan Shahbazi$^{1,*}$, John 
 R. Hershey$^2$, Soheil Kolouri$^1$\\ \\
$^1$ Department of Computer Science, Vanderbilt University, $^2$ Google Research
}


\begin{document}
\date{}
\maketitle

\begin{abstract}
The practical applications of Wasserstein distances (WDs) are constrained by their sample and computational complexities. Sliced-Wasserstein distances (SWDs) provide a workaround by projecting distributions onto one-dimensional subspaces, leveraging the more efficient, closed-form WDs for one-dimensional distributions. However, in high dimensions, most random projections become uninformative due to the concentration of measure phenomenon. Although several SWD variants have been proposed to focus on \textit{informative} slices, they often introduce additional complexity, numerical instability, and compromise desirable theoretical (metric) properties of SWD. Amidst the growing literature that focuses on directly modifying the slicing distribution, which often face challenges, we revisit the classical Sliced-Wasserstein and propose instead to rescale the 1D Wasserstein to make all slices equally informative. Importantly, we show that with an appropriate data assumption and notion of \textit{slice informativeness}, rescaling for all individual slices simplifies to \textbf{a single global scaling factor} on the SWD. This, in turn, translates to the standard learning rate search for gradient-based learning in common machine learning workflows. We perform extensive experiments across various machine learning tasks showing that the classical SWD, when properly configured, can often match or surpass the performance of more complex variants. We then answer the following question: 

\begin{center}
\textit{"Is Sliced-Wasserstein all you need for common learning tasks?"}
\end{center}

\end{abstract}


\section{Introduction}
\label{sec:intro}

Data representation in machine learning concerns the identity of individual data points and the relationship between them in the context of downstream tasks. Optimal transport (OT) theory \citep{villani2009optimal, peyre2019computational} compares data distributions by finding an optimal transportation plan that minimizes the expected cost of moving mass between them,
leading to the popular Wasserstein distance (WD) central to many learning applications \citep{khamis2024scalable}. 
However, the computational complexity of OT solvers poses a significant bottleneck when calculating the WD. In cases of discrete measures or sample-based scenarios, which are common in machine learning, the problem typically reduces to linear programming with time complexity $\mathcal{O}(N^3\log N)$, space complexity $\mathcal{O}(N^2)$, and sample complexity $\mathcal{O}(N^{-\frac{1}{d}})$, where $N$ is the number of support points and $d$ is the data dimensionality. These unfavorable scaling properties, particularly the curse of dimensionality in sample complexity, make WD impractical for many real-world applications. To address these challenges, several approaches have been proposed, including entropic regularized OT \citep{cuturi2013sinkhorn}, smooth OT \citep{blondel2018smooth,manole2024plugin}), and sliced OT \citep{bonneel2015sliced}. 

The Sliced-Wasserstein distances (SWD), \citep{rabin2012wasserstein, bonneel2015sliced} project high-dimensional distributions onto 1D subspaces and aggregate the closed-form OT solutions in these subspaces. This method is particularly attractive because 1D Wasserstein distances can be computed efficiently with a time complexity of $\mathcal{O}(N \log N)$ and a space complexity of $\mathcal{O}(N)$ for discrete measures. Additionally, SWD provides a metric between probability distributions that retains many desirable properties of the Wasserstein distance (WD), such as being statistically and topologically equivalent to WD, while being more computationally tractable \citep{nadjahi2020statistical}. Notably, with a sample complexity of $\mathcal{O}(N^{-\frac{1}{2}})$, SWD avoids the curse of dimensionality. However, a key drawback of SWD is its projection complexity, which requires exponentially more slices as the data dimensionality increases.




The projection complexity of SWD has motivated several lines of work that aim to enhance the effectiveness of the slicing approach, especially in addressing variance reduction \citep{nguyen2023sliced}, approximation error reduction \citep{nguyen2023quasi}, and slicing complexity \citep{kolouri2019generalized, deshpande2019max, nguyen2020distributional,nguyen2024markovian,nguyen2024energy,nguyen2024sliced}. This is particularly relevant in high-dimensional machine learning settings where data often has supports in low-dimensional subspaces. These SW variants are data-driven, focusing on identifying the most informative slices for capturing distributional differences in the data. For instance, Max-SW \citep{deshpande2019max} and DSW \citep{nguyen2020distributional} seek to find slices/projections that maximize the differences between the data distributions. GSW \citep{kolouri2019generalized} and ASW \citep{chen2020augmented} extend SW by allowing `non-linear' projections to capture complex data structures. EBSW \citep{nguyen2024energy} designs an energy-based slicing distribution that is parameter-free and has the density proportional to an energy function of the projected 1D distance. MSW \citep{nguyen2024markovian} imposes a first-order Markov structure to avoid redundant, independent projections. More recently, RPSW \citep{nguyen2024sliced} proposes using the normalized differences between random samples from the two distributions to ensure that the projections are sampled from the subspace in which the data resides. These methods improve the performance of SW in various downstream tasks and have significantly expanded the tools at disposal for both researchers and practitioners alike. Nonetheless, the elegant extensions also come with increased computational cost, numerical instability, complicated design choices, and often losing the metricity of the original SW.


\noindent In this paper, we argue that the standard SW, with proper hyperparameters, can often match or surpass the performance of more complex variants in many learning tasks while retaining its simplicity and theoretical guarantees. Our key insight is that when $d$-dimensional data have $k$-dimensional supports, where $k \ll d$, almost all random slices $\theta \sim \mathcal{U}(\mathbb{S}^{d-1})$ can be decomposed into an \textit{informative} component $\theta_D \in \mathbb{R}^{k}$ within the data subspace and its orthogonal complement $\theta_D^\perp \in \mathbb{R}^{d-k}$. This implies most slices still carry relevant information for distinguishing distributions, proportional to $\|\theta_D\|$. By appropriately scaling the distance per slice, we get better gradient for learning. In expectation, we show that, with our defined notion of \textit{informativeness}, 
scaling for all slices (based on their informativeness) simplifies to scaling the SWD by a single scalar factor. In gradient-based learning, this means finding an appropriate learning rate is equivalent to getting informative slices \textbf{for free}. This allows the classical SWD to adapt to the data's intrinsic dimensionality without explicitly limiting the computation to the subspace. We provide theoretical justification and empirical evidence, offering a fresh perspective on SW, particularly in high-dimensional settings.



By revisiting the celebrated SW with these insights, we aim to elucidate the performance gap between the original formulation and recent variants in the existing literature. We emphasize that our work does not diminish the valuable contributions of these variants, which have greatly advanced our understanding of Sliced-Wasserstein. Rather, we offer a complementary perspective that highlights the potential of the standard SW when properly integrated into learning tasks. Along that line, we remark that the related line of specialized methods that respects the data geometry \citep{rabin2011transportation,bonet2022spherical,martin2023lcot,quellmalz2023sliced,bonet2024sliced,tran2024stereographic} remains valuable when the manifold constraint on the data is readily known.

In common ML settings where data is supported, or nearly supported, on a \(k\)-dimensional subspace embedded in a \(d\)-dimensional space, our specific contributions can be summarized as follows:

\begin{itemize}[noitemsep,topsep=0pt]



    \item We introduce the $\phi$-weighting formulation unifying various SW variants. In this framework, we propose reweighing all one-dimensional Wasserstein distances based on \textit{slice informativeness} instead of directly modifying the slicing distribution, as commonly done in the literature. We show that with an appropriate notion of \textit{slice informativeness}, in expectation, this leads to an equivalence between the SWD in the ambient space and the data effective subspace. (See \ref{eq:sw_d_k_main}).
    \item Our findings translate to scaling the classic SW by a single global constant to get better learning gradients. We show that this reduces solving the problem of \textit{non-informative slices} to the learning rate search for the classic SW, a process that is already a standard in ML workflows. In other words, we get \textit{informative slices} for free with the classic SW.
    \item We perform a comprehensive learning rate sweep across a wide range of experiments, including gradient flow (on 3 classic toy datasets, MNIST images, CelebA images), color transfer ($3$ sets of images), deep generative modeling on the FFHQ dataset (unconditional generation and unpaired translation with SW). We show that the classic SW, with appropriate hyperpameters, perform competitively with more advanced methods in these settings.

\end{itemize}

\textbf{Notations.} We let $\mathbb{R}^d$ denote a $d$-dimensional inner product space, and we denote the unit hypersphere in this space by $\mathbb{S}^{d-1}=\{\theta \in \mathbb{R}^d : \|\theta\|_2=1\}$. Additionally, we denote by $\mathcal{P}(\mathbb{R}^d)$ the set of probability measures on $\mathbb{R}^d$ endowed with the $\sigma$-algebra of Borel sets, and by $\mathcal{P}_p(\mathbb{R}^d) \subset \mathcal{P}(\mathbb{R}^d)$ the subset of those measures with finite $p$-th moments. For a measurable function $f: \mathbb{R}^d \rightarrow \mathbb{R}$ defined by $f(x)=\theta^{\top} x$ such that $\theta \in \mathbb{S}^{d-1}$, we denote the pushforward of a measure $\mu \in \mathcal{P}(\mathbb{R}^d)$ through $f$ as $f_\# \mu$. Particularly, $\theta_\# \mu$ is the pushforward measure of $\mu$ under the projection $x \mapsto \theta^{\top} x$.

\section{Background on Sliced-Wasserstein}
\label{sec:background}

Let $\mu \in \mathcal{P}_p(\mathbb{R}^d)$ and $\nu \in \mathcal{P}_p(\mathbb{R}^d)$ be two probability measures of interest.

\noindent \textbf{The Wasserstein distance (WD).} The p-WD between $\mu$ and $\nu$ is:
\begin{equation}
W_p^p(\mu, \nu) = \underset{\pi \in \Pi(\mu, \nu)}{\inf} \int_{\mathbb{R}^d \times \mathbb{R}^d} \|x-y\|_p^p \, d\pi(x,y),
\label{eq:wd_general}
\end{equation}
with $\Pi(\mu, \nu)=\{ \pi \in \mathcal{P}_2(\mathbb{R}^d \times \mathbb{R}^d): \pi(A \times \mathbb{R}^d) = \mu(A), \quad \pi(\mathbb{R}^d \times A) = \nu(A)\}$ for all measurable sets $A \subset \mathbb{R}^d$. In one dimension ($d=1$), the p-WD admits the following closed-form solution:
\begin{equation}
W_p^p(\mu, \nu) = \int_0^1 \vert F_{\mu}^{-1}(z) - F_{\nu}^{-1}(z) \vert ^p \, dz,
\label{eq:wd_1d}
\end{equation}
where $F_\mu, F_\nu$ are the cumulative distribution functions (CDF) of $\mu$ and $\nu$, respectively. For empirical measures, \ref{eq:wd_1d} becomes a Monte Carlo sum that can be calculated by averaging the $d^p(\cdot, \cdot)$ between sorted samples. In general, this translates to a highly favorable time complexity of $\mathcal{O}(N \log N)$ and gives rise to the following Sliced-Wasserstein distance.

\noindent \textbf{Sliced-Wasserstein (SW).} The SW distance between $\mu$ and $\nu$ is defined as:
\begin{equation}
SW_p(\mu, \nu;\sigma):=\left( \mathbb{E}_{\theta \sim \sigma} \left[ W_p^p(\theta_\# \mu, \theta_\# \nu) \right] \right)^{\frac{1}{p}}\label{eq:sw_distance}
\end{equation}
where $\sigma\in \mathcal{P}(\mathbb{S}^{d-1})$ is the reference measure for slicing vector $\theta$. In default setting, $\sigma$ is set to be uniform distribution, denoted as: 
$\sigma=\mathcal{U}(\mathcal{S}^{d-1})$ and we use $SW_p(\mu,\nu)$ to denote $SW_p(\mu,\nu ;\sigma)$ for simplicity. The intractable expectation implies \eqref{eq:sw_distance} admits a Monte Carlo estimator:
\begin{equation}
SW_p(\mu, \nu; \sum_{l=1}^L \frac{1}{L}\delta _{\theta_l}) = \left( \frac{1}{L} \sum_{l=1}^L W_p^p(\theta^l_\# \mu, \theta^l_\# \nu) \right)^{\frac{1}{p}},
\label{eq:monte_carlo_estimator}
\end{equation}
where $\{\theta_l\}_{l=1}^L \overset{\text{i.i.d.}} {\sim} \sigma$. The MC scheme has the estimation error decreases as $\frac{1}{\sqrt{L}}$ where $L$ is the number of samples. The main issue becomes how much one can simulate (for large $d$), which proves to be challenging since most slices are known to be non-informative. As a result, $SW_p(\mu, \nu; \sum_{l=1}^L\frac{1}{L}\delta_{\theta_l})$ often underestimates the distance between $\mu$ and $\nu$ in practice. Moreover, $L$ should be sufficiently large compared to $d$, which is undesirable since the time complexity of SW scales linearly with $L$.

\section{Other Related Work}
\label{sec:related_work}

\noindent\textbf{Subspace-constrained Optimal Transport.} Recent works propose computing optimal transport (OT) in lower-dimensional subspaces \citep{paty2019subspace, bonet2021subspace,muzellec2019subspace} to improve both efficiency and robustness for high-dimensional data. \textbf{1) Subspace Detours} \citep{bonet2021subspace,muzellec2019subspace} constrain transport plans to be optimal when projected onto a chosen subspace. This enables efficient extension of low-dimensional transport solutions to the full space. \textbf{2) Subspace Robust Wasserstein} \citep{paty2019subspace} considers the worst-case transport cost over all possible low-dimensional projections. Interestingly, this can be computed by minimizing the sum of the $k$ largest eigenvalues of the transport plan second-order moment matrix $S_k^2(\mu, \nu) = \min_{\pi \in \Pi(\mu,\nu)} \sum_{l=1}^k \lambda_l(V_\pi)$ where $V_\pi := \int (x - y)(x - y)^T d\pi(x, y)$ is the second-order displacement matrix for a coupling $\pi$, and $\lambda_l(V_\pi)$ is its $l$-th largest eigenvalue.

\noindent\textbf{Gaussian Sliced-Wasserstein.} Earlier works (\citep{sudakov1978typical,diaconis1984asymptotics,reeves2017conditional}) establish several central limit theorems showing that under mild conditions, low-dimensional projections of high-dimensional data converge to Gaussians. \cite{nadjahi2021fast} leverages this concentration of measure phenomenon and shows the Gaussian SW distance is equivalent to the classical SW distance: $SW_p^p(\mu, \nu;\frac{1}{d}I_d) = C_{d,p} SW_p^p(\mu, \nu;\mathcal{U}(\mathbb{S}^{d-1}))$, where $C_{d,p}$ is the dimensionality-dependent constant. This leads to an efficient approximation given by $\widehat{SW}_2^2(\mu, \nu) = W_2^2\left(N\left(0, \frac{1}{d}m_2(\mu)\right), N\left(0, \frac{1}{d}m_2(\nu)\right)\right) + \frac{1}{d}|\mu - \nu|^2,$ where $m_2(\mu)$ denotes the $2^{nd}$ moment of $\mu$.

\section{Revisiting Sliced-Wasserstein: A Subspace Perspective}
\label{sec:rethinking}


\begin{wrapfigure}[15]{r}{0.25\textwidth}
    \vspace{-0.1in}
    \centering
    \includegraphics[width=0.25\textwidth]{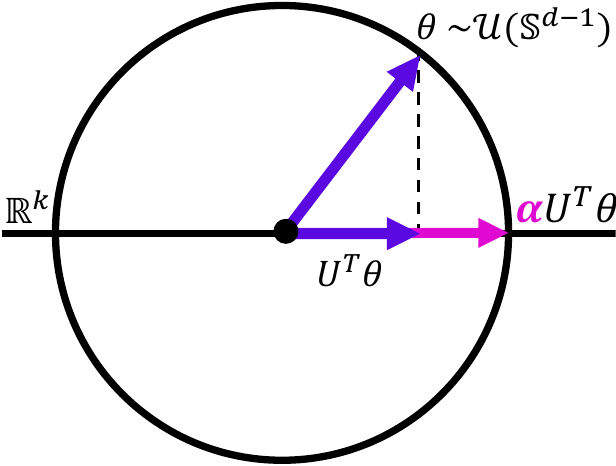}
    \caption{Rescaling the 1D Wasserstein based on slice \textit{informativeness}.}
    \label{fig:rescale}
\end{wrapfigure}

\textbf{The main challenge.} Many machine learning problems involve high-dimensional data that has a low-dimensional structure. Formally, this phenomenon, known as the \textit{manifold hypothesis}, states that for a dataset $X \subset \mathbb{R}^d$, there exists a $k$-dimensional manifold $\mathcal{M}$ where $k \ll d$ such that $X$ approximately lies on $\mathcal{M}$ \citep{fefferman2016testing}. For instance, rigorous dimensionality estimation methods applied to common datasets like MS-COCO \citep{lin2014microsoft} and ImageNet \citep{deng2009imagenet} suggest $k < 50$ \citep{pope2021intrinsic}, despite their ambient dimension $d$ being orders of magnitude larger. While these manifolds are generally nonlinear, they admit local linear approximations via their tangent spaces. Moreover, in practice, data features typically have strong linear correlations, allowing techniques like Principal Component Analysis (PCA) to identify a principal subspace that captures most of the data variance. This subspace approximation is particularly relevant in the context of Sliced-Wasserstein distance (SWD). It is known from \citet{kolouri2019generalized} that when slices $\theta$ are sampled uniformly from $\mathbb{S}^{d-1}$, the probability that a random slice is nearly orthogonal to any fixed direction increases exponentially with dimension. Specifically, for a unit vector $x_0$ representing a principal direction in the data subspace:

\begin{equation}
\text{Pr}\left( \left|\langle \theta, x_0 \rangle\right| \leq \epsilon \right) > 1 - e^{-d\epsilon^2}, \quad \theta \sim \mathcal{U}(\mathbb{S}^{d-1}).
\end{equation}

This concentration of measure phenomenon implies that as dimensionality $d$ grows, most random slices become nearly orthogonal to the principal directions of the data subspace. Consequently, the corresponding 1D Wasserstein distances contribute minimally to the overall SWD. This effect, which we refer to as the \textit{non-informativeness of the slices}, fundamentally limits the effectiveness of SWD in high-dimensional spaces.

\noindent\textbf{Current approaches: Directly modifying the slicing distribution.} Various sampling-based methods seek to define a non-uniform slicing distribution that focuses on \textit{discriminative} directions. This is done in a data-driven manner with or without an explicit optimization. Methods that are optimization-free \citep{nguyen2024energy,nguyen2024sliced} are objectively faster but do not yield true metrics. Other methods \citep{nguyen2020distributional,nguyen2024markovian} yield proper metrics but are more computationally expensive due to the optimization involved. In the limit, the Max variants use discrete slicing distributions that require global optimality to be metrics, which is generally intractable in practice. Empirically, without careful hyperparameter tuning, the different variants face numerical instability in the larger learning rate regimes, likely because of the overemphasizing on directions with large projected distances.


\noindent\textbf{A novel perspective: Rescaling one-dimensional Wasserstein distances.} These challenges in directly redefining the slicing distribution motivates us to take a second look at the conventional wisdom of sampling informative slices. We propose an alternative formulation that reweights each one-dimensional Wasserstein based on the \textit{informativeness} of the corresponding slice/projecting direction (See Figure \ref{fig:rescale} for illustration). By defining the notion of an \textit{informative slice} based on its alignment with the effective data subspace, we demonstrate that it is possible to reweight for all slices by \textit{a single global constant} on the SWD. This maintains the efficiency and theoretical properties of the classical Sliced-Wasserstein. The implications of this finding for using SWD in gradient-based learning will be discussed subsequently.

To formalize this approach, we introduce the following assumption and definitions:

\begin{assumption}[Effective Subspace Structure]\label{asp:lowdsupp}
Let $\mu^d, \nu^d \in P(\mathbb{R}^d)$ be probability measures. We say $(\mu^d, \nu^d)$ has $k$-dimensional effective structure if:
\begin{enumerate}
    \item There exists a semi-orthogonal matrix $U \in \mathbb{R}^{d \times k}$ (i.e., $U^T U = I_k$) such that
    \begin{equation}
        \text{supp}(\mu^d), \text{supp}(\nu^d) \subset V_k := \text{col-span}(U) \nonumber
    \end{equation}
    
    \item $k$ is minimal, meaning that there does not exist any $U' \in \mathbb{R}^{d \times k'}$ with $k' < k$ such that (1) holds.
    
    \item For any measurable set $A \subset \mathbb{R}^d$, we have that
    \begin{equation}
        \mu^d(A) = \mu^d(P_{V_k}(A)) \quad \text{and} \quad \nu^d(A) = \nu^d(P_{V_k}(A)) \nonumber
    \end{equation}
    where $P_{V_k}(x) = UU^T x$ is the orthogonal projection onto $V_k$
    
    \item Let $\mu^k = (U^T)_\# \mu^d$ and $\nu^k = (U^T)_\# \nu^d$ be the pushforward measures under $U^T: \mathbb{R}^d \to \mathbb{R}^k$. Then, we have that
    \begin{equation}
        W_p(\mu^k, \nu^k) < \infty \quad \text{for some } p \geq 1 \nonumber
    \end{equation}
\end{enumerate}
We refer to $V_k$ as the \textbf{effective subspace} (ES) of $\mu^d,\nu^d$, and $k$ as their \textbf{effective dimensionality} (ED).
\end{assumption}



\begin{definition}[Informative slices]

Let $\phi : \mathbb{S}^{d-1} \to \mathbb{R}_+$ be a function that assigns importance values to projection directions $\theta \in \mathbb{S}^{d-1}$ based on their relevance in comparing data distributions. Different approaches compute this importance in various ways:

\begin{itemize}
    \item Max-SW \citep{deshpande2019max}, Markovian SW \cite{nguyen2024markovian}, and EBSW \citep{nguyen2024energy} implicitly use 
\begin{equation}
    \label{eqn:Wpp}
    \phi_{\mu,\nu}(\theta)=W_p^p(\theta_\#\mu,\theta_\#\nu), 
\end{equation} to measure the informativeness of $\theta$. In other words, slices that have higher projected distances are considered more important/informative.\\

    \item On the other hand, RPSW \citep{nguyen2024sliced} implicitly uses 
\begin{equation}
    \label{eqn:phi_RPSW}
    \phi_{\mu, \nu}(\theta; \mu, \nu, \gamma_\kappa) = \mathbb{E}_{(X,Y)\sim\mu \times \nu}[\gamma_\kappa(\theta; P_{\mathbb{S}^{d-1}}(X - Y))], 
\end{equation} where $\gamma_\kappa$ is a location-scale distribution (i.e., von Mises-Fisher) and $P_{\mathbb{S}^{d-1}}$ is the projection onto $\mathbb{S}^{d-1}$. Slices that align well with random paths connecting $2$ distributions are considered more important/informative.\\

\end{itemize}

For the above instances, one may also refer to $\phi_{\mu,\nu}$ as the \textbf{discriminant function}. However, note that we define the concept of \textit{informativeness} more broadly, allowing for a broader set of assumptions about data structure rather than strictly referring to the ability to discriminate between distributions. Other ways to quantify \textit{informativeness} may be appropriate depending on the context where prior information on the data is available. We refer to further related details in Remark \ref{rm:sw}.

\end{definition}

Defining $\phi$ in terms of 1D Wassersteins may not always be desirable. It requires calculating them to find out how informative the slices are, even when the calculations are not always used in computing the final distances \citep{nguyen2024markovian}. Additionally, defining $\phi$ based on the input measures $\mu,\nu$ could introduce complex dependencies that make the triangle inequality challenging to prove \citep{nguyen2024energy,nguyen2024sliced}. Motivated by \ref{asp:lowdsupp}, we propose a principled notion of informativeness that avoids both issues (i.e., redundant calculations, complex dependencies) and lead to a significantly simplified solution.

\begin{definition}[ES-aligned informative slices]\label{def:informative}
Given a $k$-dimensional subspace $V_k = \text{span}(U)$, where $U \in \mathbb{R}^{d \times k}$ is an orthogonal matrix, we define the ES-aligned informative function $\phi_U : \mathbb{S}^{d-1} \to [0,1]$ as:

\begin{equation}
    \phi_U(\theta) = \|U^\top \theta\|.
\end{equation}

Intuitively speaking, $\phi_U$ corresponds to how much information $\theta$ contains about the data if it is projected into the space spanned by $U$. Higher $\phi_U(\theta)$ is considered more (ES-aligned) informative. 





\end{definition}


\begin{remark}\label{lem:phi_properties}
$\phi_U(\theta)$ has the following basic properties:
\begin{enumerate}

    \item $0 \leq \varphi_U(\theta) \leq 1$ for all $\theta \in S^{d-1}$
    \item $\varphi_U(\theta) = 1$ iff $\theta \in \text{span}(U) \cap S^{d-1}$
    \item $\varphi_U(\theta) = 0$ iff $\theta \perp \text{span}(U)$
    \item For any orthogonal matrix $Q \in \mathbb{R}^{k \times k}$, $\varphi_U(\theta) = \varphi_{UQ}(\theta)$
\end{enumerate}
\end{remark}

    
    
    

\subsection{The  $\phi$-weighting formulation}


Starting from the classical SWD definition in Equation \eqref{eq:sw_distance}, we propose a general framework for reweighting slice contributions:

\begin{equation}\label{eq:rescaled_swd}
\widetilde{SW}_p(\mu, \nu;\sigma,\rho_\phi) = \biggl( \int_{\mathbb{S}^{d-1}} \underbrace{\textcolor[rgb]{0.88, 0.04, 0.82}{\rho_\phi(\phi(\theta))} W_p^p(\theta_\# \mu, \theta_\# \nu)}_{\text{Reweighted contribution}} d\sigma(\theta) \biggr)^{\!\frac{1}{p}},
\end{equation}

where $\rho_\phi: [0,1] \to \mathbb{R}_+$ is the \textbf{$\phi$-weighting function} that rescales the contribution of each slice based on how \textit{(un)informative} it is.


\begin{example}
    If the goal is to make all slices informative, an appropriate choice for $\rho_\phi$ can be the multiplicative inverse of $\phi(\theta)$ (i.e., more informative slices are scaled less). That is,


\begin{equation} \label{eq:rescale_recip} \rho_\phi(\phi(\theta)) = \begin{cases} \dfrac{1}{\phi(\theta)^p}, & \text{if } \phi(\theta) > 0, \\ 0, & \text{if } \phi(\theta) = 0. \end{cases} \end{equation}
\end{example}

\begin{remark}\label{rm:sw}
Equation \eqref{eq:rescaled_swd} notably does not rely on Assumption \ref{asp:lowdsupp} (only our choice of $\phi(\cdot) = \phi_U(\cdot)$ does). By defining the appropriate $\rho_\phi(\cdot)$ and $\phi(\cdot)$, the  $\phi$-weighting formulation can be seen as a unifying formulation that recovers different SW variants.
\begin{itemize}

\item Classical SWD: We set $\phi\equiv 1$ and obtain the classical Sliced-Wasserstein distance:
\begin{equation}
    \widetilde{SW}_p^p(\mu,\nu;\sigma, \rho_\phi)=SW_p^p(\mu,\nu;\sigma).
\end{equation}
\item Max-SW \citep{deshpande2019max}: We set $\phi_{\mu,\nu}(\theta)=W_p^p(\theta_\#\mu,\theta_\#\nu)$ and $\rho_\phi(r)=\delta_{r_{max}},\sigma=\mathcal{U}(\mathbb{S}^{d-1})$ where $r_{max}=\max\phi_{\mu,\nu}(\theta)$. Then, we have that 
\begin{align}
\widetilde{SW}_p^p(\mu,\nu;\sigma,\rho_\phi)=Max\text{-}SW_p^p(\mu,\nu).
\end{align}

\item EBSW \citep{nguyen2024energy}: We set $\phi_{\mu,\nu}(\theta)=W_p^p(\theta_\#\mu,\theta_\#\nu)$, $\rho_\phi(r)=\frac{f(r)}{\int_{\mathbb{S}^{d-1}} f(W_p^p(\theta_\#\mu, \theta_\#\nu)) d\sigma(\theta)}$, $\sigma=\mathcal{U}(\mathbb{S}^{d-1})$ where $f : [0, \infty) \to (0, \infty)$ is an increasing energy function (e.g., $f(x) = e^x$). Then, we have that
\begin{align}
\widetilde{SW}_p^p(\mu,\nu;\sigma,\rho_\phi) 
&= \int_{\mathbb{S}^{d-1}} \rho_\phi(\phi_{\mu,\nu}(\theta)) W_p^p(\theta_\#\mu,\theta_\#\nu) d\sigma(\theta) \nonumber\\
&= \int_{\mathbb{S}^{d-1}} \frac{f(W_p^p(\theta_\#\mu,\theta_\#\nu))}{\int_{\mathbb{S}^{d-1}} f(W_p^p(\theta_\#\mu, \theta_\#\nu)) d\sigma(\theta)} W_p^p(\theta_\#\mu,\theta_\#\nu) d\sigma(\theta) \nonumber\\
&=EBSW_p^p(\mu,\nu;f).
\end{align}

\item RPSW \citep{nguyen2024sliced}: We set $\phi_{\mu,\nu}(\theta) = \mathbb{E}_{(X,Y)\sim\mu\times\nu}[\gamma_\kappa(\theta; P_{S^{d-1}}(X - Y))]$, 
$\rho_\phi(r) = r$, $\sigma=\mathcal{U}(\mathbb{S}^{d-1})$, where $\gamma_{\kappa}$ is a location-scale distribution with parameter $\kappa$. Then, we have that

\begin{align}
\widetilde{SW}_p^p(\mu,\nu;\sigma,\rho_\phi) &= \int_{\mathbb{S}^{d-1}} \phi_{\mu,\nu}(\theta)W_p^p(\theta_\#\mu, \theta_\#\nu) d\sigma(\theta) \nonumber \\
&= \mathbb{E}_{\theta\sim\phi_{\mu,\nu}(\cdot)}\left[W_p^p(\theta_\#\mu, \theta_\#\nu)\right] \nonumber \\
&= RPSW_p^p(\mu,\nu;\gamma_\kappa). 
\end{align}
where we abuse the notation $\phi_{\mu,\nu}$ to denote the distribution induced by function $\phi_{\mu,\nu}(\theta)$. 


\end{itemize}

\end{remark}

\subsection{Misaligned random projections are implicitly downweighed by a scalar}

Under Assumption \ref{asp:lowdsupp}, we will show that the 1D Wasserstein corresponding to each random projection is weighted by a scalar related to the (ES-aligned) informativeness of that projection.

\noindent\textbf{The case for one-dimensional effective subspaces.} Let $V_1 = \text{span}(u)$ where $u \in \mathbb{S}^{d-1}$, and suppose $\text{supp}(\mu^d), \text{supp}(\nu^d) \subset V_1$.
Given $\theta \in \mathbb{S}^{d-1}$, we can decompose it uniquely as $\theta = \theta_{V_1} + \theta_{V_1^\perp},$ where $\theta_{V_1} = (u^\top\theta)u$ and $\theta_{V_1^\perp} \perp V_1$. For any $x \in V_1$, we have $x = (x^\top u)u$, and  $\theta^{\top}x$ can thus be decomposed as:
\begin{equation}
\theta^\top x = (\theta_{V_1} + \theta_{V_1^\perp})^\top x = \theta_{V_1}^\top x = (u^\top \theta)^\top(u^\top x).
\end{equation}

This implies that for any slice $\theta$, the projected distributions $\theta_\# \mu^d$ and $\theta_\# \nu^d$ are equivalent (up to scaling) to the distributions obtained by projecting $\mu^d$ and $\nu^d$ onto $u$. Specifically:
\begin{equation}
W_p^p(\theta_\# \mu^d, \theta_\# \nu^d) = |u^\top\theta|^p W_p^p(u_\# \mu^d, u_\# \nu^d). 
\end{equation}


\noindent\textbf{Generalizing to higher-dimensional effective subspaces.} We extend the idea from one dimension to a $k$-dimensional subspace $V_k$ and investigate how the reweighting function $\rho_\phi(\phi_U(\theta)) = \|U^\top \theta\|^{-p}$ adjusts the contributions of slices in higher dimensions.
\begin{proposition}\label{pro:theta_sw_d_k}
Under Assumption \ref{asp:lowdsupp}, let $\mu^k = U_\# \mu^d$ and $\nu^k = U_\# \nu^d$ be the pushforward measures in $\mathbb{R}^k$. Then, for any $\theta^d \in \mathbb{S}^{d-1}$, we have that:
\begin{equation}\label{eq:theta_W_d_k}
W_p^p(\theta^d_\# \mu^d, \theta^d_\# \nu^d) = W_p^p((U^\top\theta^d)_\# \mu^k, (U^\top\theta^d)_\# \nu^k)= \|U^\top \theta^d\|^p W_p^p(\theta^k_\#\mu^k,\theta^k_\#\nu^k),
\end{equation}

where $\theta^k=\frac{U^\top\theta^d}{\|U^\top\theta^d\|}$ with convention $\theta^k=0_k$ if $\|U^\top \theta^d\|=0$. 

Furthermore, we have that: 
\begin{align}
&SW_p^p\left(\mu^k, \nu^k; \frac{1}{L} \sum_{l=1}^L \delta_{\theta_l^k}\right) = \widetilde{SW}_p^p\left(\mu^d, \nu^d; \frac{1}{L} \sum_{l=1}^L \delta_{\theta_l^d},\rho\right)\label{eq:sw_d_k_empirical} \\
&SW_p^p\left(\mu^k, \nu^k\right) = \widetilde{SW}_p^p\left(\mu^d, \nu^d; \mathcal{U}(\mathbb{S}^{d-1}),\rho\right)\label{eq:sw_d_k_uniform}
\end{align}

Here, we adopt the convention $\frac{1}{0}\cdot 0=0$ in \eqref{eq:sw_d_k_empirical} if $\|U^\top \theta^d_l\|=0$. 
\end{proposition}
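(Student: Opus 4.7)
The plan is to prove Proposition \ref{pro:theta_sw_d_k} in three stages: first establish the pointwise identity \eqref{eq:theta_W_d_k}, then sum to obtain the empirical statement \eqref{eq:sw_d_k_empirical}, and finally use rotational invariance to pass to the uniform statement \eqref{eq:sw_d_k_uniform}.

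\textbf{Step 1 (Pointwise identity).} I will use the one-dimensional case in the excerpt as a blueprint and extend it to $k$ dimensions. By Assumption \ref{asp:lowdsupp}, any $x$ in $\mathrm{supp}(\mu^d) \cup \mathrm{supp}(\nu^d)$ satisfies $x = UU^\top x$, so $\theta^{d\top} x = \theta^{d\top} U U^\top x = (U^\top \theta^d)^\top (U^\top x)$. This factors the scalar projection $x \mapsto \theta^{d\top} x$ through $U^\top$, which gives the pushforward identity $\theta^d_\# \mu^d = (U^\top \theta^d)_\# \mu^k$, and likewise for $\nu^d$. This yields the first equality of \eqref{eq:theta_W_d_k}. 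For the second, I write $U^\top \theta^d = \|U^\top \theta^d\|\,\theta^k$ and apply the positive $p$-homogeneity of the 1D Wasserstein (a change of variables $z \mapsto cz$ scales $W_p$ by $|c|$). The degenerate case $\|U^\top \theta^d\| = 0$ forces the projected measures to be equal, so both sides vanish and the convention $\tfrac{1}{0}\cdot 0 = 0$ is consistent.

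\textbf{Step 2 (Empirical identity).} Applying Step 1 slice by slice,
\begin{equation}
W_p^p(\theta^{d}_{l,\#}\mu^d,\theta^{d}_{l,\#}\nu^d) = \|U^\top \theta^d_l\|^p\, W_p^p(\theta^k_{l,\#}\mu^k,\theta^k_{l,\#}\nu^k),
\end{equation}
so that, multiplying by $\rho_{\phi_U}(\phi_U(\theta^d_l)) = \|U^\top \theta^d_l\|^{-p}$ (with the same convention on the null slices) and averaging over $l=1,\ldots,L$, the scaling cancels exactly and I recover $SW_p^p(\mu^k,\nu^k; \tfrac{1}{L}\sum_l \delta_{\theta^k_l})$, which is \eqref{eq:sw_d_k_empirical}.

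\textbf{Step 3 (Uniform identity).} The main obstacle here is showing that if $\theta^d \sim \mathcal{U}(\mathbb{S}^{d-1})$ then the normalized projection $\theta^k = U^\top \theta^d / \|U^\top \theta^d\|$ is distributed as $\mathcal{U}(\mathbb{S}^{k-1})$. I will argue this via orthogonal invariance: extend $U$ to a full orthogonal matrix $[U \,\, U^\perp] \in \mathbb{R}^{d\times d}$; then $\theta^d \mapsto [U \,\, U^\perp]^\top \theta^d = (U^\top \theta^d, U^{\perp\top}\theta^d)$ is uniform on $\mathbb{S}^{d-1}$ in the rotated coordinates, and by the well-known fact that the first $k$ coordinates of a uniform unit vector, once normalized, are uniform on $\mathbb{S}^{k-1}$ (with the null set of measure zero when $k\geq 1$), the claim follows. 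Combining this with Step 1,
\begin{align}
\widetilde{SW}_p^p(\mu^d,\nu^d;\mathcal{U}(\mathbb{S}^{d-1}),\rho)
 &= \mathbb{E}_{\theta^d}\!\left[\|U^\top \theta^d\|^{-p}\,\|U^\top \theta^d\|^p\, W_p^p(\theta^k_\#\mu^k,\theta^k_\#\nu^k)\right] \nonumber\\
 &= \mathbb{E}_{\theta^k \sim \mathcal{U}(\mathbb{S}^{k-1})}\!\left[W_p^p(\theta^k_\#\mu^k,\theta^k_\#\nu^k)\right] = SW_p^p(\mu^k,\nu^k),
\end{align}
where the second line disintegrates over the (independent) radial and angular parts of $U^\top\theta^d$, since the integrand depends only on the angular part $\theta^k$.

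The main subtlety, beyond this rotational invariance argument, is being careful with the null set $\{\theta^d : U^\top \theta^d = 0\} = \mathrm{span}(U)^\perp \cap \mathbb{S}^{d-1}$; it has uniform measure zero on $\mathbb{S}^{d-1}$ whenever $k \geq 1$, so the convention $\tfrac{1}{0}\cdot 0 = 0$ causes no issue in the integral, and one only needs to be explicit about it in the empirical case where $L$ is finite.
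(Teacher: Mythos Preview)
Your proposal is correct and follows essentially the same three-step structure as the paper's proof: the pointwise identity via $x = UU^\top x$ on the support plus the scaling lemma, the empirical average, and the uniform case via the fact that $\theta^k = U^\top\theta^d/\|U^\top\theta^d\| \sim \mathcal{U}(\mathbb{S}^{k-1})$. The only minor difference is in how this last distributional fact is justified: the paper passes through the Gaussian representation $\theta^d = X/\|X\|$ with $X \sim \mathcal{N}(0,I_d)$ (so $U^\top X \sim \mathcal{N}(0,I_k)$ and its normalization is uniform on $\mathbb{S}^{k-1}$), whereas you extend $U$ to a full orthogonal basis and invoke rotational invariance directly; both arguments are standard and equivalent, and your remark about independence of the radial and angular parts is correct but not actually needed once the $\|U^\top\theta^d\|^{\pm p}$ factors cancel.
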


\noindent The proof is in the Appendix \ref{sec:sw_d_k}. 

\begin{remark}[Implicit Downweighting]\label{cor:down_weight}
Under the conditions of Proposition \ref{pro:theta_sw_d_k}, each slice contribution is implicitly downweighted by $\|U^T \theta^d\|^p$. In particular,

\begin{enumerate}
    \item For any $\theta^d \in S^{d-1}$, 
    \begin{equation}
        W_p^p(\theta^d_\# \mu^d, \theta^d_\# \nu^d) \leq W_p^p(\mu^k, \nu^k)
    \end{equation}
    \item The downweighting is maximal when $\theta^d \perp \text{span}(U)$ and vanishing when $\theta^d \in \text{span}(U) \cap S^{d-1}$.    
    
\end{enumerate}
\end{remark}

\noindent\textbf{Rescaling to equalize informativeness.} Our key insight is that rather than overemphasizing certain slices via sampling—as methods like Max-SW do (i.e., by maximizing the discriminant over $\theta$ and placing all weight on the most informative slice)—we propose to identify uninformative ones and reweight their contributions to the final SWD. Assumption \ref{asp:lowdsupp} gives rise to the fact that each one-dimensional Wasserstein distance $W_p^p(\theta^d_\# \mu^d, \theta^d_\# \nu^d)$ is implicitly downweighed by $|U^\top \theta^d|^p$. This observation naturally fits into the proposed $\phi$-weighting formulation, as there is an implicit scaling factor associated with each slice. To counteract it and makes all slices equally (ES-aligned) informative, we use the reciprocal weighting function \eqref{eq:rescale_recip} to compensate for the implicit down-weighting of misaligned slices. Then, we have that

\begin{equation}\label{eq:rescale_individual}
\rho_\phi(\phi_U(\theta^d)) W_p^p(\theta^d_\# \mu^d, \theta^d_\# \nu^d) = 
\begin{cases}
W_p^p(\theta^k_\# \mu^k, \theta^k_\# \nu^k), & \text{if } \phi_U(\theta^d) > 0, \\
0, & \text{if } \phi_U(\theta^d) = 0,
\end{cases}
\end{equation}

where $\theta^k = \frac{U^\top \theta^d}{\|U^\top \theta^d\|}$.





\subsection{Subspace Sliced-Wasserstein is Rescaled Sliced-Wasserstein}

In this section, we will show that the generalized notion of informative slices (as defined in \ref{def:informative}) becomes particularly advantageous for equalizing slice informativeness. Since $\rho_\phi(\phi_U(\theta^d))=|U^{\top}\theta^d|^p$ is invariant under rotation within $V_k$, the implicit down-weighting for all random projections simplifies to a global factor scaling down the SWD in expectation. Thus, reweighing for all slices is then equivalent to multiplying the SWD by a scalar. 

Starting from \eqref{eq:theta_W_d_k}, if we integrate both sides over $\theta^d \in \mathbb{S}^{d-1}$ with respect to the uniform measure $\sigma(\theta^d)$, then we obtain 

\begin{equation}\label{eq:main_int
} SW_p^p(\mu^d, \nu^d) = \int_{\mathbb{S}^{d-1}} W_p^p(\theta^d_\sharp \mu^d, \theta^d_\sharp \nu^d) d\sigma(\theta^d) \ = \int_{\mathbb{S}^{d-1}} \| U^\top \theta^d \|^p  W_p^p\left( \theta^k_\sharp \mu^k, \theta^k_\sharp \nu^k \right)  d\sigma(\theta^d). \end{equation}



Note that $\theta^k$ depends on $\theta^d$, and the distribution of $\theta^k$ induced by $\theta^d \sim \sigma$ is uniform over $S^{k-1}$. We introduce the change of variables from $\theta^d$ to $\theta^k$ and express the integral in terms of $\theta^k$:
\begin{equation}\label{eq:main_changofvar}
SW_p^p(\mu^d, \nu^d) = \int_{\mathbb{S}^{k-1}} W_p^p\left( \theta^k_{\#} \mu^k, \theta^k_{\#} \nu^k \right) \left( \int_{\theta^d: \frac{U^\top \theta^d}{\| U^\top \theta^d \|} = \theta^k} \| U^\top \theta^d \|^p \, d\sigma(\theta^d|\theta^k) \right) dT_\# \sigma(\theta^k),
\end{equation}
where $\sigma(\cdot|\theta^k)$ is the conditional distribution of $\theta^d$ under $\theta^k$, and $T:x\mapsto \frac{U^\top x}{\|U^\top x\|}$ is the mapping from $\theta^d$ to $\theta^k$. 

The inner integral over $\theta^d$ can be evaluated as a scaling factor $C_{d,k}$ dependent on $\sigma$, $\theta^k$, $U$. When $\sigma=\mathcal{U}(\mathbb{S}^{d-1})$, $C_{d,k}$ is invariant for all $\theta^k$. 


Substituting back into \eqref{eq:main_changofvar}, and let $\sigma_k=T_\#\sigma=\mathcal{U}(\mathbb{S}^{k-1})$ denote the distribution of $\theta^k$, we obtain
\begin{equation} \label{main_C}
SW_p^p(\mu^d, \nu^d) = C_{d,k} \int_{\mathbb{S}^{k-1}} W_p^p\left( \theta^k_{\#} \mu^k, \theta^k_{\#} \nu^k \right) \, d\sigma_k(\theta^k).
\end{equation}

Since $\sigma_k(\theta^k)$ integrates to 1 over $S^{k-1}$, and $W_p^p\left( \theta^k_{\#} \mu^k, \theta^k_{\#} \nu^k \right)$ is integrated over all $\theta^k$, we can express the right-hand side as $C_{d,k} \cdot SW_p^p(\mu^k, \nu^k; \sigma_k)$. Intuitively speaking, this means the \textit{loss of information} is due to an implicit constant factor on $SW_p^p(\mu^d, \nu^d)$, which we denote as the \textbf{Effective Subspace Scaling Factor} (ESSF). Thus, rescaling the one-dimensional Wasserstein for all slices via Equation \eqref{eq:rescale_individual} becomes multiplying the SWD by the reciprocal of the ESSF. We proceed further to make this connection explicit by the following theorem.

\begin{theorem}[Effective Subspace Scaling Factor]\label{thm:sw_d_k}
Let $\mu^d, \nu^d \in \mathcal{P}(\mathbb{R}^d)$ satisfy Assumption~\ref{asp:lowdsupp}, and define $\mu^k = U_{\#} \mu^d$ and $\nu^k = U_{\#} \nu^d$. Then we have that
\begin{equation}\label{eq:sw_d_k_main}
SW_p^p(\mu^d, \nu^d) = \frac{C_k}{C_d} \cdot SW_p^p(\mu^k, \nu^k),
\end{equation}

where $C_d = 2^{p/2} \frac{\Gamma\left(\frac{d}{2} + \frac{p}{2}\right)}{\Gamma\left(\frac{d}{2}\right)}$ and $C_k$ is defined analogously, with \( \Gamma \) denoting the Gamma function.

\end{theorem}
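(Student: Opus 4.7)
The plan is to pick up from equation~\eqref{main_C} already derived in the body, so the only remaining work is to make the constant $C_{d,k}$ explicit and show it equals $C_k/C_d$. Concretely, starting from Proposition~\ref{pro:theta_sw_d_k} we have
\[
SW_p^p(\mu^d,\nu^d)=\int_{\mathbb{S}^{d-1}}\|U^\top\theta\|^p\,W_p^p(\theta^k_{\#}\mu^k,\theta^k_{\#}\nu^k)\,d\sigma(\theta),
\]
with $\theta^k=U^\top\theta/\|U^\top\theta\|$ on the set $\{\|U^\top\theta\|>0\}$, which has full $\sigma$-measure. So the crux is to show that $\|U^\top\theta\|$ and $\theta^k$ are independent when $\theta\sim\mathcal{U}(\mathbb{S}^{d-1})$, and then to evaluate $\mathbb{E}[\|U^\top\theta\|^p]$.

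For independence, I would use the standard Gaussian representation: write $\theta=Z/\|Z\|$ with $Z\sim\mathcal{N}(0,I_d)$. Since $U^\top U=I_k$, the vector $U^\top Z$ is $\mathcal{N}(0,I_k)$; moreover $U^\top Z$ is independent of $(I_d-UU^\top)Z$, so that the direction $U^\top Z/\|U^\top Z\|$ is uniform on $\mathbb{S}^{k-1}$ and independent of both $\|U^\top Z\|$ and $\|(I_d-UU^\top)Z\|$. Therefore $\theta^k=U^\top Z/\|U^\top Z\|\sim\mathcal{U}(\mathbb{S}^{k-1})$ and is independent of the radial ratio $\|U^\top\theta\|=\|U^\top Z\|/\|Z\|$. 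This independence lets me factor the integral:
\[
SW_p^p(\mu^d,\nu^d)=\mathbb{E}\bigl[\|U^\top\theta\|^p\bigr]\cdot\int_{\mathbb{S}^{k-1}}W_p^p(\theta^k_{\#}\mu^k,\theta^k_{\#}\nu^k)\,d\mathcal{U}(\mathbb{S}^{k-1})(\theta^k),
\]
the second factor being exactly $SW_p^p(\mu^k,\nu^k)$.

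It remains to compute $\mathbb{E}[\|U^\top\theta\|^p]$. Using $\|U^\top\theta\|^2=\|U^\top Z\|^2/(\|U^\top Z\|^2+\|(I-UU^\top)Z\|^2)$ with the two squared norms independent and $\chi^2$-distributed with $k$ and $d-k$ degrees of freedom respectively, $\|U^\top\theta\|^2$ is $\mathrm{Beta}(k/2,(d-k)/2)$. Using the closed-form moment of a Beta distribution,
\[
\mathbb{E}\bigl[\|U^\top\theta\|^p\bigr]=\mathbb{E}\bigl[B^{p/2}\bigr]=\frac{\Gamma(k/2+p/2)\,\Gamma(d/2)}{\Gamma(k/2)\,\Gamma(d/2+p/2)}=\frac{C_k}{C_d},
\]
where the last equality is immediate from the definition $C_m=2^{p/2}\Gamma(m/2+p/2)/\Gamma(m/2)$ (the factors $2^{p/2}$ cancel). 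Combining the two displays yields the theorem.

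The main obstacle is really just the independence/factorization step: one must be careful that the null set $\{\|U^\top\theta\|=0\}$ does not contribute (which it doesn't, since it has $\sigma$-measure zero under the continuous $\mathcal{U}(\mathbb{S}^{d-1})$), and that the pushforward $T_{\#}\sigma$ is genuinely uniform on $\mathbb{S}^{k-1}$. Both are handled cleanly by the Gaussian trick above, which also provides the Beta distribution identity essentially for free; the rest is a Gamma-function simplification.
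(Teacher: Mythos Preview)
Your proof is correct, but it follows a different route from the paper's own argument. You start from Proposition~\ref{pro:theta_sw_d_k}, establish directly (via the Gaussian representation $\theta=Z/\|Z\|$) that the radial part $\|U^\top\theta\|$ and the direction $\theta^k$ are independent when $\theta\sim\mathcal{U}(\mathbb{S}^{d-1})$, factor the integral accordingly, and then identify $\mathbb{E}[\|U^\top\theta\|^p]$ as the $p/2$-moment of a $\mathrm{Beta}(k/2,(d-k)/2)$ variable, which simplifies to $C_k/C_d$. The paper instead routes everything through the Gaussian-sliced Wasserstein distance: it first shows $C_m\cdot SW_p^p(\cdot,\cdot;\mathcal{U}(\mathbb{S}^{m-1}))=SW_p^p(\cdot,\cdot;\mathcal{N}(0,I_m))$ for any $m$ (Proposition~\ref{pro:SW_U_G}), then observes that under Assumption~\ref{asp:lowdsupp} one has $SW_p^p(\mu^d,\nu^d;\mathcal{N}(0,I_d))=SW_p^p(\mu^k,\nu^k;\mathcal{N}(0,I_k))$ essentially because $U^\top Z\sim\mathcal{N}(0,I_k)$ (Lemma~\ref{lem:sw_N_d_k}), and chains the two identities. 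Your approach is more self-contained and makes the factorization step stated informally in the main text around equation~\eqref{main_C} fully rigorous; the paper's approach is slicker in that the Gaussian slicing absorbs the radial/angular decoupling into a single pushforward identity, at the cost of introducing an auxiliary slicing measure. Interestingly, the Beta moment computation you use is exactly what the paper invokes later, in the proof of Proposition~\ref{prop:empirical_convergence}(1), so the two arguments share the same probabilistic core.
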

When $k<d$, assuming $\| U^\top \theta_l^d\|\neq 0$ is reasonable since $\mathcal{U}(\mathbb{S}^{d})(\{\theta^d\in \mathbb{S}^{d-1}: U^\top \theta=0 \})=0.$

The proof is in the Appendix \ref{sec:sw_d_k}.




In practice, we have access to finite samples and a limited number of slices. We provide the following proposition to extend our results to this practical setting.

\begin{proposition}\label{prop:empirical_convergence}
Let $\mu^d, \nu^d \in \mathcal{P}(\mathbb{R}^d)$ satisfy Assumption~\ref{asp:lowdsupp}. Consider the empirical estimator $\widehat{ESSF}(L)$ defined as:

\begin{equation}
    \widehat{ESSF}(L) = \frac{1}{L}\sum_{l=1}^L \| U^{\top} \theta_l^d \|^p,
\end{equation}

where $\{\theta_l^d\}_{l=1}^L \overset{\text{i.i.d.}}{\sim} \mathcal{U}(\mathbb{S}^{d-1})$. Then:

\begin{enumerate}
    \item $\mathbb{E}[\widehat{ESSF}(L)]=\frac{C_k}{C_d}$ and $\text{Var}(\widehat{ESSF}(L))= \mathcal{O}(\frac{1}{L})$
    
    \item Let $\epsilon_L = \left| SW_p^p\left(\mu^d, \nu^d; \frac{1}{L}\sum_{l=1}^L \delta_{\theta_l^d}\right) - \widehat{ESSF}(L) \cdot SW_p^p\left(\mu^k, \nu^k; \frac{1}{L}\sum_{l=1}^L \delta_{\theta_l^k}\right) \right|$. Then $\epsilon_L \xrightarrow{\text{a.s.}} 0$ as $L \to \infty$
    
    \item Furthermore, there exists a constant $K > 0$ depending only on $\mu^d$ and $\nu^d$ such that for any $\delta > 0$, we have that
    $$\mathbb{P}(\epsilon_L < \delta) \geq 1 - e^{-\delta^2L/K^2}$$
\end{enumerate}
\end{proposition}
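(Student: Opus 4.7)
The plan is to treat $a_l := \|U^\top \theta_l^d\|^p$ and $b_l := W_p^p((\theta_l^k)_\# \mu^k, (\theta_l^k)_\# \nu^k)$ as two sequences of bounded i.i.d.\ random variables: $a_l \in [0,1]$ since $\|\theta_l^d\|=1$ and $U$ is semi-orthogonal, and $b_l \in [0, M]$ with $M := W_p^p(\mu^k,\nu^k) < \infty$ by Assumption~\ref{asp:lowdsupp}. Proposition~\ref{pro:theta_sw_d_k} gives the pointwise identity $W_p^p((\theta_l^d)_\# \mu^d, (\theta_l^d)_\# \nu^d) = a_l b_l$, so that the empirical SW in the ambient space equals $S_1 := \frac{1}{L}\sum_l a_l b_l$, while $\widehat{ESSF}(L) = S_2 := \frac{1}{L}\sum_l a_l$ and the empirical SW in the latent space equals $S_3 := \frac{1}{L}\sum_l b_l$. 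Thus $\epsilon_L = |S_1 - S_2 S_3|$, i.e.\ (up to sign) the sample covariance of $\{a_l\}$ and $\{b_l\}$. Part~(1) is then immediate: $\mathbb{E}[S_2] = \mathbb{E}[a_1] = C_k/C_d$ from the derivation preceding Theorem~\ref{thm:sw_d_k}, and the boundedness $a_1 \in [0,1]$ gives $\text{Var}(\widehat{ESSF}(L)) = \text{Var}(a_1)/L = \mathcal{O}(1/L)$.

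The pivotal step for Parts~(2) and (3) is to establish that $a_l$ and $b_l$ are independent. I would argue this via the Gaussian representation $\theta_l^d = g/\|g\|$ with $g \sim \mathcal{N}(0,I_d)$: setting $\tilde g = U^\top g \sim \mathcal{N}(0, I_k)$ and $\tilde h = U_\perp^\top g \sim \mathcal{N}(0, I_{d-k})$ independent, one has $a_l = (\|\tilde g\|/\sqrt{\|\tilde g\|^2+\|\tilde h\|^2})^p$, a function of $(\|\tilde g\|, \|\tilde h\|)$ only, while $\theta_l^k = \tilde g / \|\tilde g\|$ is independent of $\|\tilde g\|$ (classical angular/radial splitting for isotropic Gaussians) and of $\tilde h$. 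Hence $\theta_l^k$ is independent of $a_l$, and therefore so is $b_l$. With this in hand, $\mathbb{E}[a_1 b_1] = \mathbb{E}[a_1]\mathbb{E}[b_1]$, and the strong law of large numbers applied separately to $\{a_l b_l\}$, $\{a_l\}$, $\{b_l\}$ gives $S_1, S_2 S_3 \xrightarrow{a.s.} \mathbb{E}[a_1]\mathbb{E}[b_1]$, so $\epsilon_L \xrightarrow{a.s.} 0$, establishing Part~(2).

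For Part~(3), I would combine the triangle inequality with the independence relation $\mathbb{E}[S_1] = \mathbb{E}[S_2]\mathbb{E}[S_3]$ and the bound $|S_2 S_3 - \mathbb{E}[S_2]\mathbb{E}[S_3]| \leq M\,|S_2 - \mathbb{E}[S_2]| + |S_3 - \mathbb{E}[S_3]|$ (using $S_3 \leq M$ and $\mathbb{E}[S_2]\leq 1$), yielding
\[
\epsilon_L \;\leq\; |S_1 - \mathbb{E}[S_1]| \,+\, M\,|S_2 - \mathbb{E}[S_2]| \,+\, |S_3 - \mathbb{E}[S_3]|.
\]
Each summand is the deviation of a bounded i.i.d.\ average (with ranges $M$, $1$, $M$ respectively), so Hoeffding's inequality applied to each term with split threshold $\delta/3$, together with a union bound, gives $\mathbb{P}(\epsilon_L \geq \delta) \leq 6\exp(-cL\delta^2/M^2)$ for an absolute constant $c$; the prefactor and $M$ can then be absorbed into a single $K = K(\mu^d,\nu^d)$ to obtain the stated sub-Gaussian bound. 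The main obstacle, and the only nontrivial step, is the independence argument in the second paragraph: without it the three sample averages would covary, the $\mathcal{O}(1/L)$ behaviour of the sample covariance would be lost, and both the a.s.\ convergence in Part~(2) and the sub-Gaussian tail in Part~(3) would fail. The rest is routine concentration machinery.
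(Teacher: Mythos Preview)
Your proposal is correct and takes a genuinely different route from the paper's proof, particularly for Parts~(2) and~(3). The paper does \emph{not} invoke the independence of $a_l=\|U^\top\theta_l^d\|^p$ and $b_l=W_p^p((\theta_l^k)_\#\mu^k,(\theta_l^k)_\#\nu^k)$ at all; instead it bounds $A(\theta_l):=W_p^p((\theta_l^d)_\#\mu^d,(\theta_l^d)_\#\nu^d)\le K\|U^\top\theta_l\|^p$ with $K=\max_{x\in\mathrm{supp}(\mu),\,y\in\mathrm{supp}(\nu)}\|x-y\|^p$, and then attempts to reduce $\epsilon_L$ to a quantity $B_n$ depending only on the $a_l$'s, to which the LLN and Hoeffding are applied. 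Your argument is cleaner in two respects: (i) you only need $W_p(\mu^k,\nu^k)<\infty$ from Assumption~\ref{asp:lowdsupp}(4), whereas the paper silently adds a compact-support hypothesis to make $K$ finite; and (ii) your sample-covariance viewpoint makes transparent \emph{why} $\epsilon_L\to 0$, namely because $\mathrm{Cov}(a_1,b_1)=0$ --- a point the paper's bound hides (and indeed the $B_n$ written there is identically zero, so the paper's bound as printed cannot be the intended one). Your Gaussian angular/radial argument for the independence of $\theta_l^k$ and $(\|\tilde g\|,\|\tilde h\|)$ is exactly the right lever and is not present in the paper.

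Two small remarks. For Part~(1), your appeal to ``the derivation preceding Theorem~\ref{thm:sw_d_k}'' is too indirect: that passage establishes the scaling of $SW_p^p$, not $\mathbb{E}[a_1]$ per se. The paper instead computes $\|U^\top\theta^d\|^2\sim\mathrm{Beta}(k/2,(d-k)/2)$ explicitly and reads off the $p/2$-moment; you could do the same, or alternatively close the loop using your own independence result together with Theorem~\ref{thm:sw_d_k}. For Part~(3), the union-bound prefactor $6$ cannot be absorbed into $K$ uniformly in $(L,\delta)$ to match the exact form $1-e^{-\delta^2L/K^2}$; the paper's stated bound has the same slackness, so this is a cosmetic rather than mathematical discrepancy.
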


The proof of this proposition is in the Appendix \ref{sec:pf_empirical_convergence}.

In Section \ref{val:var} we provide empirical results showing how the variance of $\widehat{ESSF}(L)$ changes wrt $L$.

\newpage
\subsection{Implications for learning algorithms: Is SWD All You Need?}

\begin{wrapfigure}[23]{r}{0.37\textwidth}
    \vspace{-0.25in}
    \centering
    \includegraphics[width=0.28\textwidth]{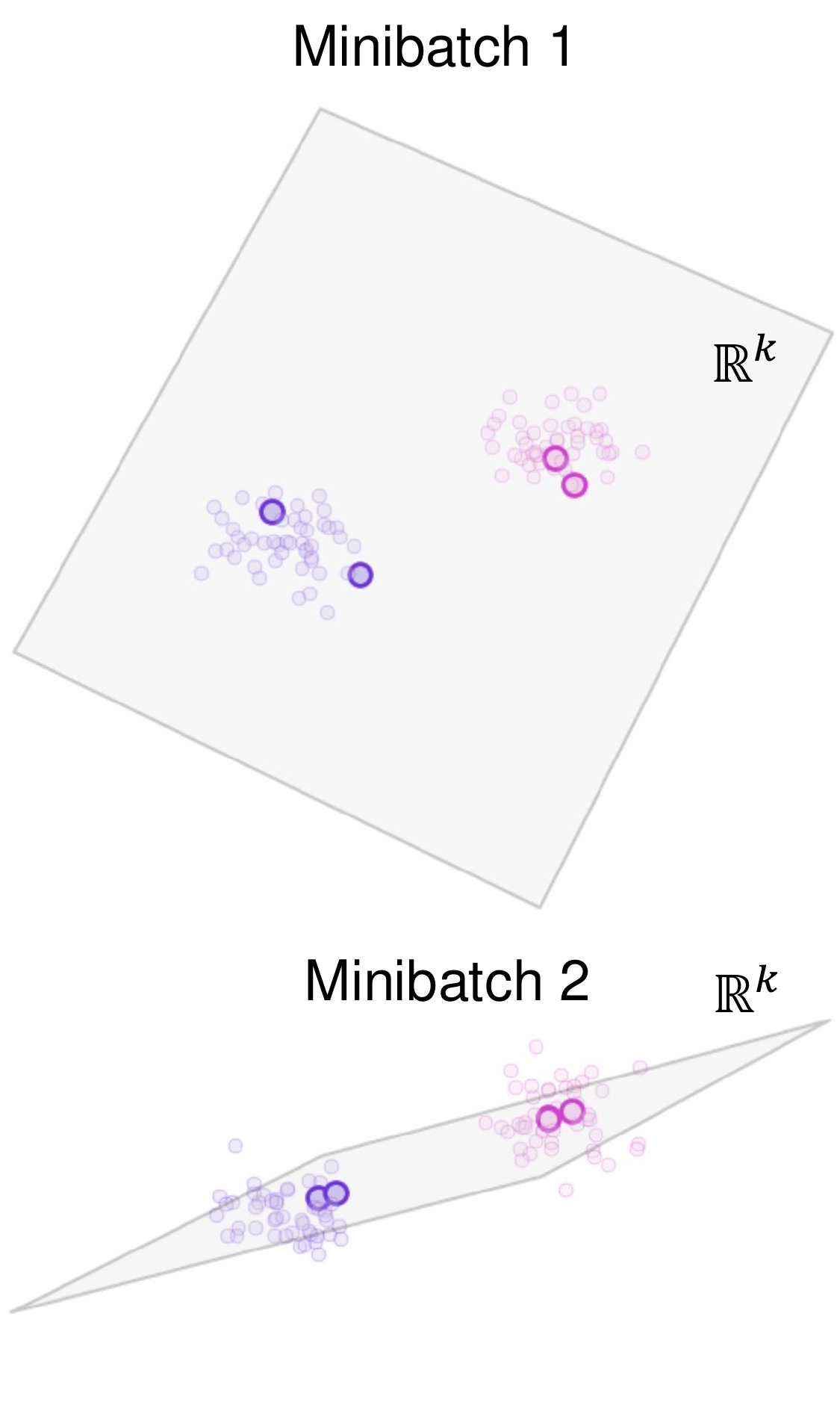}
    \caption{Minibatches of $d$-dimensional data, with $B$ source and $B$ target samples, reside in a linear subspace with dimensionality at most $k = \min\{2B-1, d\}$ when centered.}
    \label{fig:minibatch}
\end{wrapfigure}


\textbf{Learning Rate Selection.} While our results may initially suggest that the data must reside within a known subspace $V_k$ of a specific dimension $k$, neither $V_k$ nor $k$ needs to be explicitly identified in practice. Moreover, Assumption \ref{asp:lowdsupp} naturally holds in common machine learning settings. First, in gradient-based learning, data is typically processed in minibatches for computational efficiency. This leads to an effective bound on $k$ related to batch size, which is often much smaller than the ambient dimension $d$. Additionally, real datasets often have feature correlations, potentially reducing $k$ further. Second, the $\widehat{ESSF}(L)$ factor, despite its variance, can be absorbed into the learning rate during optimization. This reduces the problem to a single hyperparameter search for the optimal learning rate—a standard practice in machine learning workflows.

\begin{remark}
Let $\{x_i\}_{i=1}^{2B} \subset \mathbb{R}^d$ be a minibatch of $2B$ samples, where $B$ samples are from the source distribution and $B$ from the target. Let $X = [x_1, \ldots, x_{2B}] \in \mathbb{R}^{d \times 2B}$ be the corresponding data matrix. Then:
\begin{enumerate}
    \item The support of the empirical distributions lies in a subspace of dimension $k \leq \min\{2B, d\}$.
    \item If the samples are centered, i.e., $\sum_{i=1}^{2B} x_i = 0$, then $k \leq \min\{2B-1, d\}$.
\end{enumerate}
\end{remark}

\begin{proof}
$rank(X)$ is bounded by $\min\{2B, d\}$ by basic linear algebra. When samples are centered (via normalization), they lie in a hyperplane orthogonal to the $\textbf{1}$ vector, reducing the dimension by $1$.
\end{proof}

\vspace{0.1cm}


\begin{proposition}\label{pro:gradient_error}
For discrete distributions $\hat{\mu}_d =\sum_{i=1}^{n}q_i^1 \delta_{x_i}$ and $\hat{\nu}_d = \sum_{j=1}^{m} q_j^2 \delta_{y_j}$, we have:
\begin{equation}
    \nabla_x W_p^p(\theta_\# \hat{\mu}_d, \theta_\# \hat{\nu}_d) = \|U^\top \theta\|^p \nabla_x W_p^p(\theta^k_\# \hat{\mu}_k, \theta^k_\# \hat{\nu}_k)
\end{equation}
where $\theta^k = U^\top \theta/\|U^\top \theta\|$. This extends to the Sliced-Wasserstein case. Define the empirical gradient error for each $x_i$: 
$$\epsilon_L(x_i):=\nabla_{x_i} SW_p^p(\hat{\mu}_d,\hat{\nu}_d;\sum_{l=1}^L\delta_{\theta_l^d}) -\widehat{ESSF}(L) \cdot \nabla_{x_i}  SW_p^p(\hat{\mu}_k,\hat{\nu}_k;\sum_{l=1}^L\delta_{\theta_l^k}),$$
Then the following holds
\begin{enumerate}
    \item $\epsilon_L(x_i) \xrightarrow{\mathbb{P}} 0_d$ as $L \to \infty$
    
    \item $\mathbb{P}(\|\epsilon_L(x_i)\| \leq \epsilon) \geq 1-2e^{-\epsilon^2L/(pq^1_iK)^2}$, 
where $K=\max_{x_i,y_j}\|x_i-y_j\|^{p-1}<\infty$. 

\end{enumerate}
\end{proposition}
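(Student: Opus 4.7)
The plan is to reduce the proposition to two main ingredients: the slice-wise Wasserstein identity of Proposition~\ref{pro:theta_sw_d_k} and a spherical-independence fact that rewrites $\epsilon_L(x_i)$ as a sample covariance between two independent quantities. The overall flow is (i) differentiate the pointwise Wasserstein identity to obtain the gradient identity, (ii) express $\epsilon_L(x_i)$ as an empirical covariance with vanishing expectation and invoke the strong law of large numbers for convergence, and (iii) apply McDiarmid's bounded-differences inequality for the quantitative tail bound.

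For the gradient identity, each 1D Wasserstein is a finite sum $\sum_{i',j}\pi^*_{i'j}|\cdot|^p$ whose optimal transport plan is locally constant in $x_i$ (the sorting of projected atoms is stable under small perturbations, and it coincides on the two sides because the two projections differ only by the positive scalar $\|U^\top\theta\|$). Differentiating $W_p^p(\theta_\#\hat\mu_d,\theta_\#\hat\nu_d)=\|U^\top\theta\|^p\,W_p^p((\theta^k)_\#\hat\mu_k,(\theta^k)_\#\hat\nu_k)$ in $x_i$, while noting that $\|U^\top\theta\|^p$ does not depend on $x_i$, yields the claimed identity (interpreted so that both gradients live intrinsically in $V_k$, i.e.\ in the coordinate $U^\top x_i\in\mathbb{R}^k$, which reconciles the ambient directions $\theta$ and $U\theta^k$). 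Setting $A_l=\|U^\top\theta_l^d\|^p\in[0,1]$ and $B_l=\nabla_{x_i}W_p^p((\theta_l^k)_\#\hat\mu_k,(\theta_l^k)_\#\hat\nu_k)$, and applying the identity slice-wise to the Monte Carlo estimators, I obtain
\begin{equation*}
\epsilon_L(x_i) \;=\; \tfrac{1}{L}\sum_{l=1}^L A_lB_l \;-\; \Bigl(\tfrac{1}{L}\sum_{l=1}^L A_l\Bigr)\Bigl(\tfrac{1}{L}\sum_{l=1}^L B_l\Bigr),
\end{equation*}
the sample covariance between $A_l$ and $B_l$.

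Writing $\theta^d=U\alpha+U^\perp\beta$ with $\|\alpha\|^2+\|\beta\|^2=1$, the rotational invariance of $\mathcal{U}(\mathbb{S}^{d-1})$ (equivalently, the Gaussian representation $\theta^d\stackrel{d}{=}Z/\|Z\|$ with $Z\sim\mathcal{N}(0,I_d)$ and the independence of the radial and angular parts of $U^\top Z$) implies that $\|\alpha\|=\|U^\top\theta^d\|$ is independent of $\alpha/\|\alpha\|=\theta^k$. Hence $A_l$ is independent of $B_l$, so $\mathbb{E}[A_lB_l]=\mathbb{E}[A_l]\mathbb{E}[B_l]$, and a direct computation (expanding the product-of-averages term and using pairwise independence across $l$) yields $\mathbb{E}[\epsilon_L(x_i)]=0$. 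The strong law of large numbers applied to each of the two averages then gives $\epsilon_L(x_i)\to 0$ almost surely, and therefore in probability (claim~1).

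For the tail bound, I use the uniform controls $A_l\in[0,1]$ together with $\|B_l\|\le pq_i^1K$, the latter following from the explicit gradient formula $B_l=pU\theta_l^k\sum_j\tilde\pi^*_{ij}|(\theta_l^k)^\top U^\top(x_i-y_j)|^{p-1}\mathrm{sgn}(\cdot)$, the identity $\|U\theta_l^k\|=1$, and the contraction $\|U^\top(x_i-y_j)\|\le\|x_i-y_j\|$ with $\|x_i-y_j\|^{p-1}\le K$. Viewing $\epsilon_L(x_i)$ coordinate-by-coordinate as a function of the i.i.d.\ slices $\theta_1^d,\dots,\theta_L^d$, swapping one slice perturbs both $\tfrac{1}{L}\sum_lA_lB_l$ and the product-of-averages term by $\mathcal{O}(pq_i^1K/L)$; McDiarmid's inequality combined with $\mathbb{E}[\epsilon_L]=0$ then yields a tail of the form $2\exp(-\epsilon^2L/(pq_i^1K)^2)$ (modulo absolute constants that can be tightened), and a union bound or a vector-Hoeffding refinement lifts this to $\|\epsilon_L(x_i)\|$. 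The main technical obstacle is the careful bookkeeping of the product-of-averages term under a single-slice swap, cleanly handled by the telescoping identity $A'B'-AB=(A'-A)B'+A(B'-B)$ applied to the averages; the remaining steps (spherical independence, LLN, and McDiarmid application) are routine.
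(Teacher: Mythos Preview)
Your approach is correct and is in fact cleaner than the paper's. Both arguments begin from the same decomposition
\[
\epsilon_L(x_i)\;=\;\frac{pq_i^1}{L}\,U\sum_{l=1}^L\bigl(\|U^\top\theta_l^d\|^p-\widehat{ESSF}(L)\bigr)\,A(\theta_l^k),\qquad \|A(\theta_l^k)\|\le K,
\]
but diverge in how they extract convergence and concentration. The paper attempts a direct bound of the form $\|\epsilon_L\|\le pq_i^1K\,|B_L|$ with $B_L=\tfrac1L\sum_l\bigl(\|U^\top\theta_l\|^p-\widehat{ESSF}(L)\bigr)$ and then applies Hoeffding to $B_L$; however $B_L\equiv 0$ by construction, and the triangle-inequality variant (replacing $|B_L|$ by the mean absolute deviation) does not tend to zero, so that route does not close as written. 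Your sample-covariance viewpoint, together with the independence of $A_l=\|U^\top\theta_l^d\|^p$ and $B_l=B_l(\theta_l^k)$---which indeed follows from the radial/angular independence of $U^\top Z$ for $Z\sim\mathcal N(0,I_d)$, cf.\ Proposition~\ref{pro:Gaussian}(4) and Lemma~\ref{lem:theta_d_k}---is precisely the ingredient that makes $\mathbb{E}[\epsilon_L]=0$ and lets the LLN deliver claim~(1). For claim~(2), McDiarmid is the natural tool for a bounded sample covariance, whereas the paper's Hoeffding step relies on the problematic $B_L$; your version is sound but, as you note, only recovers the stated exponent up to an absolute constant, since the single-slice bounded difference is $c\,pq_i^1K/L$ with $c>1$ once both the average-of-products and the product-of-averages terms are perturbed. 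Your parenthetical about reading the gradient identity intrinsically in $V_k$ matches the paper's chain-rule convention $\nabla_{x_i}=U\nabla_{x_i^k}$ and resolves the apparent mismatch between the ambient direction $\theta$ and $U\theta^k$.
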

The proof is in the Appendix  \ref{sec:gradient_error}.

\begin{remark}
    When $p=2$ and $n=m=B$ with $q_i^1=q_j^2=1/B,\forall i,j$, the above convergence rate becomes 
$\mathbb{P}(\|\epsilon_L(x_i)\|\leq \epsilon)\ge 1-2e^{-B^2\epsilon^2L/(4K^2)}$.
\end{remark}

Selecting an appropriate learning rate is crucial for convergence and stability in gradient-based methods and remains an active area of research. For specific cases such as the SW gradient flow problem, we derive the following explicit upper bound on the learning rate:

\begin{proposition}[LR bound for SW Gradient Flow]
For the discrete SW gradient flow problem with fixed probability mass functions, convergence is guaranteed if the learning rate for each $x_i$ satisfies:
    \begin{equation}
        h_i \leq \frac{k}{2p_i}
    \end{equation}
where $k$ is the effective dimensionality of the data and $p_i$ is the probability mass associated with $x_i$.
\end{proposition}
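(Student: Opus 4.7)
The plan is to view the $d$-dimensional SW gradient flow as an equivalent flow in the $k$-dimensional effective subspace and then apply the classical descent lemma with an explicit per-block Lipschitz constant. Under Assumption~\ref{asp:lowdsupp}, Theorem~\ref{thm:sw_d_k} and Proposition~\ref{pro:theta_sw_d_k} identify the (rescaled) ambient SW objective on $\hat{\mu}^d,\hat{\nu}^d$ with the ordinary $SW_2^2$ objective on the pushforwards $\hat{\mu}^k=(U^\top)_{\#}\hat{\mu}^d$, $\hat{\nu}^k=(U^\top)_{\#}\hat{\nu}^d$ expressed in coordinates $\tilde x_i=U^\top x_i\in\mathbb{R}^k$; per-slice gradients scale by $\|U^\top\theta^d\|^p$ by Proposition~\ref{pro:gradient_error}, so all smoothness analysis can be carried out directly in $\mathbb{R}^k$.

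Next I would compute the per-block Lipschitz constant. For any fixed slice $\theta^k\in\mathbb{S}^{k-1}$, the 1D squared Wasserstein between $\theta^k_{\#}\hat{\mu}^k$ and $\theta^k_{\#}\hat{\nu}^k$ is piecewise quadratic in $\tilde x_i$, the pieces being indexed by the (a.s.\ unique) sort-induced 1D optimal matching; on each piece the block Hessian with respect to $\tilde x_i$ equals $2p_i\,\theta^k(\theta^k)^\top$. Taking expectations over $\theta^k\sim\mathcal{U}(\mathbb{S}^{k-1})$ and using the standard identity
\begin{equation*}
\mathbb{E}_{\theta^k\sim\mathcal{U}(\mathbb{S}^{k-1})}\bigl[\theta^k(\theta^k)^\top\bigr]=\tfrac{1}{k}I_k
\end{equation*}
bounds the Hessian of $SW_2^2(\hat{\mu}^k,\hat{\nu}^k)$ with respect to $\tilde x_i$ in the semidefinite order by $\tfrac{2p_i}{k}I_k$, so the gradient in $\tilde x_i$ is $L_i$-Lipschitz with $L_i=2p_i/k$.

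The classical descent lemma then gives that coordinate-wise gradient descent with $h_i\le 1/L_i = k/(2p_i)$ is a monotone descent step; iterating yields vanishing block-gradients and hence convergence, establishing the claim.

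The main obstacle I anticipate is handling the non-smoothness of the 1D Wasserstein at the sorting-breakpoints, where two projected supports coincide and the per-slice gradient has a jump. I would address this either by a Clarke-subdifferential argument (the expected Hessian bound holds on every piece in the semidefinite sense), or by noting that $SW_2^2$ is an expectation of piecewise-quadratics whose pieces agree on their common boundary, so the descent lemma applies piecewise with the same global constant $\tfrac{2p_i}{k}$ and chains across sorting-regions without loss.
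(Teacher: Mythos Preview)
Your proposal is correct and follows essentially the same route as the paper. Both arguments work in the $k$-dimensional effective subspace, compute the per-slice block Hessian $2p_i\,\theta^k(\theta^k)^\top$, average over $\mathcal{U}(\mathbb{S}^{k-1})$ using $\mathbb{E}[\theta^k(\theta^k)^\top]=\tfrac{1}{k}I_k$ to obtain $H(x_i)=\tfrac{2p_i}{k}I_k$, and read off the bound $h_i\le k/(2p_i)$.

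The only presentational differences are: (i) the paper starts directly in $\mathbb{R}^k$ and invokes the optimal quadratic step size $g^\top g/(g^\top H g)$, whereas you first reduce $d\to k$ via Theorem~\ref{thm:sw_d_k}/Proposition~\ref{pro:gradient_error} and then invoke the descent lemma with block Lipschitz constant $L_i=2p_i/k$; since $H$ is a scalar multiple of the identity these coincide; (ii) you are more careful about the piecewise-quadratic structure and the sorting breakpoints, which the paper glosses over by differentiating through the transport plan $\gamma^\theta$ as if fixed. Your Clarke/piecewise argument is a legitimate way to make that step rigorous.
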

We refer readers to the Appendix \ref{sec:sw_gf} for the detailed discussion and proofs. 

\section{Experiments}
\label{sec:experiments}

In this section, we present key results and defer further details and visualization to the Appendix section. Our experiments are executed on a Linux server with an AMD EPYC 7713 64-Core Processor, 8 $\times$ 32 GB DIMM DDR4, 3200 MHz, and an NVIDIA RTX A6000 GPU.

\noindent We use $50$ random projections for the classical SWD across all experiments. For other SW variants, we use the default hyperparameter configurations provided by the official implementations. We do not finetune hyperparameters for any method beside the learning rate to provide a fair comparison.

\subsection{Numerical Validation of Main Results}
\label{sec:experiments_validation}

\noindent\textbf{Verifying Theorem \ref{thm:sw_d_k} for $p=1,2$.} To validate our main result Theorem \ref{thm:sw_d_k}, we conducted a set of experiments on synthetic Gaussian data. Our setup involved two $k$-dimensional isotropic Gaussians embedded in $\mathbb{R}^d$ ($d\geq k$). We generated $500$ samples from each distribution and varied both $d$ and $k$ to observe how the empirical ratio $\widehat{C}=\frac{\widehat{SW}_p^p(\mu^d, \nu^d)}{\widehat{SW}_p^p(\mu^k, \nu^k)}$ behaves under different dimensionality settings for a fixed number of slices/projections ($L=1000$). Specifically, we have two cases: \textbf{a) }Fixing $k=2$, varying $p$ across $\{1,2\}$, and varying $d$ across $\{10, 30, 50, 80, 100, 300, 500, 800, 1000\}$. \textbf{b)} Fixing $d=1000$, varying $p$ across $\{1,2\}$, and varying $k$ across $\{10, 30, 50, 80, 100, 300, 500, 800, 1000\}$. The results, averaged over $10$ independent runs, are shown in Figure \ref{fig:verifyC}. The empirical observations align closely with our theoretical predictions. 

\begin{figure}[h]
    \centering
    \includegraphics[width=\textwidth]{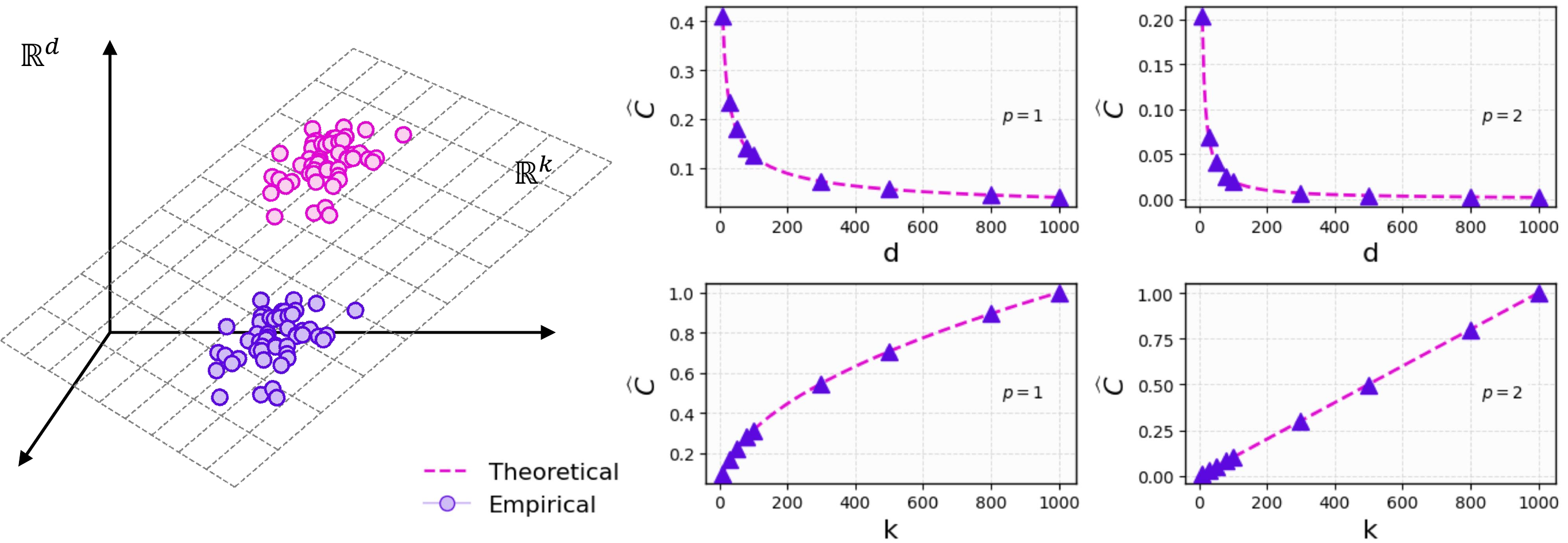}
    \caption{\textbf{Left}: Illustration of two $k$-dimensional Gaussian distributions embedded in $\mathbb{R}^d$ ($500$ samples each).
    \textbf{Top row}: Empirical ratios $\widehat{C}$ with varying $d$ for $k=2$ and $p=1,2$. 
    \textbf{Bottom row}: Empirical ratios $\widehat{C}$ with varying $k$ for in $d=1000$ and $p=1,2$.}
    \label{fig:verifyC}
\end{figure}

\begin{figure}[h]
    \centering
    \begin{subfigure}[b]{0.48\textwidth}
        \centering
        \includegraphics[height=0.26\textheight]{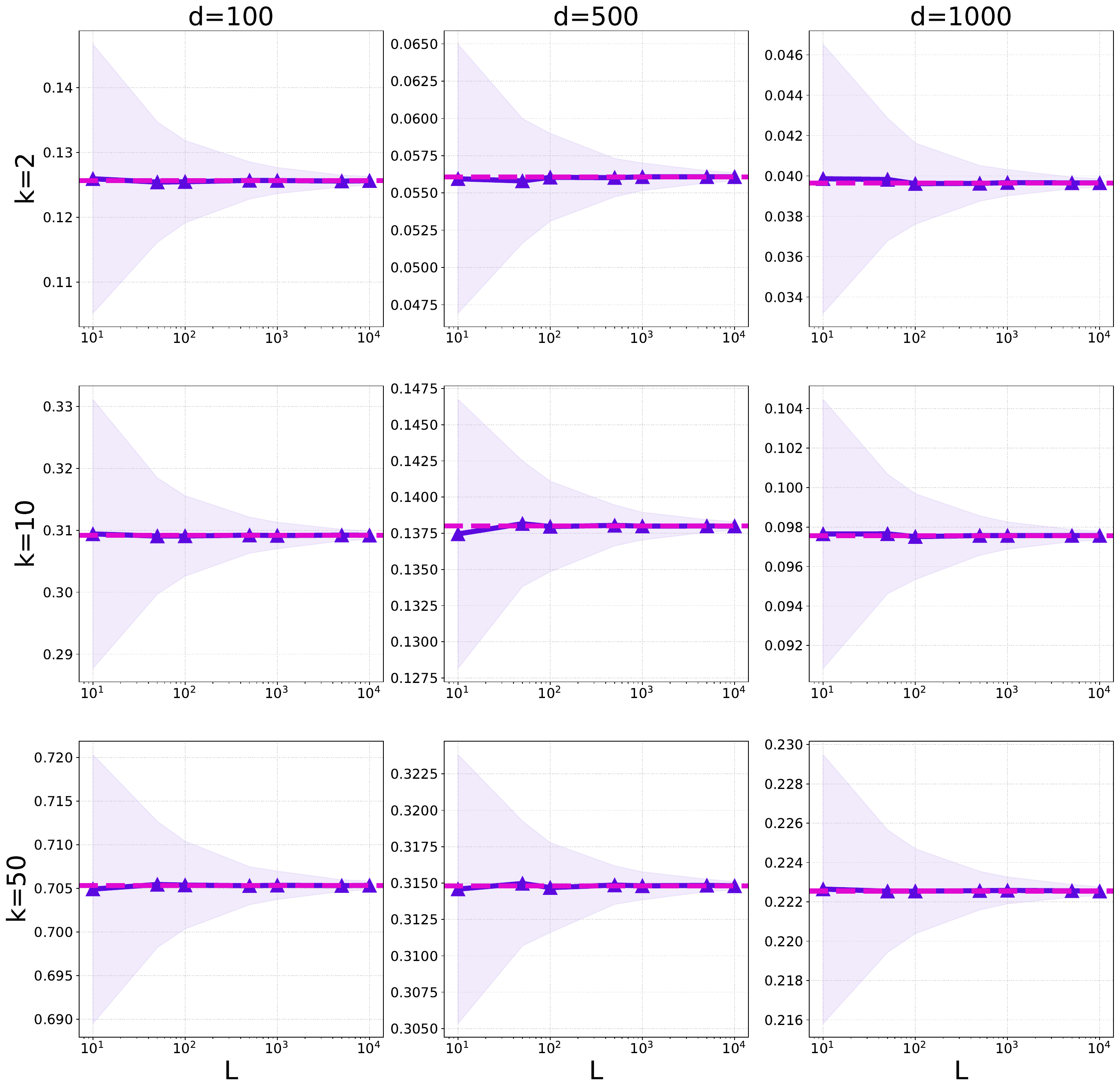}
        \label{fig:varp1}
    \end{subfigure}
    \hfill
    \begin{subfigure}[b]{0.48\textwidth}
        \centering
        \includegraphics[height=0.26\textheight]{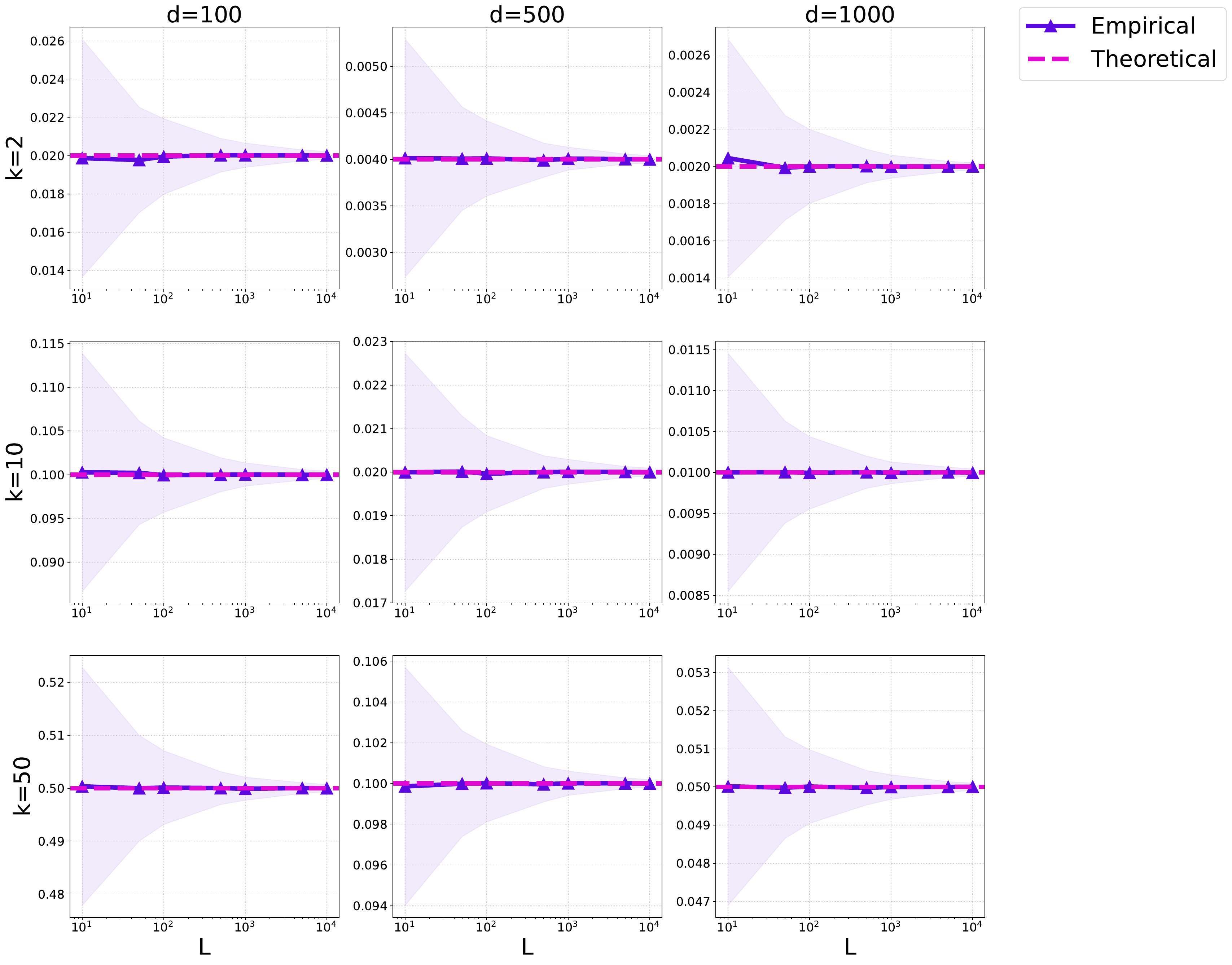}
        \label{fig:varp2}
    \end{subfigure}
    \captionsetup{justification=centering} 
    \caption{The mean and standard deviation (shaded area) of $\widehat{ESSF}(L)$ for varying $d,k$ over $1000$ independent runs for $p=1$ (left) and $p=2$ (right).}
    \label{fig:verify_variance}
\end{figure}

\noindent \textbf{Verifying Proposition \ref{prop:empirical_convergence}} \label{val:var} We proceed further to observe the empirical estimate $\displaystyle \widehat{ESSF}(L) = \frac{1}{L} \sum^L_{l=1} \|U^{\top} \theta^d_l\|^p$ and its variance for different values of $L=\{10, 50, 100, 500,1000,5000,10000\}$. Here, we use $d=\{100,500,1000\}$ and $k=\{2, 10, 50\}$. The results across $1000$ independent runs are shown in Figure \ref{fig:verify_variance}. The mean and std for $\widehat{ESSF}(L)$ align with our theoretical predictions.

\subsection{Gradient Flow}
\label{sec:experiments_gf}

Gradient Flow\citep{ambrosio2008gradient,jordan1998variational}) is a classic problem that provides insight into the behavior of different SW variants. While being a continuous process, we usually approximate it in practice using discrete time steps, which leads to a gradient descent update rule,
\begin{equation}
X_{t+1} = X_t - h \nabla_X SW_p(X_t, Y),
\end{equation}
where $X_t$ represents the samples from $\mu_t$, $Y$ represents the samples from $\nu$, and $h$ is a step size parameter. In our setup, we set $p=2$. We evaluate the performance of different methods using the final $W_2$ distance between $\mu_t$ and $\nu$ and runtime for each method. 

\begin{figure}[h]
    \centering
    \includegraphics[width=0.7\textwidth]{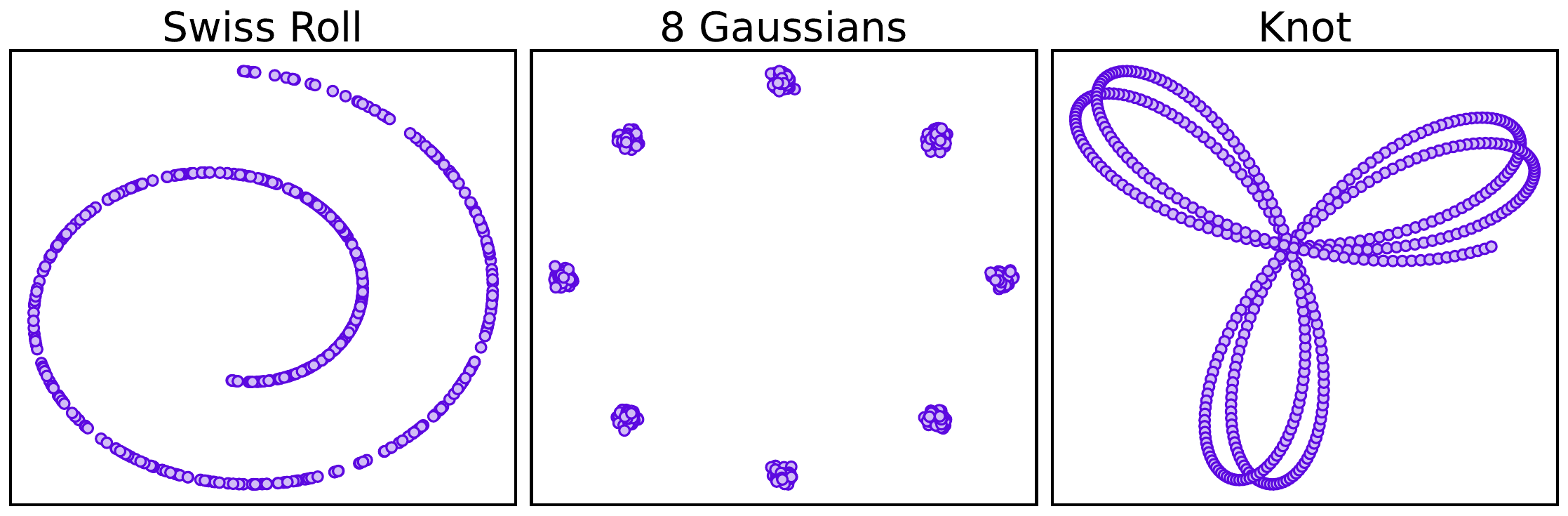}
    \captionsetup{justification=centering} 
    \caption{Classic synthetic 2D datasets (shown) embedded in spaces of different target dimensions.}
    \label{fig:classic2d}
\end{figure}

\begin{figure}[h]
    \centering
    \begin{subfigure}[b]{0.32\textwidth}
        \centering
        \includegraphics[width=\textwidth]{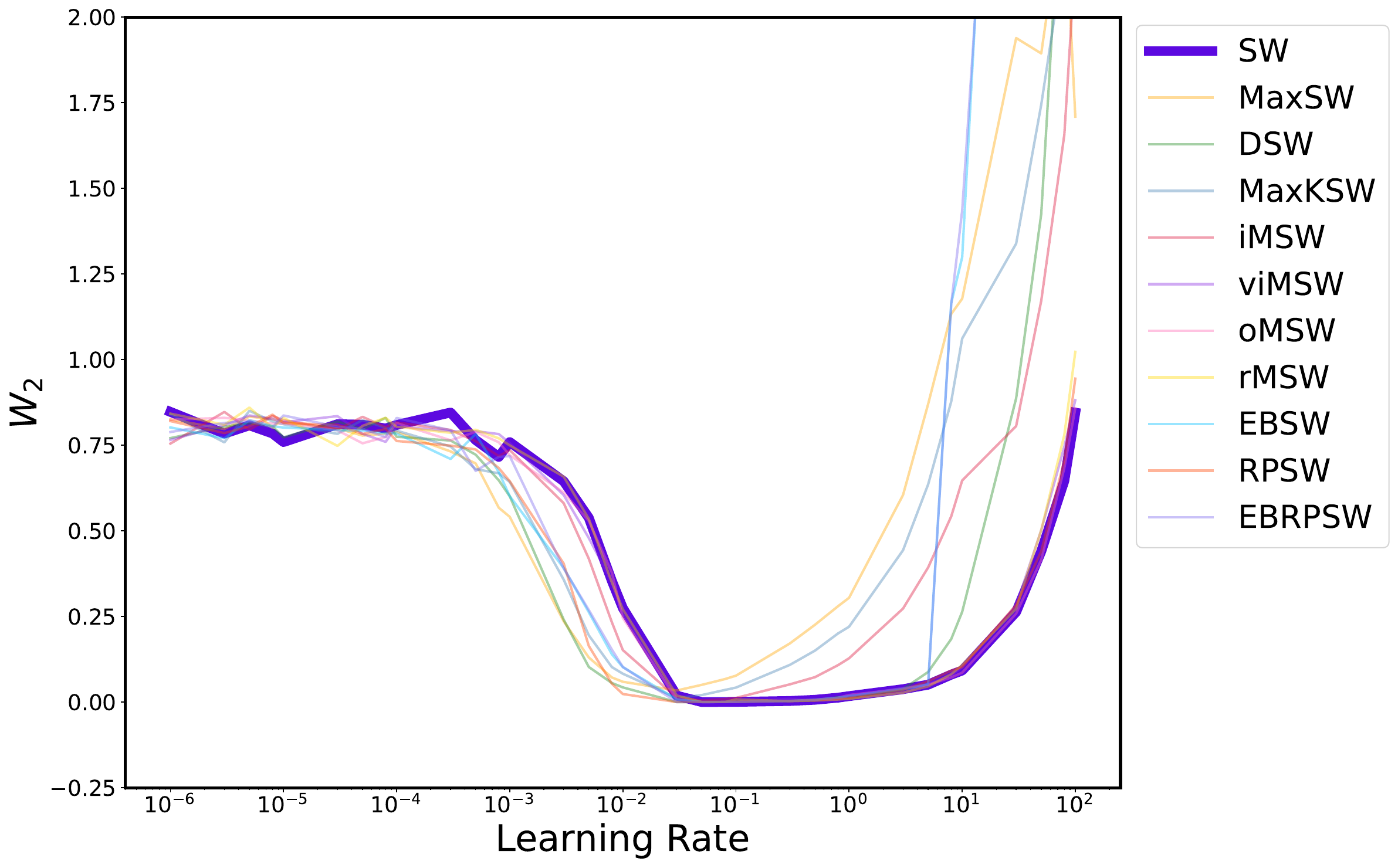}  
        \captionsetup{justification=centering} 
        \caption{Swiss}
        \label{fig:figSwiss}
    \end{subfigure}
    \hfill
    \begin{subfigure}[b]{0.32\textwidth}
        \centering
        \includegraphics[width=\textwidth]{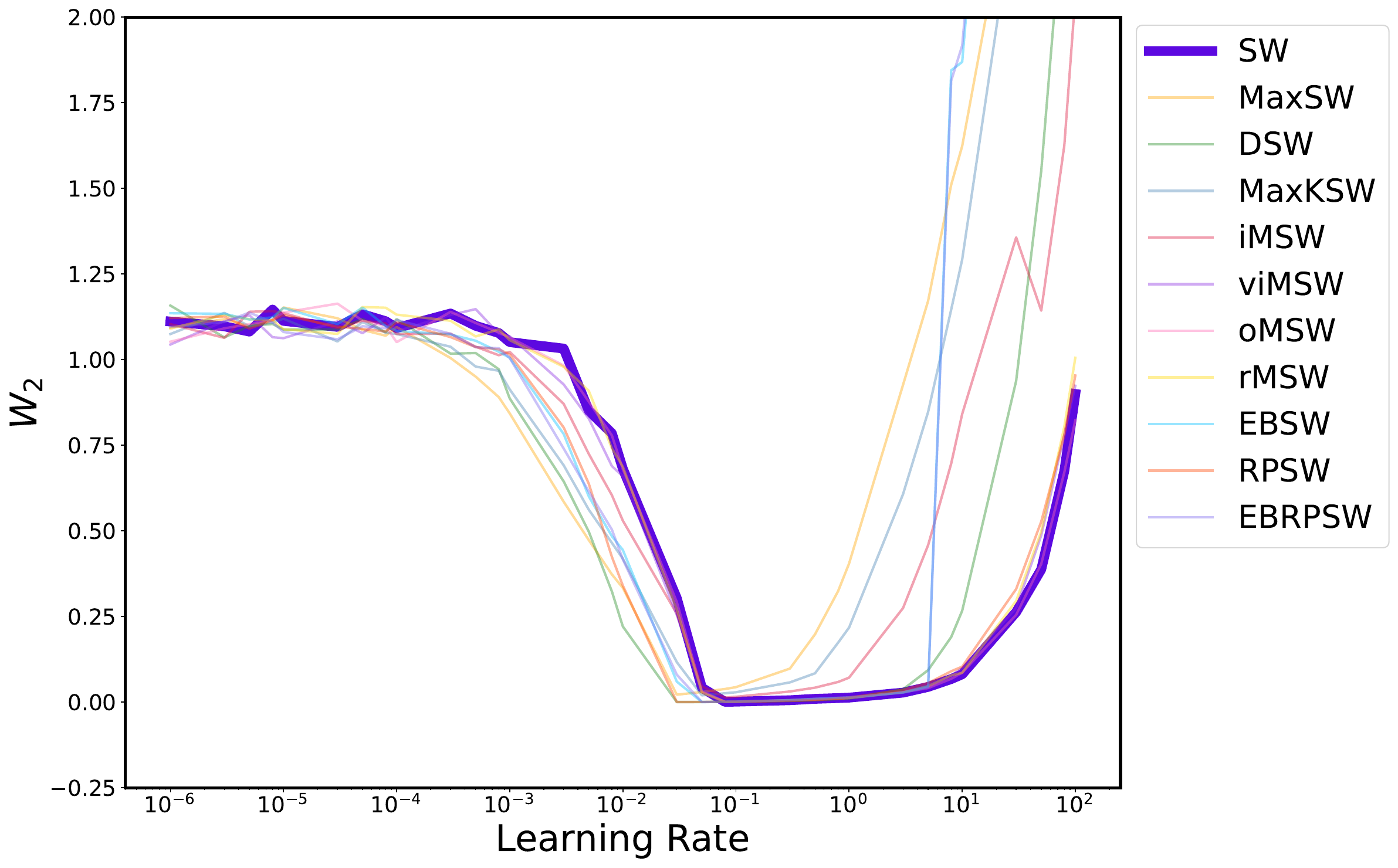}  
        \captionsetup{justification=centering} 
        \caption{8 Gaussians}
        \label{fig:fig8g}
    \end{subfigure}
    \hfill
    \begin{subfigure}[b]{0.32\textwidth}
        \centering
        \includegraphics[width=\textwidth]{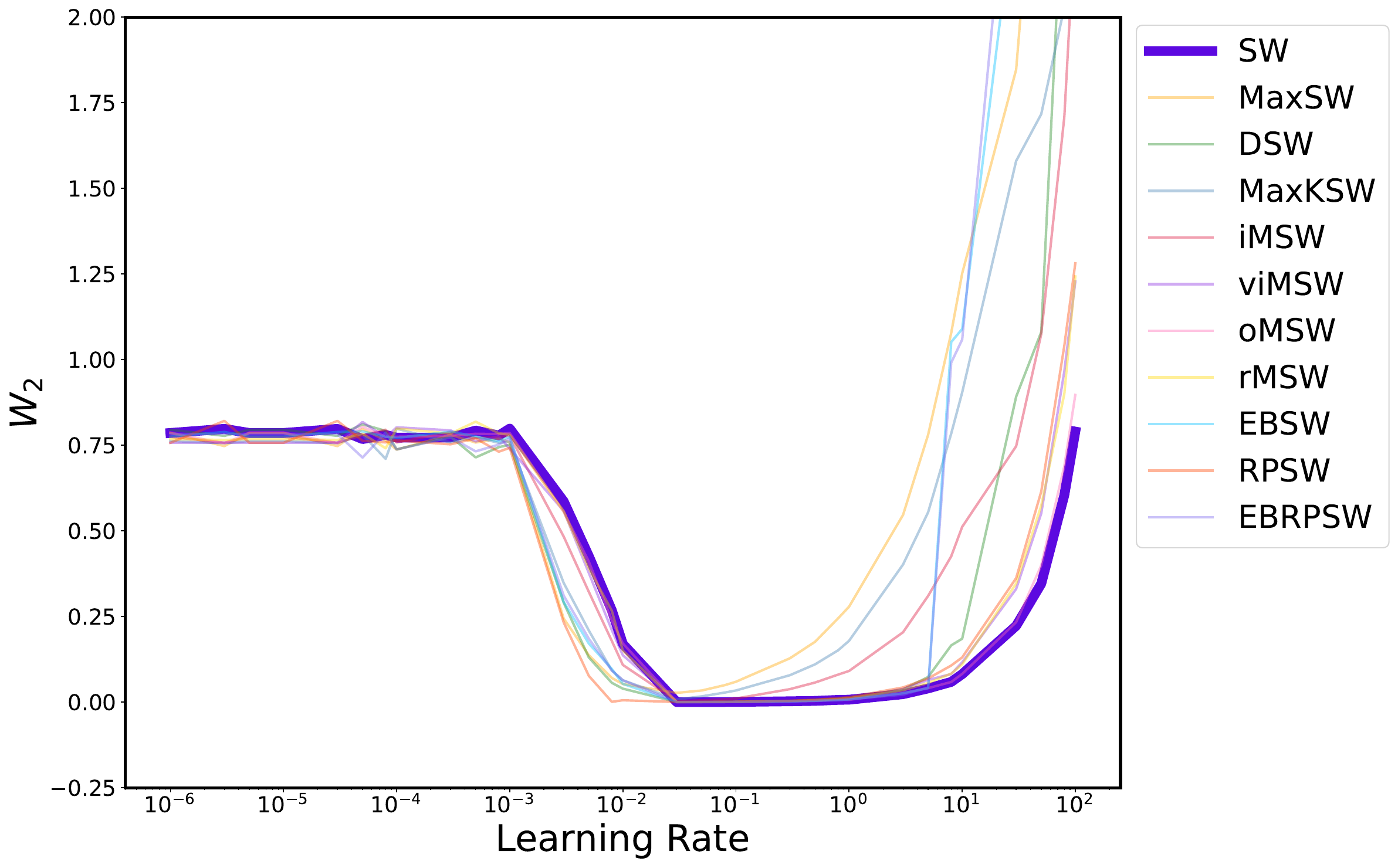}  
        \captionsetup{justification=centering} 
        \caption{Knot}
        \label{fig:figKnot}
    \end{subfigure}
    \captionsetup{justification=centering} 
    \caption{Optimal basin plots for Gradient Flow with embedded synthetic datasets.}
    \label{fig:figSwiss_8g_Knot}
\end{figure}

\begin{table}[t]
\centering
\captionsetup{justification=centering} 
\caption{Quantitative comparison of the best final converged $W_2 (\downarrow)$ and runtime ($\downarrow$) between different variants for Gradient Flow with (embedded) classic synthetic datasets.}
\label{tab:GF_results}
\setlength{\tabcolsep}{1.5pt}
\tiny
\begin{tabular}{@{}l*{10}{r}@{}}
\toprule
\multirow{2}{*}[-1pt]{\textbf{Met.}} & \multicolumn{3}{c}{\textbf{Swiss}} & \multicolumn{3}{c}{\textbf{8 Gauss.}} & \multicolumn{3}{c}{\textbf{Knot}} & \multirow{2}{*}[-1pt]{\textbf{RT(s)$\downarrow$}} \\
\cmidrule(lr){2-4} \cmidrule(lr){5-7} \cmidrule(lr){8-10}
 & \multicolumn{1}{c}{$d=2$} & \multicolumn{1}{c}{$d=50$} & \multicolumn{1}{c}{$d=100$} & \multicolumn{1}{c}{$d=2$} & \multicolumn{1}{c}{$d=50$} & \multicolumn{1}{c}{$d=100$} & \multicolumn{1}{c}{$d=2$} & \multicolumn{1}{c}{$d=50$} & \multicolumn{1}{c}{$d=100$} & \\
\midrule
SW & 0.0001\textsuperscript{$\pm$0.0000} & 0.0004\textsuperscript{$\pm$0.0000} & 0.0004\textsuperscript{$\pm$0.0000} & 0.0002\textsuperscript{$\pm$0.0000} & 0.0002\textsuperscript{$\pm$0.0001} & 0.0006\textsuperscript{$\pm$0.0001} & 0.0002\textsuperscript{$\pm$0.0000} & 0.0004\textsuperscript{$\pm$0.0000} & 0.0004\textsuperscript{$\pm$0.0000} & 8.62\textsuperscript{$\pm$0.04} \\
MaxSW & 0.0000\textsuperscript{$\pm$0.0000} & 0.0219\textsuperscript{$\pm$0.0051} & 0.0342\textsuperscript{$\pm$0.0022} & 0.0005\textsuperscript{$\pm$0.0000} & 0.0171\textsuperscript{$\pm$0.0004} & 0.0385\textsuperscript{$\pm$0.0006} & 0.0005\textsuperscript{$\pm$0.0000} & 0.0246\textsuperscript{$\pm$0.0009} & 0.0303\textsuperscript{$\pm$0.0009} & 74.02\textsuperscript{$\pm$1.61} \\
DSW & 0.0002\textsuperscript{$\pm$0.0001} & 0.0004\textsuperscript{$\pm$0.0000} & 0.0004\textsuperscript{$\pm$0.0000} & 0.0002\textsuperscript{$\pm$0.0001} & 0.0004\textsuperscript{$\pm$0.0001} & 0.0006\textsuperscript{$\pm$0.0001} & 0.0003\textsuperscript{$\pm$0.0000} & 0.0004\textsuperscript{$\pm$0.0001} & 0.0004\textsuperscript{$\pm$0.0000} & 162.25\textsuperscript{$\pm$0.20} \\
MaxKSW & 0.0002\textsuperscript{$\pm$0.0000} & 0.0124\textsuperscript{$\pm$0.0082} & 0.0122\textsuperscript{$\pm$0.0010} & 0.0002\textsuperscript{$\pm$0.0000} & 0.0154\textsuperscript{$\pm$0.0001} & 0.0216\textsuperscript{$\pm$0.0007} & 0.0002\textsuperscript{$\pm$0.0000} & 0.0165\textsuperscript{$\pm$0.0048} & 0.0171\textsuperscript{$\pm$0.0048} & 125.23\textsuperscript{$\pm$0.54} \\
iMSW & 0.0001\textsuperscript{$\pm$0.0000} & 0.0021\textsuperscript{$\pm$0.0001} & 0.0050\textsuperscript{$\pm$0.0001} & 0.0001\textsuperscript{$\pm$0.0000} & 0.0021\textsuperscript{$\pm$0.0001} & 0.0059\textsuperscript{$\pm$0.0001} & 0.0002\textsuperscript{$\pm$0.0001} & 0.0034\textsuperscript{$\pm$0.0001} & 0.0054\textsuperscript{$\pm$0.0001} & 74.45\textsuperscript{$\pm$0.03} \\
viMSW & 0.0002\textsuperscript{$\pm$0.0001} & 0.0003\textsuperscript{$\pm$0.0000} & 0.0005\textsuperscript{$\pm$0.0000} & 0.0003\textsuperscript{$\pm$0.0001} & 0.0003\textsuperscript{$\pm$0.0001} & 0.0008\textsuperscript{$\pm$0.0000} & 0.0003\textsuperscript{$\pm$0.0001} & 0.0005\textsuperscript{$\pm$0.0000} & 0.0005\textsuperscript{$\pm$0.0000} & 255.76\textsuperscript{$\pm$0.28} \\
oMSW & 0.0001\textsuperscript{$\pm$0.0000} & 0.0002\textsuperscript{$\pm$0.0000} & 0.0005\textsuperscript{$\pm$0.0000} & 0.0002\textsuperscript{$\pm$0.0001} & 0.0002\textsuperscript{$\pm$0.0000} & 0.0006\textsuperscript{$\pm$0.0000} & 0.0002\textsuperscript{$\pm$0.0001} & 0.0004\textsuperscript{$\pm$0.0000} & 0.0004\textsuperscript{$\pm$0.0000} & 16.55\textsuperscript{$\pm$0.01} \\
rMSW & 0.0002\textsuperscript{$\pm$0.0000} & 0.0003\textsuperscript{$\pm$0.0000} & 0.0005\textsuperscript{$\pm$0.0000} & 0.0003\textsuperscript{$\pm$0.0001} & 0.0003\textsuperscript{$\pm$0.0000} & 0.0008\textsuperscript{$\pm$0.0000} & 0.0003\textsuperscript{$\pm$0.0001} & 0.0006\textsuperscript{$\pm$0.0000} & 0.0005\textsuperscript{$\pm$0.0000} & 179.70\textsuperscript{$\pm$1.08} \\
EBSW & 0.0002\textsuperscript{$\pm$0.0001} & 0.0002\textsuperscript{$\pm$0.0000} & 0.0005\textsuperscript{$\pm$0.0000} & 0.0001\textsuperscript{$\pm$0.0000} & 0.0002\textsuperscript{$\pm$0.0000} & 0.0006\textsuperscript{$\pm$0.0000} & 0.0003\textsuperscript{$\pm$0.0001} & 0.0004\textsuperscript{$\pm$0.0000} & 0.0002\textsuperscript{$\pm$0.0000} & 9.66\textsuperscript{$\pm$1.15} \\
RPSW & 0.0001\textsuperscript{$\pm$0.0000} & 0.0001\textsuperscript{$\pm$0.0000} & 0.0004\textsuperscript{$\pm$0.0000} & 0.0002\textsuperscript{$\pm$0.0000} & 0.0001\textsuperscript{$\pm$0.0000} & 0.0010\textsuperscript{$\pm$0.0000} & 0.0002\textsuperscript{$\pm$0.0000} & 0.0001\textsuperscript{$\pm$0.0000} & 0.0004\textsuperscript{$\pm$0.0000} & 19.47\textsuperscript{$\pm$0.03} \\
EBRPSW & 0.0007\textsuperscript{$\pm$0.0002} & 0.0002\textsuperscript{$\pm$0.0001} & 0.0003\textsuperscript{$\pm$0.0000} & 0.0002\textsuperscript{$\pm$0.0001} & 0.0002\textsuperscript{$\pm$0.0001} & 0.0006\textsuperscript{$\pm$0.0000} & 0.0002\textsuperscript{$\pm$0.0001} & 0.0004\textsuperscript{$\pm$0.0000} & 0.0002\textsuperscript{$\pm$0.0000} & 20.30\textsuperscript{$\pm$0.05} \\
\bottomrule
\end{tabular}
\label{tab:main_2dgf}
\end{table}

\subsubsection{On classic synthetic datasets.} We generate $300$ particles as target from three classic 2D datasets: Swiss role, $8$ Gaussians, and Knot. The source is realized from a 2D isotropic Gaussian. We embed these data into the space with target dimensions of $d=\{2,50,100\}$ by padding with $0$'s and applying a random $d$-dimensional rotation on the $2D$ data plane. The optimization is run for $10000$ iterations using vanilla gradient descent and results are reported for $3$ independent runs. \textbf{Learning rates: } $\{1, 3, 5, 8\} \times 10^{\{-6, -5, -4, -3, -2, -1, 0, 1, 2\}}.$

\subsubsection{On MNIST and CelebA images.} 

For the MNIST setup, we randomly select a set of $50$ samples from digits $0$ (as source) and $1$ (as target) to perform gradient flow with $200000$ iterations. 

\textbf{Learning rates: } $\{1, 5\} \times 10^{\{-3, -2, -1, 0, 1, 2, 3\}}.$ 

\noindent For the CelebA setup, we randomly select a set of $50$ samples to perform gradient flow from the source noises realized from an isotropic Gaussian with $200000$ iterations. 

\textbf{Learning rates: } $\{1, 4, 8, 16, 64, 128, 256, 512, 1024, 3200\}.$ 

\subsubsection{Results}

Figure \ref{fig:figSwiss_8g_Knot} shows the optimal basin plots for all methods in the experiments with classic synthetic datasets. Similarly, Figure \ref{fig:mnist_celeba_basins} displays optimal basins in the MNIST/CelebA setup. Figure \ref{fig:figMNIST_CelebA} provides the visualization of $5$ images for MNIST/CelebA. The main observation is that all methods perform comparatively, aibeit different learning rates are required for good results. Table \ref{tab:main_2dgf} and \ref{tab:mnist_celeba_runtime} provides runtime comparison for all methods

\begin{figure}[h]
    \centering
    \begin{subfigure}[b]{0.48\textwidth}
        \centering
        \includegraphics[width=\textwidth]{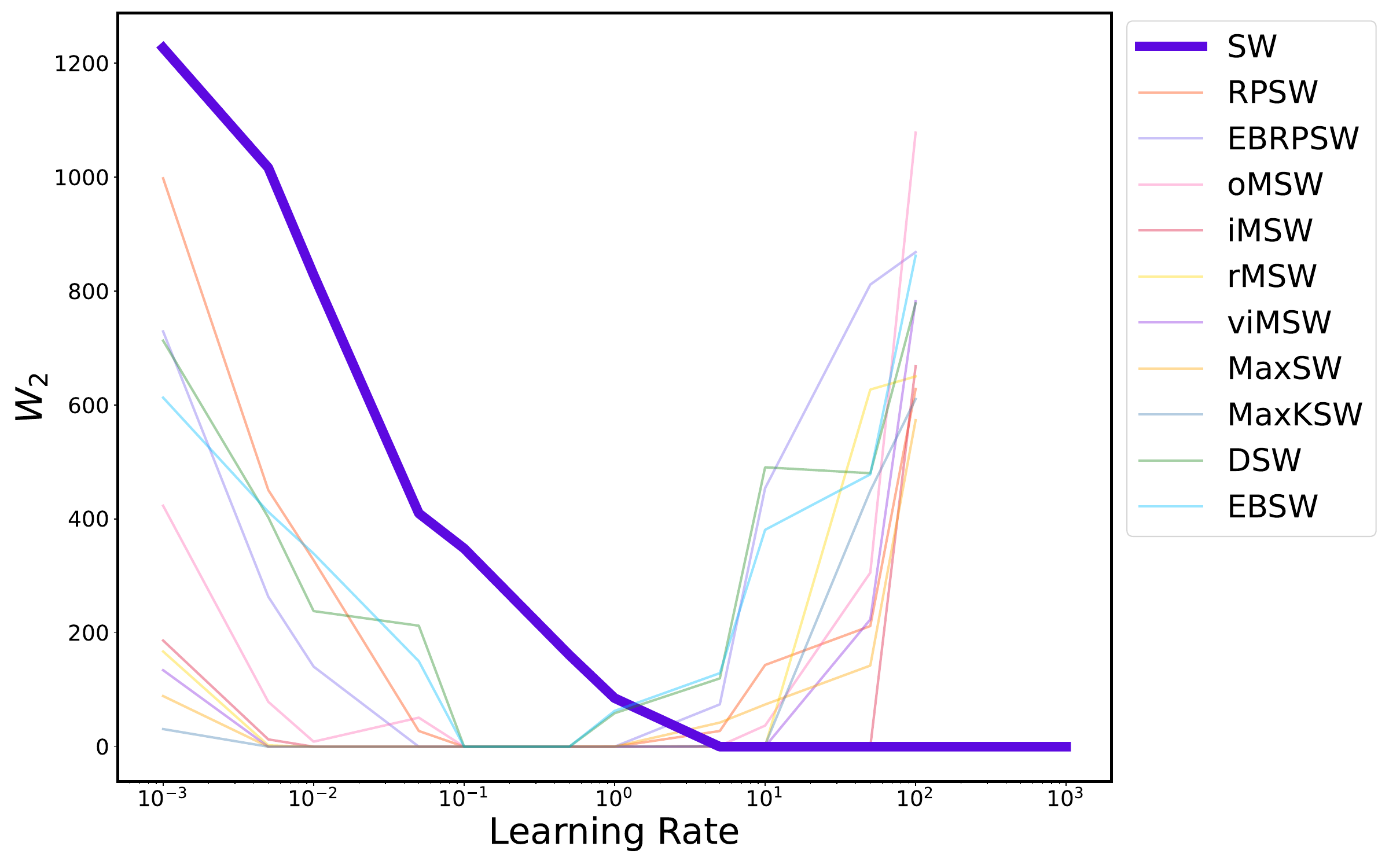}
        \label{fig:mnist_basin}
    \end{subfigure}
    \hfill
    \begin{subfigure}[b]{0.48\textwidth}
        \centering
        \includegraphics[width=\textwidth]{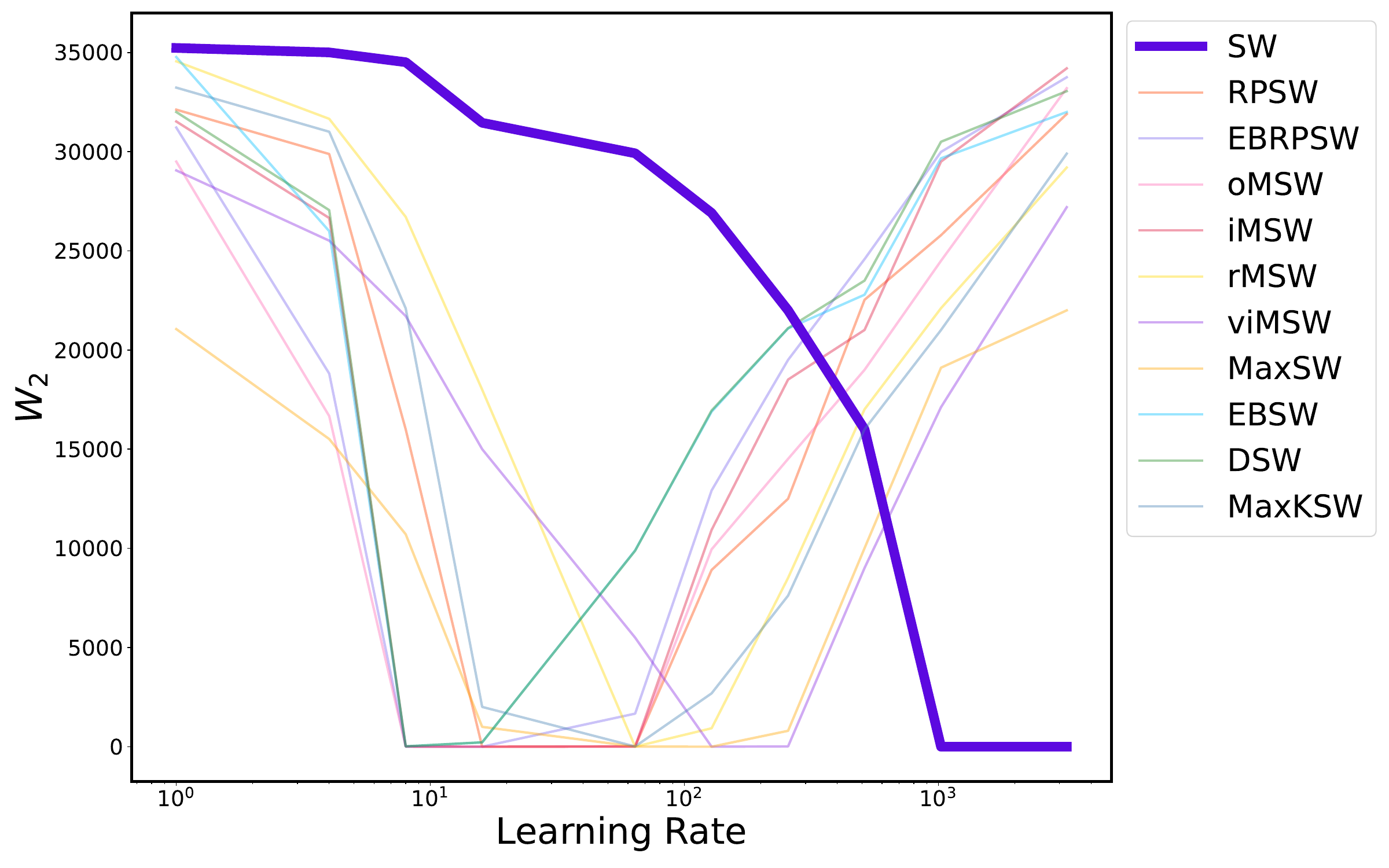}
        \label{fig:celeba_basin}
    \end{subfigure}
    \captionsetup{justification=centering} 
    \caption{Optimal basin plots for MNIST(left) and CelebA(right).}
    \label{fig:mnist_celeba_basins}
\end{figure}

\begin{table}[H]
\centering
\small
\begin{tabular}{lrr}
\toprule
\textbf{Method} & \textbf{MNIST (s) $\downarrow$} & \textbf{CelebA (s)$\downarrow$} \\
\midrule
DSW      & 12500.00\textsuperscript{$\pm$0.00}    & 126054.85\textsuperscript{$\pm$745.89}   \\
EBSW     & 686.18\textsuperscript{$\pm$45.31}     & 6694.50\textsuperscript{$\pm$148.08}     \\
RPSW     & 800.36\textsuperscript{$\pm$5.97}      & 6038.05\textsuperscript{$\pm$86.32}      \\
EBRPSW   & 699.33\textsuperscript{$\pm$9.70}      & 3171.20\textsuperscript{$\pm$582.31}     \\
oMSW     & 482.29\textsuperscript{$\pm$8.46}      & 3808.34\textsuperscript{$\pm$475.66}     \\
iMSW     & 1359.97\textsuperscript{$\pm$10.19}    & 3601.99\textsuperscript{$\pm$11.01}      \\
rMSW     & 1115.98\textsuperscript{$\pm$150.49}   & 100358.26\textsuperscript{$\pm$1002.95}  \\
viMSW    & 4161.11\textsuperscript{$\pm$16.05}    & 96007.74\textsuperscript{$\pm$937.50}    \\
MaxSW    & 7231.97\textsuperscript{$\pm$70.28}    & 9780.51\textsuperscript{$\pm$485.71}     \\
MaxKSW   & 6891.43\textsuperscript{$\pm$35.52}    & 65560.25\textsuperscript{$\pm$332.10}    \\
SW       & 441.41\textsuperscript{$\pm$36.85}     & 3335.51\textsuperscript{$\pm$76.52}      \\
\bottomrule
\end{tabular}
\captionsetup{justification=centering} 
\caption{Runtime comparison for all methods in the MNIST/CelebA setups}
\label{tab:mnist_celeba_runtime}
\end{table}

\begin{figure}[h]
    \centering
    \begin{subfigure}[b]{0.4\textwidth}
        \centering
        \includegraphics[height=4.5cm, width=\textwidth]{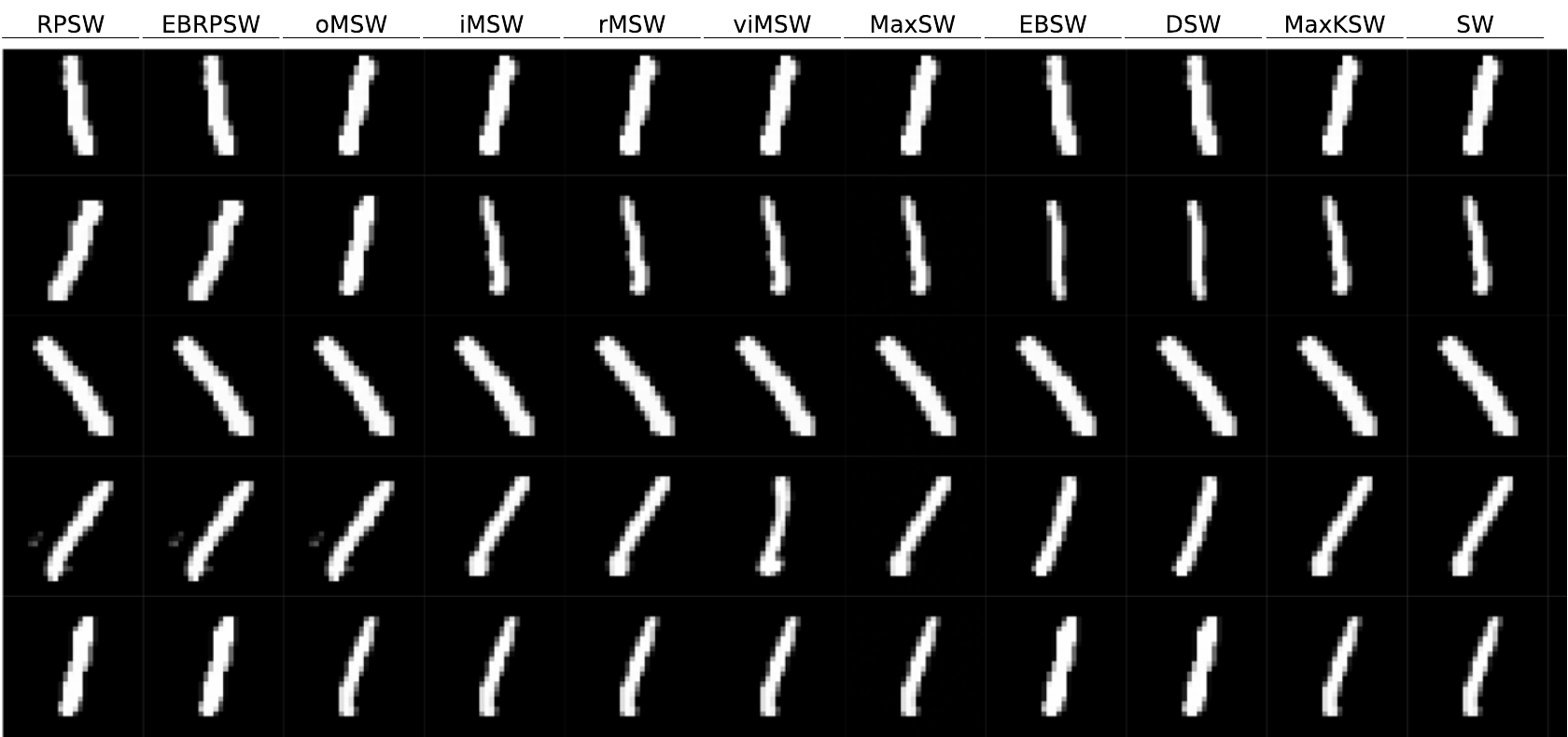}  
        \label{fig:figMNIST}
    \end{subfigure}
    \hfill
    \begin{subfigure}[b]{0.58\textwidth}
        \centering
        \includegraphics[height=4.5cm, width=\textwidth]{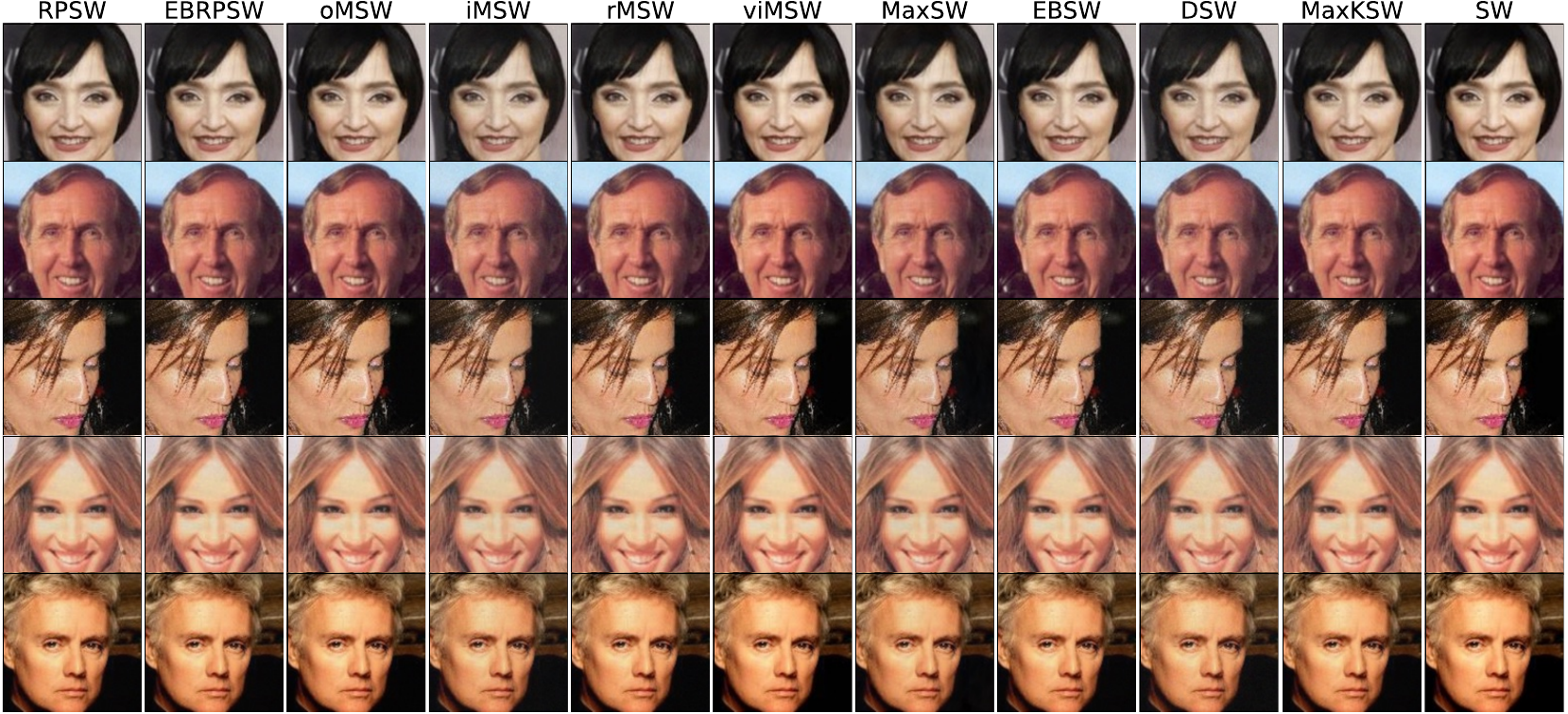} 
        \label{fig:figCelebA}
    \end{subfigure}
    \caption{Gradient Flow visualization for images from the MNIST dataset (left) and the CelebA dataset (right).}
    \label{fig:figMNIST_CelebA}
\end{figure}

\subsection{Color Transfer}
\label{sec:experiments_colortransfer}

We extend the Gradient Flow task from the classic 2D datasets to a common $3$-dimensional setting, where the source and target are the empirical distributions over the normalized RGB values of images.

\begin{figure}[h]
    \centering
    \includegraphics[width=0.6\textwidth]{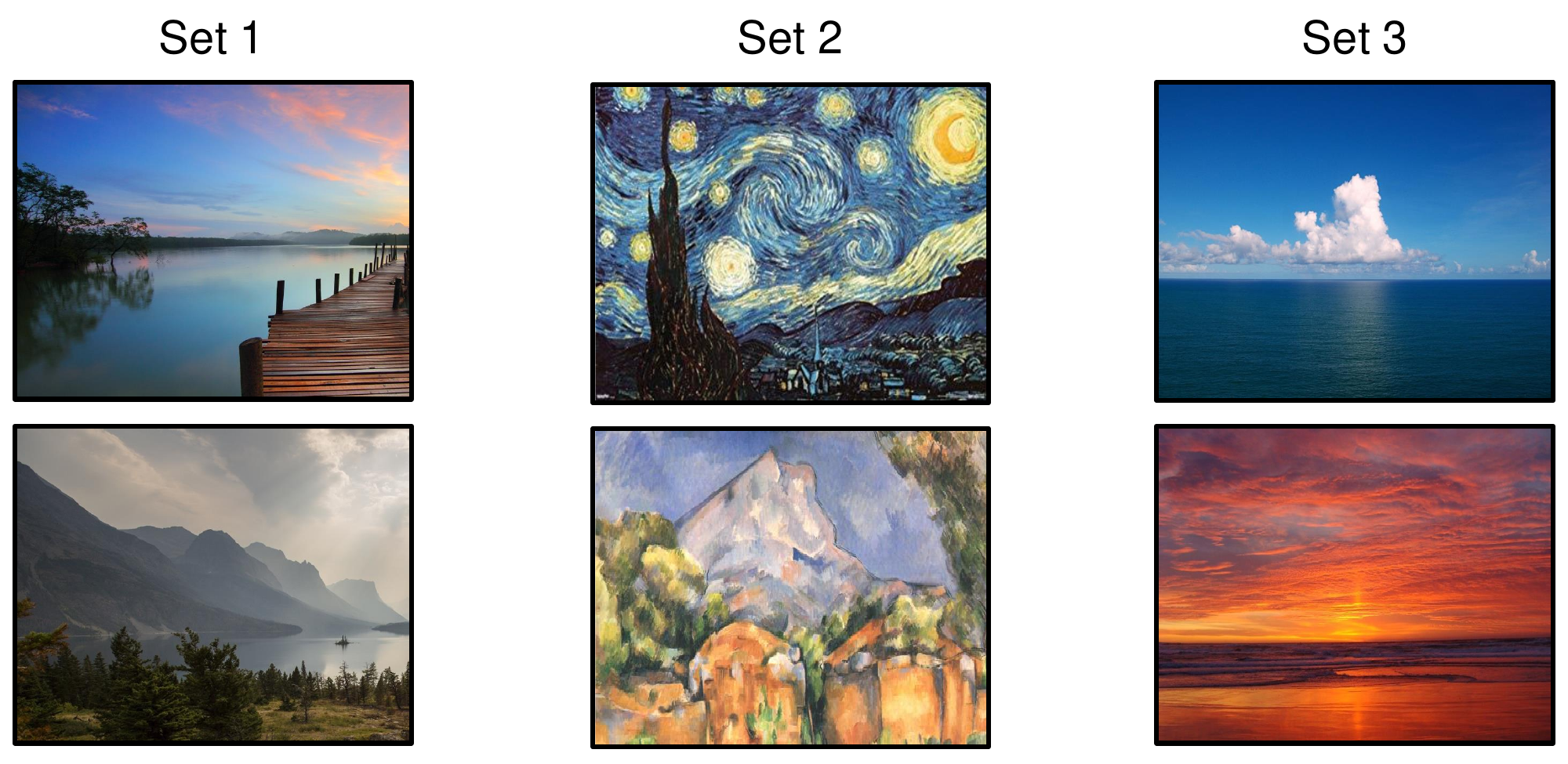}
    \captionsetup{justification=centering} 
    \caption{3 sets of source (top) and target (bottom) images for Color Transfer.}
    \label{fig:ct}
\end{figure}

\noindent\textbf{Setup.} We follow a similar setup as in \citep{nguyen2024markovian,nguyen2024energy,nguyen2024sliced} with different hyperparameters. Our experiments are performed over $3$ image sets (See Figure \ref{fig:ct}). The optimization uses $50,000$ iterations. To reduce computational complexity, we optionally apply K-means clustering with $3,000$ clusters, to reduce the colorspace into an empirical measure with $N=3,000$ particles. \textbf{Learning rates:} $\{1, 3, 5, 8\} \times 10^{\{-4, -3, -2, -1, 0, 1\}}, 100.$ 

\noindent \textbf{Metrics:} We evaluate the performance using the 2-Wasserstein distance between transformed source and target distributions, along with runtimes for each method. 

\noindent \textbf{Result:} We observe that with the correct learning rate, the classical SW perform similarly or better than most methods. Visually, all methods successfully transfer the color palette while maintaining the original image structure.

\begin{figure}[h]
    \centering
    \begin{subfigure}[b]{0.32\textwidth}
        \centering
        \includegraphics[width=\textwidth]{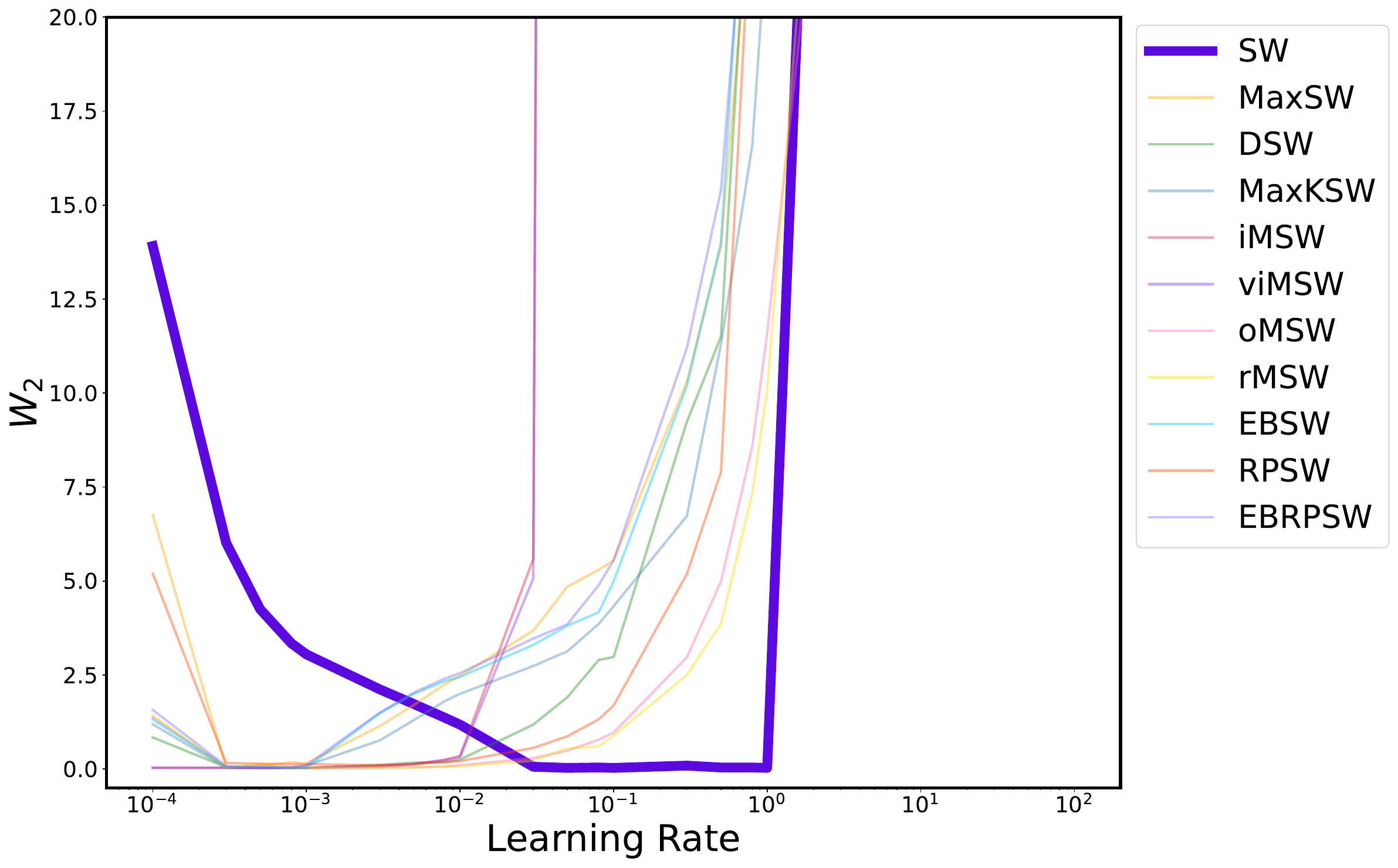}  
        \captionsetup{justification=centering} 
        \caption{Set 1}
        \label{fig:CT1_GT}
    \end{subfigure}
    \hfill
    \begin{subfigure}[b]{0.32\textwidth}
        \centering
        \includegraphics[width=\textwidth]{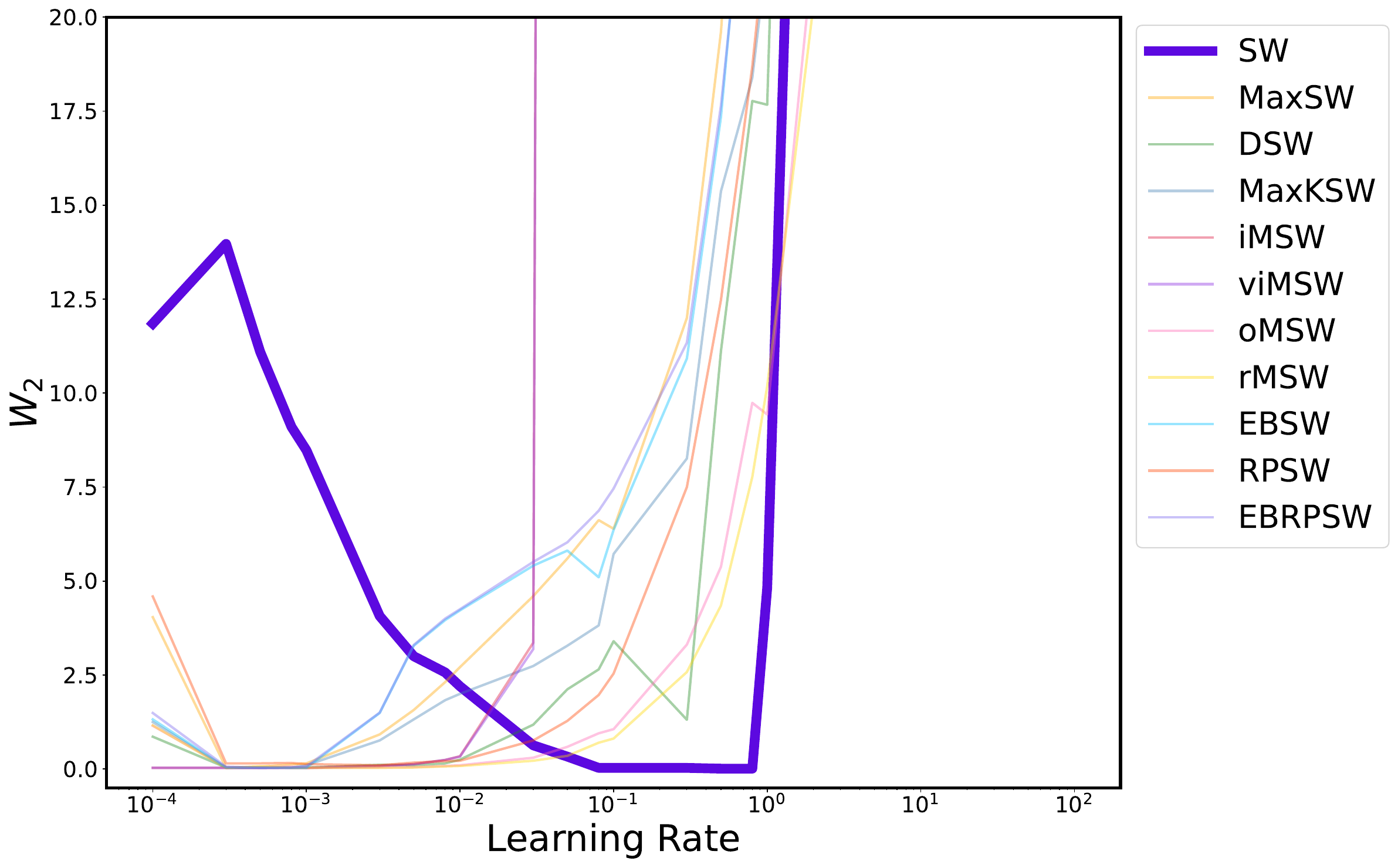}  
        \captionsetup{justification=centering} 
        \caption{Set 2}
        \label{fig:CT2_GT}
    \end{subfigure}
    \hfill
    \begin{subfigure}[b]{0.32\textwidth}
        \centering
        \includegraphics[width=\textwidth]{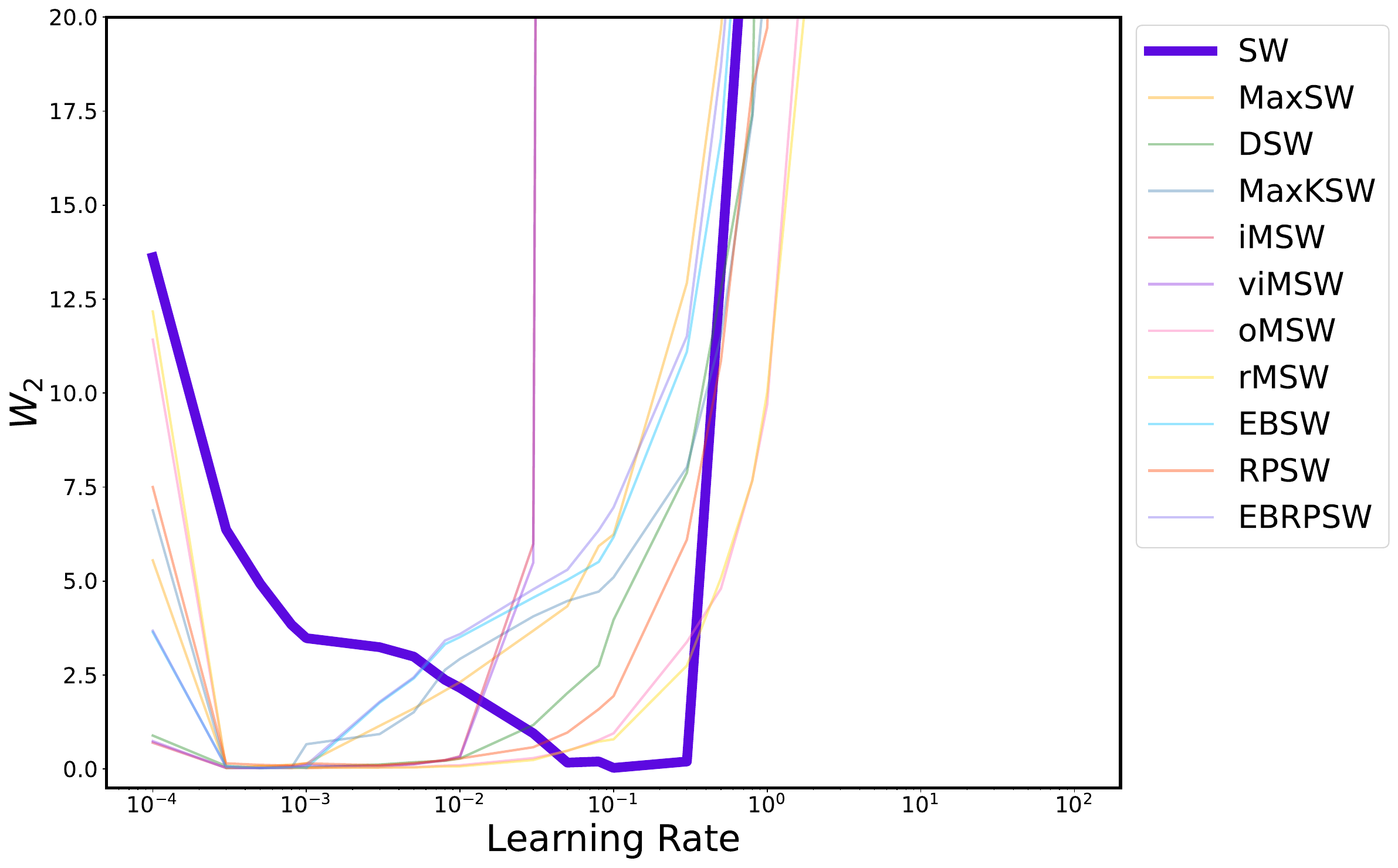}  
        \captionsetup{justification=centering} 
        \caption{Set 3}
        \label{fig:CT3_GT}
    \end{subfigure}
    \captionsetup{justification=centering} 
    \caption{Optimal basin plots for Gradient Flow with embedded synthetic datasets.}
    \label{fig:CT_GroundTruth}
\end{figure}

\begin{table}[t]
\centering
\captionsetup{justification=centering, width=0.65\linewidth}
\caption{Quantitative comparison of the best final converged $W_2 \downarrow$ and runtime $\downarrow$ between different variants for Color Transfer.}
\label{tab:results_color_transfer}
\setlength{\tabcolsep}{3pt}
\footnotesize
\begin{tabular}{@{}l*{4}{r}@{}}
\toprule
\multirow{2}{*}[-1pt]{\textbf{Method}} & \multicolumn{3}{c}{\textbf{Best $W_2 \downarrow$ (LR)}} & \multirow{2}{*}[-1pt]{\textbf{Runtime(s) $\downarrow$}} \\
\cmidrule(lr){2-4}
 & \multicolumn{1}{c}{\textbf{Set 1}} & \multicolumn{1}{c}{\textbf{Set 2}} & \multicolumn{1}{c}{\textbf{Set 3}} & \\
\midrule
SW      & 0.01\textsuperscript{$\pm$0.00} (1e-1) & 0.01\textsuperscript{$\pm$0.00} (8e-1) & 0.00\textsuperscript{$\pm$0.00} (1e0) & 8.62\textsuperscript{$\pm$0.04} \\
MaxSW   & 0.03\textsuperscript{$\pm$0.00} (1e-3) & 0.03\textsuperscript{$\pm$0.00} (3e-4) & 0.03\textsuperscript{$\pm$0.00} (3e-4) & 74.02\textsuperscript{$\pm$1.61} \\
DSW     & 0.03\textsuperscript{$\pm$0.00} (1e-3) & 0.03\textsuperscript{$\pm$0.00} (1e-3) & 0.03\textsuperscript{$\pm$0.00} (8e-4) & 162.25\textsuperscript{$\pm$0.20} \\
MaxKSW  & 0.03\textsuperscript{$\pm$0.00} (5e-4) & 0.03\textsuperscript{$\pm$0.00} (5e-4) & 0.03\textsuperscript{$\pm$0.00} (5e-4) & 125.23\textsuperscript{$\pm$0.54} \\
iMSW    & 0.03\textsuperscript{$\pm$0.00} (1e-3) & 0.03\textsuperscript{$\pm$0.00} (1e-4) & 0.03\textsuperscript{$\pm$0.00} (1e-3) & 74.45\textsuperscript{$\pm$0.03} \\
viMSW   & 0.03\textsuperscript{$\pm$0.00} (1e-3) & 0.03\textsuperscript{$\pm$0.00} (1e-4) & 0.03\textsuperscript{$\pm$0.00} (1e-3) & 255.76\textsuperscript{$\pm$0.28} \\
oMSW    & 0.03\textsuperscript{$\pm$0.00} (1e-3) & 0.03\textsuperscript{$\pm$0.00} (1e-3) & 0.03\textsuperscript{$\pm$0.00} (1e-3) & 16.55\textsuperscript{$\pm$0.01} \\
rMSW    & 0.03\textsuperscript{$\pm$0.00} (1e-3) & 0.03\textsuperscript{$\pm$0.00} (1e-3) & 0.03\textsuperscript{$\pm$0.00} (1e-3) & 179.70\textsuperscript{$\pm$1.08} \\
EBSW    & 0.03\textsuperscript{$\pm$0.00} (1e-3) & 0.03\textsuperscript{$\pm$0.00} (1e-3) & 0.03\textsuperscript{$\pm$0.00} (1e-3) & 9.66\textsuperscript{$\pm$1.15} \\
RPSW    & 0.10\textsuperscript{$\pm$0.00} (3e-3) & 0.10\textsuperscript{$\pm$0.00} (1e-2) & 0.10\textsuperscript{$\pm$0.09} (1e-3) & 19.47\textsuperscript{$\pm$0.03} \\
EBRPSW  & 0.03\textsuperscript{$\pm$0.00} (8e-4) & 0.03\textsuperscript{$\pm$0.00} (5e-4) & 0.03\textsuperscript{$\pm$0.00} (5e-4) & 20.30\textsuperscript{$\pm$0.05} \\
\bottomrule
\end{tabular}
\end{table}

\subsection{Deep Generative Modeling}
\label{sec:experiments_deepgenerative}

There exist various generative modeling setups with Sliced-Wasserstein \citep{kolouri2018sliced,deshpande2018generative,wu2019sliced,liutkus2019sliced,nguyen2024sliced}. Our goal is to consistently evaluate the effectiveness of SW and other variants in this task within a limited computing budget. We restrict our setup to the latent space ($d=512$) of a powerful autoencoder \citep{pidhorskyi2020adversarial} pretrained on the $1024 \times 1024$ FFHQ dataset  \citep{karras2019style} of $70$K human faces. 

\noindent \textbf{Learning rates:} $\{1, 3, 5, 8\} \times 10^{\{-6, -5, -4, -3, -2, -1\}}, 1.$ 

\noindent \textbf{Metrics:} We evaluate the performance of all methods using the $W_2$ distance between real and generated samples and runtime.

\subsubsection{Unconditional Generation with Sliced-Wasserstein}
\label{sec:experiments_deepgenerative_unconditionalgeneration}

\begin{figure}[h]
    \centering
    \begin{subfigure}[b]{0.44\textwidth}
        \centering
        \includegraphics[height=4.5cm]{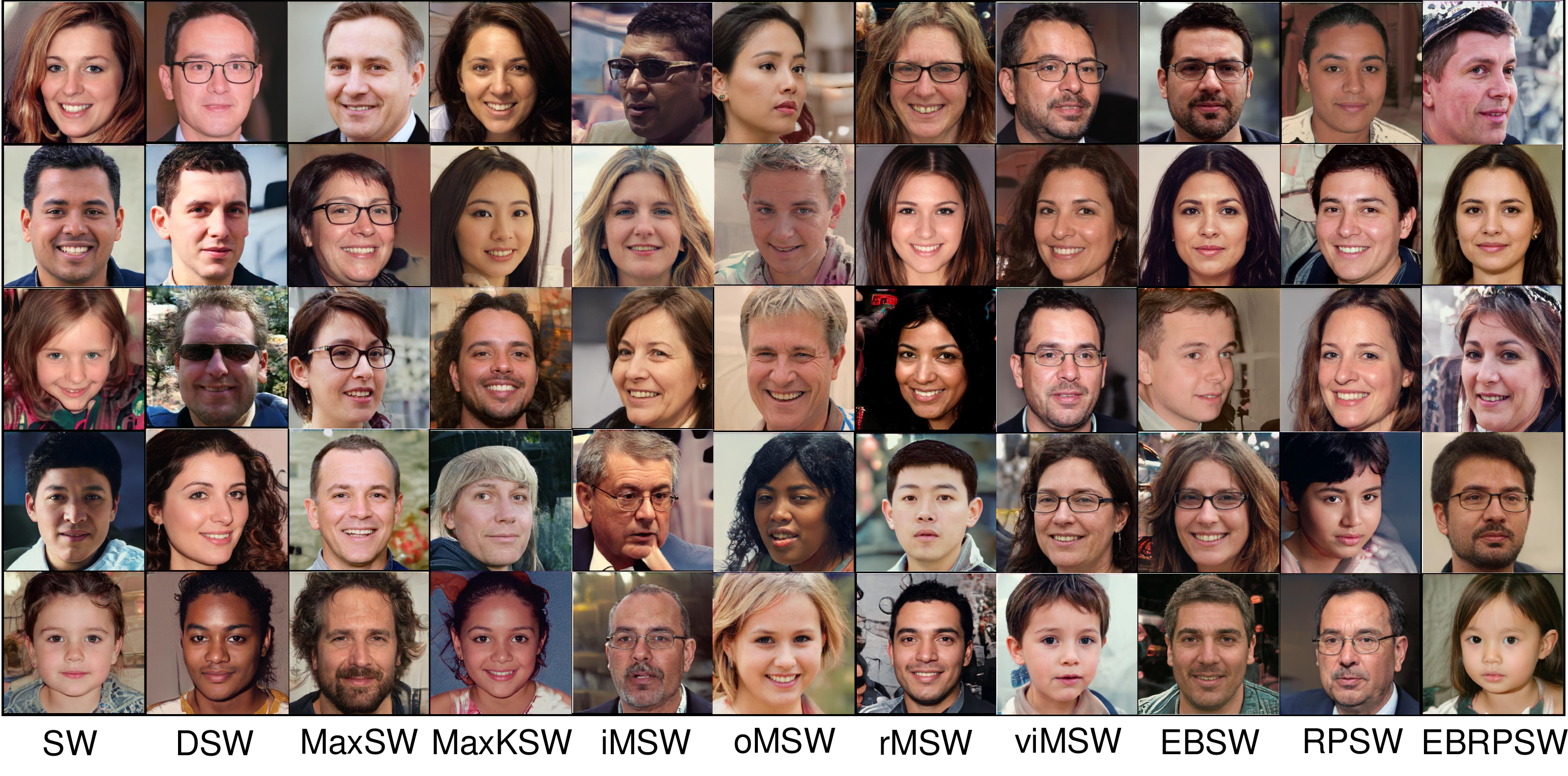}  
        \label{fig:figSWG}
    \end{subfigure}
    \hfill
    \begin{subfigure}[b]{0.44\textwidth}
        \centering
        \includegraphics[height=4.5cm]{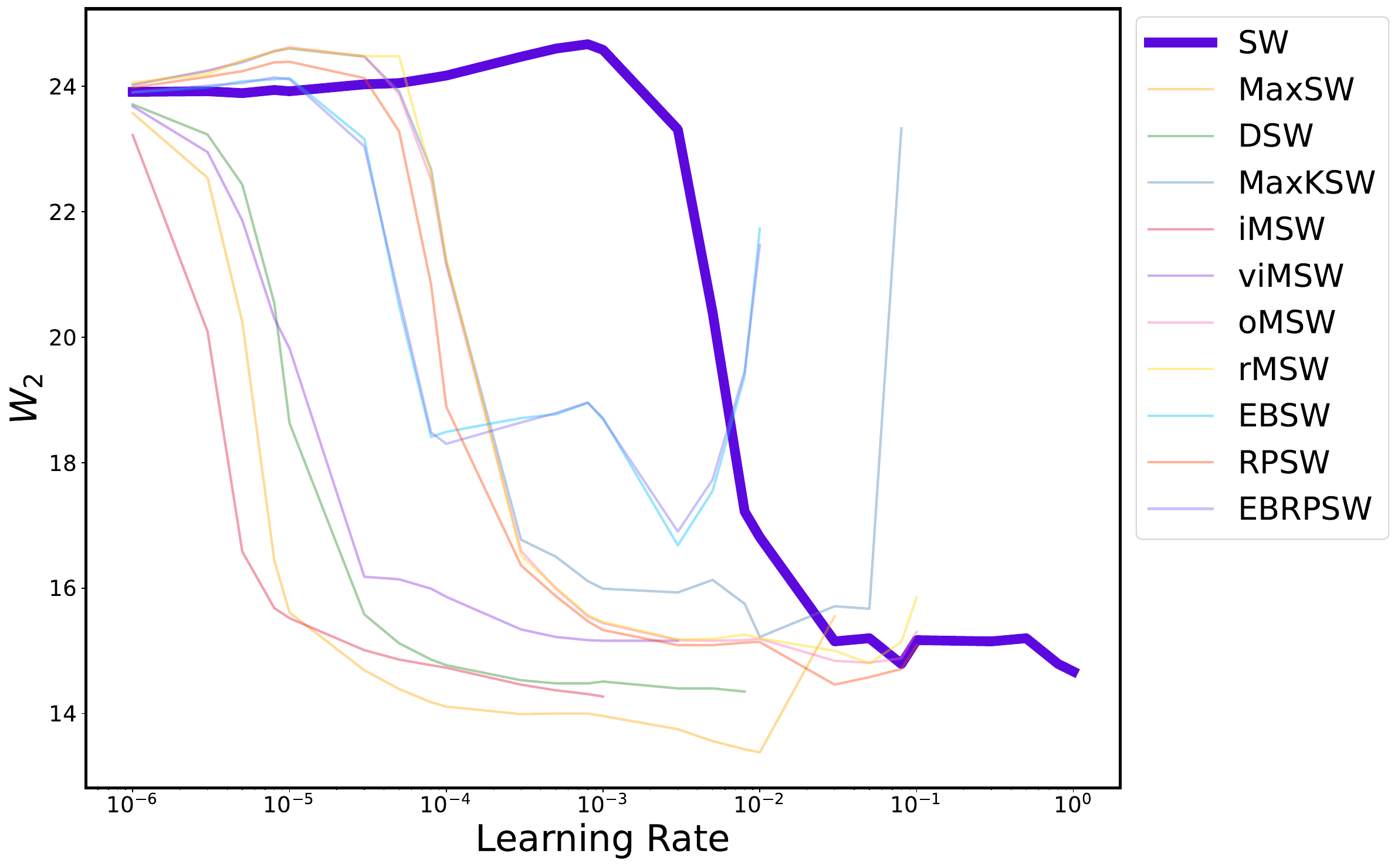}  
        \label{fig:figSWG1}
    \end{subfigure}
    \caption{\textbf{Left: }Samples generated using different SW variants. \textbf{Right: } Optimal basin plot.}
    \label{fig:figSWGboth}
\end{figure}

\noindent We follow the SWG setup introduced by \cite{deshpande2018generative}, using a standard generator network $G_\phi(\cdot)$ to transform random noises $z \in \mathbb{R}^8$ to latents $X \in \mathbb{R}^{512}$. The objective is to minimize 

$$\min_{\phi} \mathbb{E}_{\theta \sim \Theta} [W_2(\theta^{\top} X, \theta^{\top} G_\phi(Z))],$$

where $\Theta$ denotes the slicing distribution.

\noindent SWG is known to perform well with moderate dimensionality (i.e. the $28 \times 28$ MNIST images) but requires many slices to capture relevant information (\cite{deshpande2018generative},\cite{nadjahi2021fast}). For high-dimensional data like FFHQ, one may augment the setup by using a complex injective discriminator (to retain the metric properties of SW). Alternatively, it is possible to replace the random projections with learnable orthogonal projections and use the dual formulation as proposed by \cite{wu2019sliced}, which can adapt to the data distribution more effectively. However, we follow the idea of \cite{rombach2022high}, \cite{korotin2023light} and sidestep the issue by operating in the latent space of an autoencoder. This reduces the unnecessary complexity while still allowing the various SW variants to be evaluated consistently. We train for $10000$ iterations using vanilla gradient descent with a batch size of $2048$, which we found to be sufficient for learning the latent structure.

\begin{figure}[h]
    \centering
    \begin{subfigure}[b]{0.48\textwidth}
        \centering
        \includegraphics[width=\textwidth]{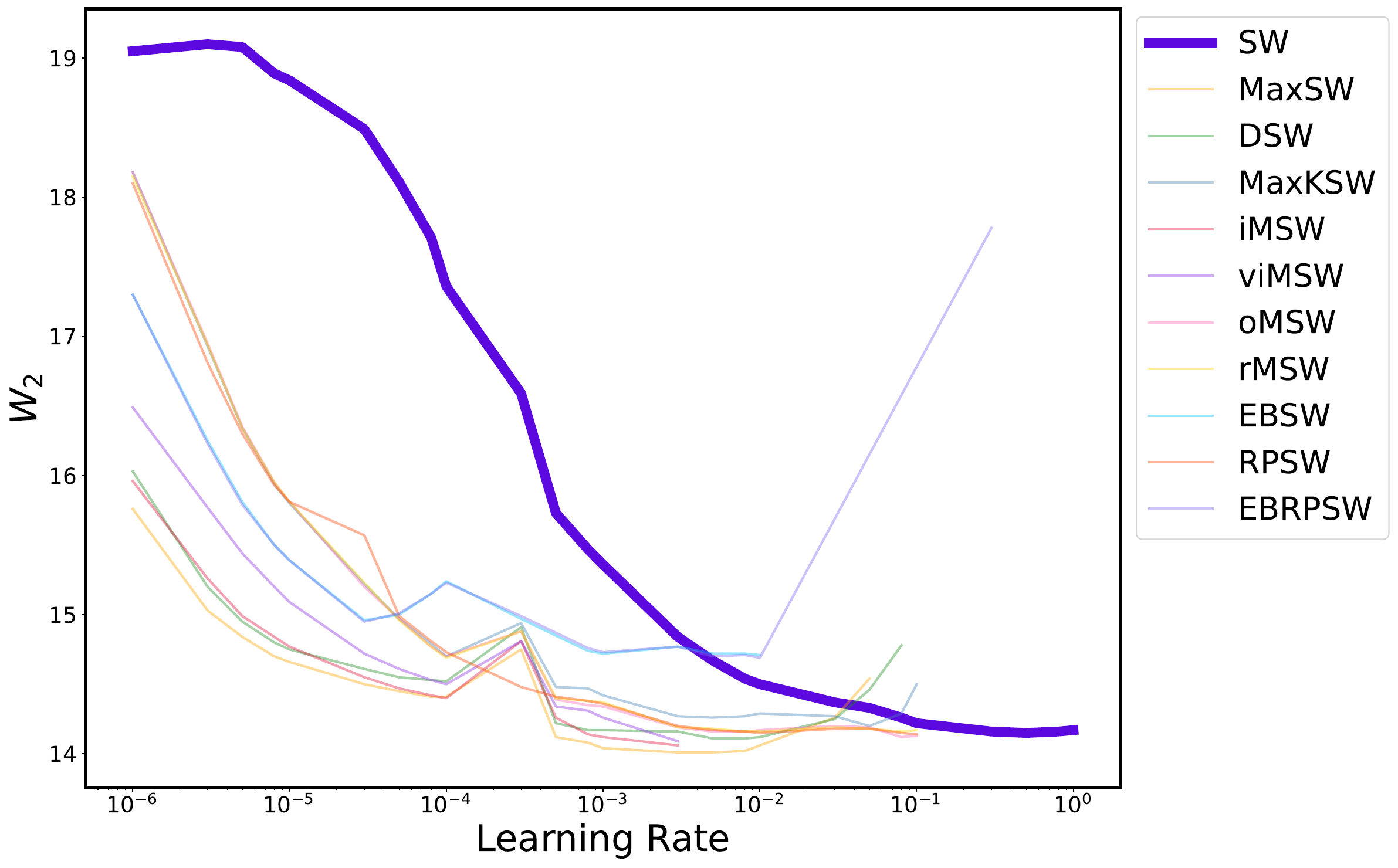}
        \label{fig:figM2F}
    \end{subfigure}
    \hfill
    \begin{subfigure}[b]{0.48\textwidth}
        \centering
        \includegraphics[width=\textwidth]{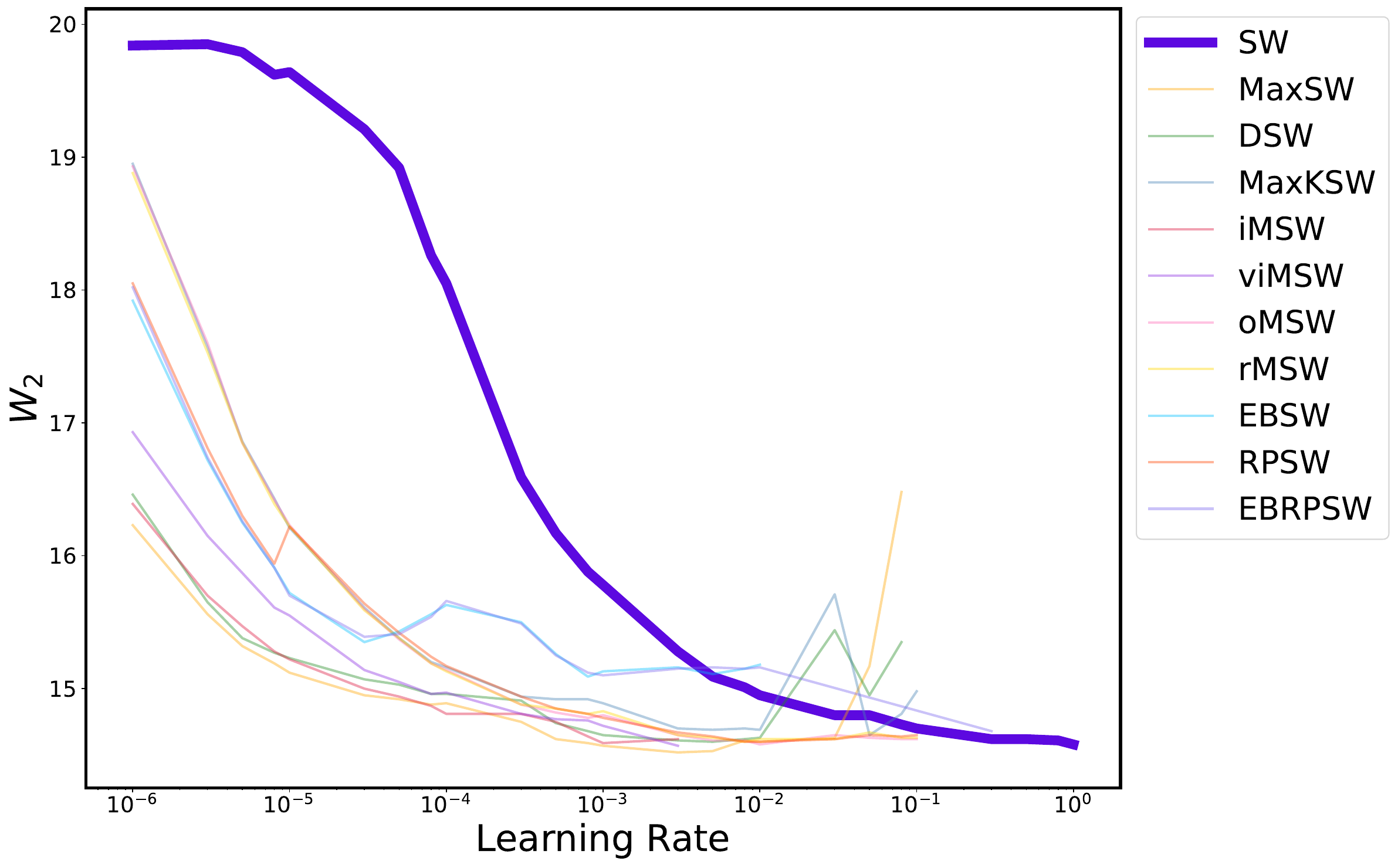}
        \label{fig:figA2C}
    \end{subfigure}
    \vspace{-0.1in}
    \captionsetup{justification=centering} 
    \caption{Optimal basin plots for M2F(left) and A2C(right).}
    \label{fig:figM2F_A2C}
\end{figure}

\subsubsection{Unpaired Image-to-Image Translation with Sliced-Wasserstein}
\label{sec:experiments_deepgenerative_unpairedtranslation}

We perform unpaired image-to-image translation using a residual generator $G_\phi(\cdot)$ to transform latents from the source domain $X$ to those from the target domain $Y$. The objective is to minimize 

$$\min_{\phi} \mathbb{E}_{\theta \sim \Theta} [W_2(\theta^{\top} G_\phi(X), \theta^{\top} Y)],$$

where $\Theta$ denotes the slicing distribution. The setup is similar to SWG for unconditional generation, with only the training objective and generator being modified. Here, we split the training and test datasets of FFHQ into Male vs. Female and Adult ($>18$ years old) vs Children ($<10$ years old) to perform two subtasks: Male to Female (M2F) and Adult to Children (A2C) translations.

\begin{table}[t]
\centering
\captionsetup{justification=centering} 
\caption{Quantitative comparison between different variants for Deep Generative Modeling \ref{sec:experiments_deepgenerative}.}
\label{tab:dgm_results}
\setlength{\tabcolsep}{4pt}
\footnotesize
\begin{tabular}{@{}l*{6}{r}@{}}
\toprule
\multirow{2}{*}[-2pt]{\textbf{Method}} & \multicolumn{2}{c}{\textbf{Unconditional Gen.}} & \multicolumn{4}{c}{\textbf{Unpaired Translation}} \\
\cmidrule(lr){2-3} \cmidrule(l){4-7}
& \multicolumn{1}{c}{$W_2 \downarrow$} (LR) & \multicolumn{1}{c}{RT(s) $\downarrow$} & \multicolumn{1}{c}{M2F: $W_2\downarrow$ (LR)} & \multicolumn{1}{c}{RT(s)$\downarrow$} & \multicolumn{1}{c}{A2C: $W_2\downarrow$ (LR)} & \multicolumn{1}{c}{RT(s)$\downarrow$} \\
\midrule
SW & 14.67\textsuperscript{$\pm$.01}\,(1e0) & 26.69\textsuperscript{$\pm$.23} & 14.15\textsuperscript{$\pm$.02}\,(5e-1) & 25.47\textsuperscript{$\pm$.09} & 14.58\textsuperscript{$\pm$.03}\,(1e0) & 27.94\textsuperscript{$\pm$.14} \\
MaxSW & 13.38\textsuperscript{$\pm$.17}\,(1e-2) & 95.83\textsuperscript{$\pm$.24} & 14.01\textsuperscript{$\pm$.02}\,(5e-3) & 102.68\textsuperscript{$\pm$.03} & 14.52\textsuperscript{$\pm$.02}\,(3e-3) & 103.29\textsuperscript{$\pm$2.98} \\
DSW & 14.35\textsuperscript{$\pm$.06}\,(8e-3) & 197.70\textsuperscript{$\pm$.14} & 14.11\textsuperscript{$\pm$.02}\,(5e-3) & 198.42\textsuperscript{$\pm$.38} & 14.60\textsuperscript{$\pm$.04}\,(5e-3) & 198.08\textsuperscript{$\pm$.32} \\
MaxKSW & 15.22\textsuperscript{$\pm$.03}\,(1e-2) & 53.38\textsuperscript{$\pm$3.16} & 14.20\textsuperscript{$\pm$.01}\,(5e-2) & 45.90\textsuperscript{$\pm$.04} & 14.65\textsuperscript{$\pm$.02}\,(5e-2) & 45.73\textsuperscript{$\pm$.08} \\
iMSW & 14.27\textsuperscript{$\pm$.01}\,(1e-3) & 90.27\textsuperscript{$\pm$.23} & 14.06\textsuperscript{$\pm$.01}\,(3e-3) & 92.70\textsuperscript{$\pm$.12} & 14.59\textsuperscript{$\pm$.01}\,(1e-3) & 92.65\textsuperscript{$\pm$.17} \\
viMSW & 15.16\textsuperscript{$\pm$.03}\,(1e-3) & 49.26\textsuperscript{$\pm$.05} & 14.09\textsuperscript{$\pm$.01}\,(3e-3) & 271.10\textsuperscript{$\pm$.42} & 14.57\textsuperscript{$\pm$.01}\,(3e-3) & 271.09\textsuperscript{$\pm$.84} \\
oMSW & 14.81\textsuperscript{$\pm$.04}\,(5e-2) & 29.34\textsuperscript{$\pm$.18} & 14.12\textsuperscript{$\pm$.01}\,(8e-2) & 31.39\textsuperscript{$\pm$.05} & 14.58\textsuperscript{$\pm$.03}\,(1e-2) & 31.23\textsuperscript{$\pm$.04} \\
rMSW & 14.80\textsuperscript{$\pm$.07}\,(5e-2) & 193.77\textsuperscript{$\pm$.63} & 14.16\textsuperscript{$\pm$.01}\,(8e-2) & 195.07\textsuperscript{$\pm$.12} & 14.60\textsuperscript{$\pm$.02}\,(8e-3) & 195.81\textsuperscript{$\pm$.27} \\
EBSW & 16.68\textsuperscript{$\pm$.19}\,(3e-3) & 22.16\textsuperscript{$\pm$.07} & 14.71\textsuperscript{$\pm$.02}\,(1e-2) & 26.67\textsuperscript{$\pm$.00} & 15.09\textsuperscript{$\pm$.04}\,(8e-4) & 26.68\textsuperscript{$\pm$.02} \\
RPSW & 14.46\textsuperscript{$\pm$.06}\,(3e-2) & 33.35\textsuperscript{$\pm$.07} & 14.14\textsuperscript{$\pm$.00}\,(1e-1) & 37.11\textsuperscript{$\pm$.10} & 14.60\textsuperscript{$\pm$.01}\,(1e-2) & 36.99\textsuperscript{$\pm$.12} \\
EBRPSW & 16.90\textsuperscript{$\pm$.22}\,(3e-3) & 34.40\textsuperscript{$\pm$.07} & 14.69\textsuperscript{$\pm$.02}\,(1e-2) & 38.15\textsuperscript{$\pm$.05} & 15.10\textsuperscript{$\pm$.05}\,(1e-3) & 38.14\textsuperscript{$\pm$.11} \\
\bottomrule
\end{tabular}
\end{table}

\subsubsection{Results}
\vspace{-0.1in}
Most SW variants successfully capture the latent structure of human faces, producing visually plausible samples for both unconditional generation and unpaired translation tasks (M2F, A2C). As seen in Table \ref{tab:dgm_results}, MaxSW achieves the lowest $W_2$ the best at a higher computational cost. Other methods perform comparably with the classical SW and EBSW variants achieving lowest runtime. Visually, all methods produce reasonable results for both unconditional generation (Figure \ref{fig:figSWGboth}) and unpaired translation (Figure \ref{fig:fig4c}), with only subtle differences between variants. Different methods require varying learning rates for optimal performance, confirming our theoretical intuition.


\begin{figure}[b]
    \centering
    \includegraphics[width=0.6\textwidth]{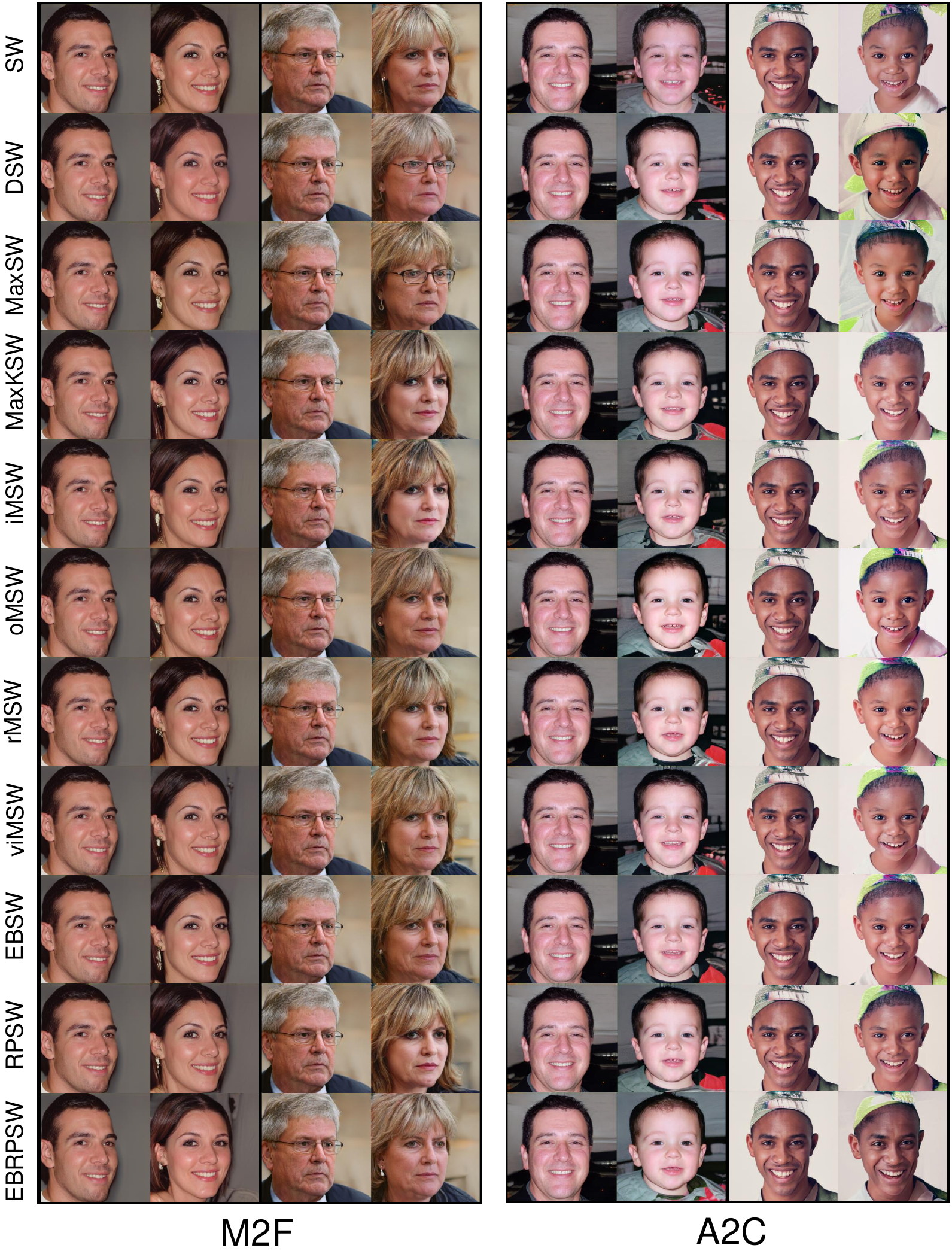}
    \captionsetup{justification=centering} 
    \caption{Samples generated by the M2F (left) and (A2C) residual translators.}
    \label{fig:fig4c}
\end{figure}

\section{Conclusion}
\label{sec:conclusion}

In this paper, we revisit the classic Sliced-Wasserstein and rethink the current approaches that modify the slicing distribution to focus on informative slices. We introduce the $\phi$-weighting formulation and propose rescaling the 1D Wasserstein distances based on slice `(un)informativeness.' With reasonable assumptions on the data, we define a novel assumption of `slice informativeness' and show this rescaling simplifies to a global scaling factor on the SWD. This directly translates to the standard learning rate search in gradient-based optimization (even with a finite number of slices). We then empirically show that, in a wide variety of learning settings, a properly configured SW performs competitively with other complex variants without the additional computation/memory overheads. This challenges the notion that increasingly advanced methods are always necessary for improved performance. We show that while standard SW may not be the universal solution for all scenarios (i.e., not outperforming all other methods in all instances), it remains to be a reliable and efficient solution for common learning tasks. 

\noindent\textbf{Future research: } The reweighing formulation has deep connections to modifying the slicing distributions. Thus, our work does not preclude further research to improve the classical Sliced-Wasserstein using current approaches. Rather, we provide a novel perspective to interpret and address a major limitation of the classic SW. From this angle, future research could investigate different choices for the rescaling function $\rho$ under various assumptions about the data, as well as explore alternative notions of \textit{slice informativeness}.

\section{Acknowledgement}
This work was supported by NSF CAREER award \#2339898.
\label{sec:ack}

\newpage
\bibliography{main}

\begin{thebibliography}{47}
\providecommand{\natexlab}[1]{#1}
\providecommand{\url}[1]{\texttt{#1}}
\expandafter\ifx\csname urlstyle\endcsname\relax
  \providecommand{\doi}[1]{doi: #1}\else
  \providecommand{\doi}{doi: \begingroup \urlstyle{rm}\Url}\fi

\bibitem[Ambrosio et~al.(2008)Ambrosio, Gigli, and Savar{\'e}]{ambrosio2008gradient}
L.~Ambrosio, N.~Gigli, and G.~Savar{\'e}.
\newblock \emph{Gradient flows: in metric spaces and in the space of probability measures}.
\newblock Springer Science \& Business Media, 2008.

\bibitem[Blondel et~al.(2018)Blondel, Seguy, and Rolet]{blondel2018smooth}
M.~Blondel, V.~Seguy, and A.~Rolet.
\newblock Smooth and sparse optimal transport.
\newblock In \emph{International conference on artificial intelligence and statistics}, pages 880--889. PMLR, 2018.

\bibitem[Bonet et~al.(2021{\natexlab{a}})Bonet, Courty, Septier, and Drumetz]{bonet2021sliced}
C.~Bonet, N.~Courty, F.~Septier, and L.~Drumetz.
\newblock Sliced-wasserstein gradient flows.
\newblock \emph{arXiv preprint arXiv:2110.10972}, page~1, 2021{\natexlab{a}}.

\bibitem[Bonet et~al.(2021{\natexlab{b}})Bonet, Vayer, Courty, Septier, and Drumetz]{bonet2021subspace}
C.~Bonet, T.~Vayer, N.~Courty, F.~Septier, and L.~Drumetz.
\newblock Subspace detours meet gromov--wasserstein.
\newblock \emph{Algorithms}, 14\penalty0 (12):\penalty0 366, 2021{\natexlab{b}}.

\bibitem[Bonet et~al.(2022)Bonet, Berg, Courty, Septier, Drumetz, and Pham]{bonet2022spherical}
C.~Bonet, P.~Berg, N.~Courty, F.~Septier, L.~Drumetz, and M.-T. Pham.
\newblock Spherical sliced-wasserstein.
\newblock \emph{arXiv preprint arXiv:2206.08780}, 2022.

\bibitem[Bonet et~al.(2024)Bonet, Drumetz, and Courty]{bonet2024sliced}
C.~Bonet, L.~Drumetz, and N.~Courty.
\newblock Sliced-wasserstein distances and flows on cartan-hadamard manifolds.
\newblock \emph{arXiv preprint arXiv:2403.06560}, 2024.

\bibitem[Bonneel and Coeurjolly(2019)]{bonneel2019spot}
N.~Bonneel and D.~Coeurjolly.
\newblock Spot: sliced partial optimal transport.
\newblock \emph{ACM Transactions on Graphics (TOG)}, 38\penalty0 (4):\penalty0 1--13, 2019.

\bibitem[Bonneel et~al.(2015)Bonneel, Rabin, Peyr{\'e}, and Pfister]{bonneel2015sliced}
N.~Bonneel, J.~Rabin, G.~Peyr{\'e}, and H.~Pfister.
\newblock Sliced wasserstein distance.
\newblock \emph{SIAM Journal on Imaging Sciences}, 8\penalty0 (4):\penalty0 2292--2310, 2015.

\bibitem[Chen et~al.(2020)Chen, Yang, and Li]{chen2020augmented}
X.~Chen, Y.~Yang, and Y.~Li.
\newblock Augmented sliced wasserstein distances.
\newblock \emph{arXiv preprint arXiv:2006.08812}, 2020.

\bibitem[Cuturi(2013)]{cuturi2013sinkhorn}
M.~Cuturi.
\newblock Sinkhorn distances: Lightspeed computation of optimal transport.
\newblock \emph{Advances in neural information processing systems}, 26, 2013.

\bibitem[Deng et~al.(2009)Deng, Dong, Socher, Li, Li, and Fei-Fei]{deng2009imagenet}
J.~Deng, W.~Dong, R.~Socher, L.-J. Li, K.~Li, and L.~Fei-Fei.
\newblock Imagenet: A large-scale hierarchical image database.
\newblock In \emph{2009 IEEE Conference on Computer Vision and Pattern Recognition}, pages 248--255. IEEE, 2009.

\bibitem[Deshpande et~al.(2018)Deshpande, Zhang, and Schwing]{deshpande2018generative}
I.~Deshpande, Z.~Zhang, and A.~G. Schwing.
\newblock Generative modeling using the sliced wasserstein distance.
\newblock In \emph{Proceedings of the IEEE conference on computer vision and pattern recognition}, pages 3483--3491, 2018.

\bibitem[Deshpande et~al.(2019)Deshpande, Hu, Sun, Pyrros, Siddiqui, Koyejo, Zhao, Forsyth, and Schwing]{deshpande2019max}
I.~Deshpande, Y.-T. Hu, R.~Sun, A.~Pyrros, N.~Siddiqui, S.~Koyejo, Z.~Zhao, D.~Forsyth, and A.~G. Schwing.
\newblock Max-sliced wasserstein distance and its use for gans.
\newblock In \emph{Proceedings of the IEEE/CVF conference on computer vision and pattern recognition}, pages 10648--10656, 2019.

\bibitem[Diaconis and Freedman(1984)]{diaconis1984asymptotics}
P.~Diaconis and D.~Freedman.
\newblock Asymptotics of graphical projection pursuit.
\newblock \emph{The annals of statistics}, pages 793--815, 1984.

\bibitem[Fefferman et~al.(2016)Fefferman, Mitter, and Narayanan]{fefferman2016testing}
C.~Fefferman, S.~Mitter, and H.~Narayanan.
\newblock Testing the manifold hypothesis.
\newblock \emph{Journal of the American Mathematical Society}, 29\penalty0 (4):\penalty0 983--1049, 2016.

\bibitem[Jordan et~al.(1998)Jordan, Kinderlehrer, and Otto]{jordan1998variational}
R.~Jordan, D.~Kinderlehrer, and F.~Otto.
\newblock The variational formulation of the fokker--planck equation.
\newblock \emph{SIAM journal on mathematical analysis}, 29\penalty0 (1):\penalty0 1--17, 1998.

\bibitem[Karras et~al.(2019)Karras, Laine, and Aila]{karras2019style}
T.~Karras, S.~Laine, and T.~Aila.
\newblock A style-based generator architecture for generative adversarial networks.
\newblock In \emph{Proceedings of the IEEE/CVF conference on computer vision and pattern recognition}, pages 4401--4410, 2019.

\bibitem[Khamis et~al.(2024)Khamis, Tsuchida, Tarek, Rolland, and Petersson]{khamis2024scalable}
A.~Khamis, R.~Tsuchida, M.~Tarek, V.~Rolland, and L.~Petersson.
\newblock Scalable optimal transport methods in machine learning: A contemporary survey.
\newblock \emph{IEEE Transactions on Pattern Analysis and Machine Intelligence}, 2024.

\bibitem[Kolouri et~al.(2018)Kolouri, Pope, Martin, and Rohde]{kolouri2018sliced}
S.~Kolouri, P.~E. Pope, C.~E. Martin, and G.~K. Rohde.
\newblock Sliced wasserstein auto-encoders.
\newblock In \emph{International Conference on Learning Representations}, 2018.

\bibitem[Kolouri et~al.(2019)Kolouri, Nadjahi, Simsekli, Badeau, and Rohde]{kolouri2019generalized}
S.~Kolouri, K.~Nadjahi, U.~Simsekli, R.~Badeau, and G.~Rohde.
\newblock Generalized sliced wasserstein distances.
\newblock \emph{Advances in neural information processing systems}, 32, 2019.

\bibitem[Korotin et~al.(2023)Korotin, Gushchin, and Burnaev]{korotin2023light}
A.~Korotin, N.~Gushchin, and E.~Burnaev.
\newblock Light schr$\backslash$" odinger bridge.
\newblock \emph{arXiv preprint arXiv:2310.01174}, 2023.

\bibitem[Lin et~al.(2014)Lin, Maire, Belongie, Hays, Perona, Ramanan, Doll{\'a}r, and Zitnick]{lin2014microsoft}
T.-Y. Lin, M.~Maire, S.~Belongie, J.~Hays, P.~Perona, D.~Ramanan, P.~Doll{\'a}r, and C.~L. Zitnick.
\newblock Microsoft coco: Common objects in context.
\newblock In \emph{European Conference on Computer Vision}, pages 740--755. Springer, 2014.

\bibitem[Liutkus et~al.(2019)Liutkus, Simsekli, Majewski, Durmus, and St{\"o}ter]{liutkus2019sliced}
A.~Liutkus, U.~Simsekli, S.~Majewski, A.~Durmus, and F.-R. St{\"o}ter.
\newblock Sliced-wasserstein flows: Nonparametric generative modeling via optimal transport and diffusions.
\newblock In \emph{International Conference on Machine Learning}, pages 4104--4113. PMLR, 2019.

\bibitem[Manole et~al.(2024)Manole, Balakrishnan, Niles-Weed, and Wasserman]{manole2024plugin}
T.~Manole, S.~Balakrishnan, J.~Niles-Weed, and L.~Wasserman.
\newblock Plugin estimation of smooth optimal transport maps.
\newblock \emph{The Annals of Statistics}, 52\penalty0 (3):\penalty0 966--998, 2024.

\bibitem[Martin et~al.(2023)Martin, Medri, Bai, Liu, Yan, Rohde, and Kolouri]{martin2023lcot}
R.~D. Martin, I.~Medri, Y.~Bai, X.~Liu, K.~Yan, G.~K. Rohde, and S.~Kolouri.
\newblock Lcot: Linear circular optimal transport.
\newblock \emph{arXiv preprint arXiv:2310.06002}, 2023.

\bibitem[Muzellec and Cuturi(2019)]{muzellec2019subspace}
B.~Muzellec and M.~Cuturi.
\newblock Subspace detours: Building transport plans that are optimal on subspace projections.
\newblock \emph{Advances in Neural Information Processing Systems}, 32, 2019.

\bibitem[Nadjahi et~al.(2020)Nadjahi, Durmus, Chizat, Kolouri, Shahrampour, and Simsekli]{nadjahi2020statistical}
K.~Nadjahi, A.~Durmus, L.~Chizat, S.~Kolouri, S.~Shahrampour, and U.~Simsekli.
\newblock Statistical and topological properties of sliced probability divergences.
\newblock \emph{Advances in Neural Information Processing Systems}, 33:\penalty0 20802--20812, 2020.

\bibitem[Nadjahi et~al.(2021)Nadjahi, Durmus, Jacob, Badeau, and Simsekli]{nadjahi2021fast}
K.~Nadjahi, A.~Durmus, P.~E. Jacob, R.~Badeau, and U.~Simsekli.
\newblock Fast approximation of the sliced-wasserstein distance using concentration of random projections.
\newblock \emph{Advances in Neural Information Processing Systems}, 34:\penalty0 12411--12424, 2021.

\bibitem[Nguyen and Ho(2023)]{nguyen2023sliced}
K.~Nguyen and N.~Ho.
\newblock Sliced wasserstein estimation with control variates.
\newblock \emph{arXiv preprint arXiv:2305.00402}, 2023.

\bibitem[Nguyen and Ho(2024)]{nguyen2024energy}
K.~Nguyen and N.~Ho.
\newblock Energy-based sliced wasserstein distance.
\newblock \emph{Advances in Neural Information Processing Systems}, 36, 2024.

\bibitem[Nguyen et~al.(2020)Nguyen, Ho, Pham, and Bui]{nguyen2020distributional}
K.~Nguyen, N.~Ho, T.~Pham, and H.~Bui.
\newblock Distributional sliced-wasserstein and applications to generative modeling.
\newblock \emph{arXiv preprint arXiv:2002.07367}, 2020.

\bibitem[Nguyen et~al.(2023)Nguyen, Bariletto, and Ho]{nguyen2023quasi}
K.~Nguyen, N.~Bariletto, and N.~Ho.
\newblock Quasi-monte carlo for 3d sliced wasserstein.
\newblock \emph{arXiv preprint arXiv:2309.11713}, 2023.

\bibitem[Nguyen et~al.(2024{\natexlab{a}})Nguyen, Ren, and Ho]{nguyen2024markovian}
K.~Nguyen, T.~Ren, and N.~Ho.
\newblock Markovian sliced wasserstein distances: Beyond independent projections.
\newblock \emph{Advances in Neural Information Processing Systems}, 36, 2024{\natexlab{a}}.

\bibitem[Nguyen et~al.(2024{\natexlab{b}})Nguyen, Zhang, Le, and Ho]{nguyen2024sliced}
K.~Nguyen, S.~Zhang, T.~Le, and N.~Ho.
\newblock Sliced wasserstein with random-path projecting directions.
\newblock \emph{arXiv preprint arXiv:2401.15889}, 2024{\natexlab{b}}.

\bibitem[Paty and Cuturi(2019)]{paty2019subspace}
F.-P. Paty and M.~Cuturi.
\newblock Subspace robust wasserstein distances.
\newblock In \emph{International conference on machine learning}, pages 5072--5081. PMLR, 2019.

\bibitem[Peyr{\'e} et~al.(2019)Peyr{\'e}, Cuturi, et~al.]{peyre2019computational}
G.~Peyr{\'e}, M.~Cuturi, et~al.
\newblock Computational optimal transport: With applications to data science.
\newblock \emph{Foundations and Trends{\textregistered} in Machine Learning}, 11\penalty0 (5-6):\penalty0 355--607, 2019.

\bibitem[Pidhorskyi et~al.(2020)Pidhorskyi, Adjeroh, and Doretto]{pidhorskyi2020adversarial}
S.~Pidhorskyi, D.~A. Adjeroh, and G.~Doretto.
\newblock Adversarial latent autoencoders.
\newblock In \emph{Proceedings of the IEEE/CVF Conference on Computer Vision and Pattern Recognition}, pages 14104--14113, 2020.

\bibitem[Pope et~al.(2021)Pope, Zhu, Abdelkader, Goldblum, and Goldstein]{pope2021intrinsic}
P.~Pope, C.~Zhu, A.~Abdelkader, M.~Goldblum, and T.~Goldstein.
\newblock The intrinsic dimension of images and its impact on learning.
\newblock \emph{arXiv preprint arXiv:2104.08894}, 2021.

\bibitem[Quellmalz et~al.(2023)Quellmalz, Beinert, and Steidl]{quellmalz2023sliced}
M.~Quellmalz, R.~Beinert, and G.~Steidl.
\newblock Sliced optimal transport on the sphere.
\newblock \emph{Inverse Problems}, 39\penalty0 (10):\penalty0 105005, 2023.

\bibitem[Rabin et~al.(2011)Rabin, Delon, and Gousseau]{rabin2011transportation}
J.~Rabin, J.~Delon, and Y.~Gousseau.
\newblock Transportation distances on the circle.
\newblock \emph{Journal of Mathematical Imaging and Vision}, 41\penalty0 (1):\penalty0 147--167, 2011.

\bibitem[Rabin et~al.(2012)Rabin, Peyr{\'e}, Delon, and Bernot]{rabin2012wasserstein}
J.~Rabin, G.~Peyr{\'e}, J.~Delon, and M.~Bernot.
\newblock Wasserstein barycenter and its application to texture mixing.
\newblock In \emph{Scale Space and Variational Methods in Computer Vision: Third International Conference, SSVM 2011, Ein-Gedi, Israel, May 29--June 2, 2011, Revised Selected Papers 3}, pages 435--446. Springer, 2012.

\bibitem[Reeves(2017)]{reeves2017conditional}
G.~Reeves.
\newblock Conditional central limit theorems for gaussian projections.
\newblock In \emph{2017 IEEE International Symposium on Information Theory (ISIT)}, pages 3045--3049. IEEE, 2017.

\bibitem[Rombach et~al.(2022)Rombach, Blattmann, Lorenz, Esser, and Ommer]{rombach2022high}
R.~Rombach, A.~Blattmann, D.~Lorenz, P.~Esser, and B.~Ommer.
\newblock High-resolution image synthesis with latent diffusion models.
\newblock In \emph{Proceedings of the IEEE/CVF conference on computer vision and pattern recognition}, pages 10684--10695, 2022.

\bibitem[Sudakov(1978)]{sudakov1978typical}
V.~N. Sudakov.
\newblock Typical distributions of linear functionals in finite-dimensional spaces of higher dimension.
\newblock In \emph{Doklady Akademii Nauk}, volume 243, pages 1402--1405. Russian Academy of Sciences, 1978.

\bibitem[Tran et~al.(2024)Tran, Bai, Kothapalli, Shahbazi, Liu, Martin, and Kolouri]{tran2024stereographic}
H.~Tran, Y.~Bai, A.~Kothapalli, A.~Shahbazi, X.~Liu, R.~P.~D. Martin, and S.~Kolouri.
\newblock Stereographic spherical sliced wasserstein distances.
\newblock In \emph{Forty-first International Conference on Machine Learning}, 2024.

\bibitem[Villani et~al.(2009)]{villani2009optimal}
C.~Villani et~al.
\newblock \emph{Optimal transport: old and new}, volume 338.
\newblock Springer, 2009.

\bibitem[Wu et~al.(2019)Wu, Huang, Acharya, Li, Thoma, Paudel, and Gool]{wu2019sliced}
J.~Wu, Z.~Huang, D.~Acharya, W.~Li, J.~Thoma, D.~P. Paudel, and L.~V. Gool.
\newblock Sliced wasserstein generative models.
\newblock In \emph{Proceedings of the IEEE/CVF Conference on Computer Vision and Pattern Recognition}, pages 3713--3722, 2019.

\end{thebibliography}

\newpage

\appendix
\section{Proofs and Additional Theoretical Results}
\label{sec:proofs}

\subsection{Notation}

\begin{itemize}
\item $\mathbb{R}^d$: d-dimensional Euclidean space, where $d$ is a positive integer. 
\item $\mathbb{S}^{d-1}:=\{x\in \mathbb{R}^d: \|x\|=1\}$: unit sphere defined in $\mathbb{R}^d$. 
\item $\mathcal{P}(\mathbb{R}^d)$: set of all probability measures defined on $\mathbb{R}^d$. 
\item $\mathcal{P}_p(\mathbb{R}^d)$: set of probability measures whose $p$-th moment is finite, where $p\ge 1$. 
\item $\mathbb{V}_{k,d}$: set of all $d\times k$ orthogonal matrices, i.e. 
$$\mathbb{V}_{k,d}:=\{U\in \mathbb{R}^{d\times k}: U^\top U =I_k\}.$$
Note, $\mathbb{S}^{d-1}=\mathbb{V}_{1,d}$. 
\item $U=[U[:,1],U[:,2],\ldots U[:,k]]\in \mathbb{V}_{k,d}$: an orthogonal matrix. For each $i\in [1:k]$, $U[:,i]\in \mathbb{R}^d$ is the $i$-th column of $U$.  

Note that $U$ induces a linear function from $\mathbb{R}^d$ to $\mathbb{R}^k$, i.e. $x\mapsto U^\top x$. With abuse of notation, we do not distinguish the matrix $U$ and the corresponding linear mapping. 

\item $\text{Span}(U)$: The linear subspace spanned by $U$, i.e. 
$$ \text{Span}(U):=\text{Span}(\{U[:,1],U[:,2],\ldots U[:,k]\})=\left\{\sum_{i=1}^k\alpha_i U[:,i]: \alpha_i\in \mathbb{R}\right\}.
$$

\item $V_k\subset \mathbb{R}^d$: a $k$-dimensional subspace, where $k$ is a positive integer with $k\leq d$. Note, by classical linear algebra theory, we have 
$$V_k=\text{Span}(U)$$ for some $U\in \mathbb{V}_{d,k}$. \textbf{Note, given $V_k$, $U$ is not uniquely determined.}

\item $V_k^\perp$: perpendicular complement of $V_k$, which is a subspace of dimension $d-k$. 
\item $\mu^d,\mu,\nu^d,\nu \in \mathcal{P}(\mathbb{R}^d)$: probability measures in $d$-dimensional space. 

\item $\mathcal{L}^d$: Lebesgue measure in $\mathbb{R}^d$. 
\item $C_0(\mathbb{R}^d)$: set of all continuous functions defined on $\mathbb{R}^d$ which vanish at infinity. 
\item $f_\mu=\frac{d\mu^d}{d\mathcal{L}^d}$: density of $\mu$, that is, for all test functions $\phi\in C_0(\mathbb{R}^d)$:
$$\int_{\mathbb{R}^d}\phi(x) d\mu^d(x)=\int_{\mathbb{R}^d}f_\mu(x) \phi(x)dx.$$
\item $X\sim \mu$: A random variable/vector $X$ following distribution $\mu$. We say $X$ is a \textbf{realization} of $\mu$. 

\item $\mathbb{E}[X]:=\mathbb{E}[\mu]$, where $X\sim \mu$: expected value of $X$, i.e. $$\mathbb{E}_{\mu}[X]=\int_{\mathbb{R}^d} x d\mu(x).$$
\item $m_k(\mu)$: $k$-th moment of measure $\mu$. That is, given realization $X\sim \mu$, $m_k(\mu)$ is defined by  
$$m_k(\mu):=\mathbb{E}[X^k]$$

\item $Var(X):=\mathbb{E}[(X-\mathbb{E}(X))^\top (X-\mathbb{E}(X))]$: the covariance matrix of $X$ (or the measure $\mu$).

\item $T_\# \mu$, where $T: \mathbb{R}^d\to \mathbb{R}^d$ is a function: push-forward measure $\mu$ under mapping $T$. That is, for all Borel sets $A\subset \mathbb{R}^d$, we have 
$$T_\#\mu(A)=\mu(T^{-1}(A)).$$
Equivalently speaking, suppose $X\sim \mu$ is a realization of $\mu$, then $T(X)\sim T_\#\mu$.

\item $\mathcal{N}(e,\Sigma)$: Gaussian distribution, where $e\in \mathbb{R}^d$ is the expected value, $\Sigma\in \mathbb{R}^{d\times d}$ is the covariance matrix. 
\item $0_d$: $d\times 1$ vector where each entry is $0$. Similarly, we define $1_d$. 
\item $I_d$: $d\times d$ identity matrix. 
\item $\mathcal{U}(\mathbb{S}^{d-1})$: Uniform distribution defined on $\mathbb{S}^{d-1}$. 
\item $\theta^d\sim \mathcal{U}(\mathbb{S}^d)$: a $d$-dimensional random vector. We say $\theta^d$ is a \textbf{realization} of $\mathcal{U}(\mathbb{S}^d)$. 
\item $\theta,\theta^d,\theta^g$: a $d$-dimensional vector.
\item $\theta^k$: a $k$-dimensional vector. 

\item $P_{V_k}:=P_{U}$, where $V_k=\text{Span}(U)$: the projection mapping from $\mathbb{R}^d$ into subspace $V_k$, i.e. 
$$P_{V_k}(x):=P_U(x)=U U^\top x,\forall x\in \mathbb{R}^d.$$

Note, in this case: the mapping
$U: \mathbb{R}^d\to \mathbb{R}^k$ with 
$x\mapsto U^\top x$
is the corresponding parameterization function of projection $P_{U}$.

\item $\Gamma(\mu,\nu)$: set of joint measures whose marginals are $\mu,\nu$ respectively: 

$$\Gamma(\mu,\nu):=\{\gamma\in \mathcal{P}((\mathbb{R}^d)^2):(\pi_1)_\# \gamma=\mu, (\pi_2)_\# \gamma=\nu \},$$
where $\pi_1:(x,y)\mapsto x, \pi_2: (x,y)\mapsto y$ are canonical projection mappings. 

\item $W_p^p(\mu,\nu)$: Wasserstein problem between $\mu$ and $\nu$: 
\begin{align}
W_p^p(\mu,\nu):= \inf_{\gamma\in\Gamma(\mu,\nu)} \int_{(\mathbb{R}^d)^2}\|x-y\|^p d\gamma(x,y)\nonumber 
\end{align}
\item $SW(\mu,\nu;\sigma)$, where 
$\sigma\in\mathcal{P}(\mathbb{S}^{d-1})$: 
Sliced Wasserstein problem between $\mu$ and $\nu$ with respect to reference measure $\sigma$: 
\begin{align}
SW_p^p(\mu,\nu;\sigma):=\int_{\mathbb{S}^{d-1}} W_p^p(\theta_\# \mu,\theta_\# \nu) d\sigma(\theta) \nonumber 
\end{align}
\item $\phi_U: \mathbb{S}^{d-1}\to \mathbb{R}_+$: ES-informative aligned mapping. A measurable mapping which describes the information of the projected $\theta$ on the space spanned by $U$. 

\item $\widetilde{SW}(\mu,\nu;\sigma,\rho)$: rescaled sliced Wasserstein distance: 

\begin{align}
\widetilde{SW}(\mu,\nu;\sigma,\rho):=\int_{\mathbb{S}^{d-1}} r(\phi_U(\theta))W_p^p(\theta_\#\mu,\theta_\#\nu) d\sigma(\theta) \nonumber 
\end{align}
where $\rho: \mathbb{R}_+\to \mathbb{R}_+$ is a recalling function. In this paper, we set $\rho$ as the following decreasing function: 
\begin{align}
    \rho(x)= \frac{1}{x^p} \nonumber 
\end{align}
When $x=0$, we adopt the convention $\rho(x)=0$. 

\end{itemize}

\begin{remark}
In this paper, we adopt the following convention. 

We do not distinguish the scalar/vector/matrix and the corresponding induced linear mapping.
For example, $\theta\in \mathbb{R}^{d}$, induces the mapping 
$$\mathbb{R}^d\ni x\mapsto \theta^\top x\in \mathbb{R}.$$

\begin{itemize}
    \item When $\theta$ is a random vector, we refer to it as a ``random projection mapping'' in both the main text and the appendix. We adopt the same convention for the scalar notation $\alpha$ and the matrix notation $U$.

\item  We use $\theta_\#\mu$ to denote the push-forward measure induced by mapping $x\mapsto \theta^\top x$. Similarly, $(\theta\times \theta)_\# \gamma$ denotes the push-forward measure of joint measure $\gamma\in \mathcal{P}((\mathbb{R}^d)^2)$ induced by mapping $(x,x')\mapsto (\theta^\top x,\theta^\top x')$. The same convention is adopted for $\alpha,U$. 

\end{itemize}
    
\end{remark}
\begin{remark}
For simplicity, in notation $SW(\mu,\nu;\sigma)$, we may relax the restriction that $\sigma$ is a probability measure. We allow $\sigma$ to be a finite positive measure in the main text and appendix. 
\end{remark}



\subsection{Wasserstein distances in $\mathbb{R}^d$ and $\mathbb{R}^k$}\label{sec:w_d_k}

In this article, we assume the probability measures $\mu^d,\nu^d \in \mathcal{P}_p(\mathbb{R}^d)$ are supported in a lower dimensional subspace. We refer to Assumption \ref{asp:lowdsupp} for details.

Let $P_U$ denote the projection mapping from $\mathbb{R}^d$ to $V_{k}$: 
\begin{align}
P_U(x)=UU^\top x,\forall x\in \mathbb{R}^d \label{eq:proj_U}, 
\end{align}
Then, the corresponding lower-dimensional parameterization mapping is defined as: 
\begin{align}
x\mapsto U^\top x,\forall x\in \mathbb{R}^d\label{eq:U}.
\end{align}

By classical linear algebra theory, it is straightforward to verify the following: 
\begin{proposition}\label{pro:projection}[Basic properties of linear projection]
Let $P_U,U$ be defined above, then we have:
\begin{enumerate}
\item[(1)] For each $\theta \in \mathbb{R}^d$, $\theta$ can be uniquely decomposed into $V_k,V_k^\perp$, i.e. 
$\theta= \theta_{V_{k}}+\theta_{V_{k}^\perp}$, 
where $\theta_{V_k}=P_U(\theta)\in V_k, \theta_{V_k^\perp}\in V_k^\perp$. 
\item[(2)] For all $x\in V_k$, $P_U(x)=x$. 
\item[(3)] If we restrict $U$ to space $V_k$, denoted as $U\mid_{V_k}$, then $U\mid_{V_k}:V^k\to \mathbb{R}^k$ is a bijection. The inverse is given by 
$$(U)^{-1}(y)=Uy,\forall y\in \mathbb{R}^k.$$

In addition, $\|U(x)\|=\|x\|,\forall x\in V_k$. 
\end{enumerate}
\end{proposition}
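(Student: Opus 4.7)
The plan is to dispatch this proposition by invoking standard facts about semi-orthogonal matrices and orthogonal decomposition, using only the defining relation $U^\top U = I_k$.

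For part (1), I would appeal to the orthogonal decomposition theorem: since $V_k$ is a finite-dimensional subspace of $\mathbb{R}^d$, every $\theta \in \mathbb{R}^d$ admits a unique decomposition $\theta = a + b$ with $a \in V_k$ and $b \in V_k^\perp$. To identify $a$ with $P_U(\theta) = UU^\top \theta$, I would verify two things: first, $P_U(\theta) \in V_k$, since it is a linear combination of the columns of $U$; second, $\theta - P_U(\theta) \in V_k^\perp$, which follows by computing $U^\top(\theta - UU^\top \theta) = U^\top \theta - (U^\top U) U^\top \theta = U^\top \theta - U^\top \theta = 0_k$, showing that this residual is orthogonal to every column of $U$ and hence to all of $V_k$. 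Uniqueness of the decomposition then forces $\theta_{V_k} = P_U(\theta)$ and $\theta_{V_k^\perp} = \theta - P_U(\theta)$.

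For part (2), any $x \in V_k = \text{Span}(U)$ can be written as $x = U\alpha$ for some $\alpha \in \mathbb{R}^k$. Then
\begin{equation}
P_U(x) = UU^\top(U\alpha) = U(U^\top U)\alpha = U I_k \alpha = U\alpha = x,
\end{equation}
which is the required idempotence on $V_k$.

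For part (3), I would check the proposed inverse on both sides. If $y \in \mathbb{R}^k$ and we set $x := Uy \in V_k$, then $U^\top x = U^\top U y = y$, which gives the right inverse identity. Conversely, for $x \in V_k$, the computation $U(U^\top x) = UU^\top x = P_U(x) = x$ uses part (2) and gives the left inverse identity, establishing that $U\mid_{V_k}$ is a bijection with the claimed inverse. For the isometry claim, I would compute
\begin{equation}
\|U^\top x\|^2 = (U^\top x)^\top (U^\top x) = x^\top U U^\top x = x^\top P_U(x) = x^\top x = \|x\|^2
\end{equation}
for any $x \in V_k$, again invoking part (2) at the penultimate step.

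Honestly, there is no real obstacle here; the proposition is a direct consequence of $U^\top U = I_k$ together with the orthogonal decomposition theorem. The only mildly delicate point is being careful about the distinction between the matrix $U$ acting as $\mathbb{R}^k \to \mathbb{R}^d$ (the inverse direction) and the map $U\mid_{V_k}$ acting as $V_k \to \mathbb{R}^k$ via $x \mapsto U^\top x$; the paper's convention of conflating a matrix with both induced linear maps makes the notation slightly ambiguous, so I would state the domains and codomains explicitly at the start to avoid confusion.
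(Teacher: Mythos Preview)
Your proposal is correct and is exactly the unpacking of what the paper means by ``It follows directly from the definitions of $P_U,U$,'' which is the entirety of the paper's proof. There is no substantive difference in approach; you have simply written out the routine verifications that the paper omits.
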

\begin{proof}
It follows directly from the definitions of $P_U,U$.
\end{proof}

Let $\mu^k=(U)_\#\mu^d,\nu^k=(U)_\#\nu^d$, the above proposition directly induces the following relation between the Wasserstein distance between $\mu^d,\nu^d$ and the Wasserstein distance between $\mu^k,\nu^k$. 
\begin{proposition}\label{pro:theta_w_d_k}
Under assumption \ref{asp:lowdsupp}, we have the following: 
\begin{enumerate}
\item[(1)] $\mu^d$ can be recovered by the inverse of $U\mid_{V_k}$, i.e. 
$$\mu^d=U^\top_\# \mu^k.$$
\item[(2)] The mapping 
    \begin{align}
  \Gamma(\mu^d,\nu^d)\ni \gamma^d\mapsto\gamma^k:= (U\times U)_\#\gamma^d \in \Gamma(\mu^k,\nu^k), \label{eq:bijection}
\end{align}

is a well-defined bijection, where $U\times U$ is defined as 
\begin{align}
\mathbb{R}^d\times \mathbb{R}^d \ni (x,x')\mapsto U\times U((x,x'))=(U(x),U(x'))\in \mathbb{R}^k\times \mathbb{R}^k. \label{eq:bijection_2}
\end{align}
\item[(3)]
The Wasserstein distance is preserved via the lower-dimensional parameterization: 
\begin{align}
W_p^p(\mu^d,\nu^d)=W_p^p((P_U)_\#\mu^d,(P_U)_\#\nu^d)=W_p^p(\mu^k,\nu^k)\label{eq:W_d_k}
\end{align}

\end{enumerate}

\end{proposition}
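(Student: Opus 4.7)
The plan is to exploit the fact that, under Assumption~\ref{asp:lowdsupp}, both $\mu^d$ and $\nu^d$ are supported on the $k$-dimensional subspace $V_k$, on which the parameterization mapping $U|_{V_k}$ is a linear isometry onto $\mathbb{R}^k$ by Proposition~\ref{pro:projection}(3). Once this isometry is in hand, all three claims follow by transporting measures and couplings back and forth between $V_k$ and $\mathbb{R}^k$.

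For part (1), I would argue that since $\text{supp}(\mu^d)\subset V_k$ and $U|_{V_k}$ is a bijection from $V_k$ to $\mathbb{R}^k$ with inverse $y\mapsto Uy$, pushing forward $\mu^k=U_\#\mu^d$ through this inverse recovers $\mu^d$. Formally, one checks for every Borel set $A\subset\mathbb{R}^d$ that $\mu^d(A)=\mu^d(A\cap V_k)$ by Assumption~\ref{asp:lowdsupp}(3), and then uses the bijectivity of $U|_{V_k}$ together with the definition of the pushforward to rewrite this as $\mu^k(\{y\in\mathbb{R}^k:Uy\in A\})$, which is exactly the pushforward of $\mu^k$ under $y\mapsto Uy$ evaluated at $A$.

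For part (2), I would use part (1) to verify that the proposed mapping sends couplings to couplings: if $\gamma^d\in\Gamma(\mu^d,\nu^d)$, then the marginals of $(U\times U)_\#\gamma^d$ are $U_\#\mu^d=\mu^k$ and $U_\#\nu^d=\nu^k$, so it lies in $\Gamma(\mu^k,\nu^k)$. Bijectivity is then established by exhibiting an explicit inverse, namely $\gamma^k\mapsto (U^{-1}\times U^{-1})_\#\gamma^k$ where $U^{-1}$ denotes the restriction-inverse $y\mapsto Uy:\mathbb{R}^k\to V_k$, and checking that the two compositions are the identity using part (1) together with the fact that all measures are concentrated on $V_k$ (resp.\ $\mathbb{R}^k$).

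For part (3), the key observation is that $U|_{V_k}$ preserves norms, so for any $x,x'\in V_k$ one has $\|x-x'\|=\|Ux-Ux'\|$. Combined with the bijection in part (2), this gives
\begin{align*}
W_p^p(\mu^d,\nu^d)
&=\inf_{\gamma^d\in\Gamma(\mu^d,\nu^d)}\int_{V_k\times V_k}\|x-x'\|^p\,d\gamma^d(x,x')\\
&=\inf_{\gamma^d\in\Gamma(\mu^d,\nu^d)}\int_{\mathbb{R}^k\times\mathbb{R}^k}\|y-y'\|^p\,d(U\times U)_\#\gamma^d(y,y')\\
&=\inf_{\gamma^k\in\Gamma(\mu^k,\nu^k)}\int_{\mathbb{R}^k\times\mathbb{R}^k}\|y-y'\|^p\,d\gamma^k(y,y')=W_p^p(\mu^k,\nu^k).
\end{align*}
The equality with $W_p^p((P_U)_\#\mu^d,(P_U)_\#\nu^d)$ is immediate since $P_U$ acts as the identity on $V_k$ by Proposition~\ref{pro:projection}(2), so $(P_U)_\#\mu^d=\mu^d$ and similarly for $\nu^d$. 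The main (but still mild) obstacle is the bookkeeping in part (2), where one must be careful that the inverse map is well-defined only on $V_k$ and that all couplings of $(\mu^d,\nu^d)$ are genuinely supported on $V_k\times V_k$; this last point follows because their marginals are concentrated on $V_k$.
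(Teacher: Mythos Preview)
Your proposal is correct and follows essentially the same route as the paper's proof: both rely on Proposition~\ref{pro:projection} to identify $U|_{V_k}$ as a norm-preserving bijection, use it in part~(1) to invert the pushforward, build the coupling bijection in part~(2) via the explicit inverse $y\mapsto Uy$, and deduce~(3) from cost preservation together with that bijection. The only cosmetic difference is that the paper phrases~(1) and the composition check in~(2) via random-variable realizations rather than Borel sets, and it verifies only one of the two compositions explicitly; your bookkeeping remark about couplings being supported on $V_k\times V_k$ is a welcome addition that the paper leaves implicit.
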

\begin{proof}
Let $X\sim \mu^d$ be a realization. 
\begin{enumerate}
    \item[(1)]  We have 
    $U^\top X\sim \mu^k$ since $\mu^k=(U)_\# \mu^d$. 
    In addition, by assumption \eqref{asp:lowdsupp}, we have $X=UU^\top X$, thus $UU^\top X\sim \mu^d$. That is $U_\#^\top \mu^k=\mu^d$.

    \item [(2)] 
Pick $\gamma^d\in\Gamma(\mu^d,\nu^d)$, we have 
\begin{align}
(\pi_1)_\# (U\times U)_\# \gamma^d= (U)_\# ((\pi_{1})_\#\gamma^d)=(U)_\#\mu^d=\mu^k\nonumber
\end{align}
Similarly, $(\pi_2)_\#(U\times U)_\#\gamma^d=\nu^k$. Thus
the mapping defined in \eqref{eq:bijection} is well-defined. Moreover, from statement (1), we have 
\begin{align}
  \Gamma(\mu^k,\nu^k)\ni \gamma^k \mapsto (U^\top\times U^{\top})_\# \gamma^k \in \Gamma(\mu^d,\nu^d)\label{eq:inverse}  
\end{align}
is well-defined. 

Next, we will show the above mapping is the inverse of \eqref{eq:bijection}. 

Let $(X,Y)\sim \gamma^d$ be a realization. Then 
$$(X,Y)=(UU^\top X,UU^\top Y)\sim (U^\top\times U^{\top})_\# (U\times U)_\# \gamma.$$
Thus $(U^\top \times U^\top)_\#(U\times U)_\# \gamma=\gamma$. Thus, the mapping \eqref{eq:inverse} is inverse of the mapping \eqref{eq:bijection}. Thus, \eqref{eq:bijection} is invertible/bijection.

\item[(3)] By Proposition \ref{pro:projection} (2), for each $x\in \text{supp}(\mu)\subset V_k$, we have $P_U(x)=x$, thus $(P_U)_\#\mu^d=\mu^d$. Similarly, $(P_U)_\#\nu^d=\nu^d$. 
Thus we obtain the first equality: 
$$W_p^p(\mu^d,\nu^d)=W_p^p((P_U)_\#\mu^d,(P_U)_\#\nu^d).$$

For the second equality, we first pick $\gamma^d\in\Gamma(\mu^d,\nu^d)$ and let $\gamma^k=(U\times U)_\#\gamma^d$. By statement (2), we have $ \gamma^k\in\Gamma(\mu^k,\nu^k)$. 
\begin{align}
&\int_{(\mathbb{R}^d)^2} \|x-y\|^p d\gamma^d (x,y) \nonumber\\
&=\int_{(\mathbb{R}^d)^2} \|U(x)-U(y)\|^p d \gamma^d (x,y)  \nonumber\\
&=\int_{(\mathbb{R}^k)^2} \|x'-y'\|^p d (U^\top\times U^\top)_\# \gamma^d (x',y')\nonumber\\
&= \int_{(\mathbb{R}^k)^2} \|x'-y'\|^p d \gamma^k(x',y') \nonumber 
\end{align}
where the first equality follows from Proposition \ref{pro:projection} (3), the second equality follows from the definition of push-forward measure, the third equality holds from statement (2). 

Combining the above equality with statement (2), we obtain 
\begin{align}
W_p^p(\mu^d,\nu^d)&
=\inf_{\gamma^d \in \Gamma (\mu^d,\nu^d)}\int_{\mathbb{R}^d}\|x-y\|^p d\gamma^d(x,y)\nonumber\\
&=\inf_{\gamma^k\in\Gamma(\mu^k,\nu^k)}\int_{\mathbb{R}^k}\|x'-y'\|^p d\gamma^k(x',y')\nonumber\\
&=W_p^p(\mu^k,\nu^k)\nonumber 
\end{align}
\end{enumerate}
\end{proof}

\subsection{Background: Relationship between the Gaussian and Spherical Uniform Distribution} 
In this section, we introduce basic properties of multivariate Gaussian and the relation between Gaussian and spherical uniform distribution.  

First we consider 1D space $\mathbb{R}$, choose $e\in \mathbb{R}$ and $\sigma>0$, the Gaussian distribution, denoted as $\mathcal{N}(e,\sigma^2)$, is the probability measure whose density is defined by 
$$f(x):=\frac{1}{\sqrt{2\pi \sigma^2}}e^{-\frac{(x-e)^2}{2\sigma^2}},$$
where $e,\sigma^2$ are the expected value and variance of $X$ respectively.

When $e=0,\sigma^2=1$, the induced measure is called standard (1D) Gaussian distribution, whose density is given by 
\begin{align}
f(x):=\frac{1}{\sqrt{2\pi}}e^{-\frac{x^2}{2}}\label{eq:Sd_Gaussian_1d}    
\end{align}

In space $\mathbb{R}^d$, the above density function can be generalized as: 
\begin{align}
 f(x):=\frac{1}{(2\pi)^{d/2}}e^{-\frac{\|x\|^2}{2}}\label{eq:Sd_Gaussian}   
\end{align}
and the induced distribution is called \textbf{$d$-dimensional Standard Gaussian distribution}.

Given $e\in \mathbb{R}^d$ and positive definite $d\times d$ matrix, $\Sigma=AA^T$ where $A\in \mathbb{R}^{d\times k}$, the Gaussian distribution is denoted as $\mathcal{N}(e,\Sigma)$, can be defined by the following well-known proposition:  
\begin{proposition}[Definition of Gaussian distribution]\label{pro:Gaussian_def}
Let $X\sim \mathcal{N}(e,\Sigma)$ be a realization, then the following are equivalent: 
\begin{itemize}
\item $\mathcal{N}(e,\Sigma)$ is Gaussian distribution, with expected value $e$ and covariance matrix $\Sigma$. 
\item $X=AG+e$, where $G\sim \mathcal{N}(0,I_d)$, whose density is defined by \eqref{eq:Sd_Gaussian}.  
\item $\forall \theta \in \mathbb{R}^d$, $\theta^\top X$ is a 1D Gaussian variable: 
$$\theta^\top X\sim \mathcal{N}(\theta^\top e, (\theta^\top A)^\top (\theta^\top A)).$$  
\end{itemize}    
\end{proposition}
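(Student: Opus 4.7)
The plan is to establish a cycle of implications $(2) \Rightarrow (3) \Rightarrow (1) \Rightarrow (2)$, since this avoids having to characterize $\mathcal{N}(e,\Sigma)$ abstractly at the outset and lets the linear construction in $(2)$ serve as the working definition. The two tools I would rely on are linearity of expectation together with bilinearity of covariance, and the Cramér--Wold device, which says that a probability measure on $\mathbb{R}^d$ is uniquely determined by the family of laws of its one-dimensional projections $\{\theta^\top X : \theta \in \mathbb{R}^d\}$ via the characteristic function.

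First I would prove $(2) \Rightarrow (3)$. Assuming $X = AG + e$ with $G$ standard Gaussian of the appropriate dimension, write
\[
\theta^\top X \;=\; \theta^\top A\, G + \theta^\top e \;=\; \sum_j (A^\top \theta)_j\, G_j + \theta^\top e.
\]
Since the $G_j$ are i.i.d.\ standard normals, the sum is Gaussian by closure of the 1D Gaussian family under linear combinations, and its mean and variance follow by linearity, yielding exactly $\mathcal{N}(\theta^\top e,(\theta^\top A)(\theta^\top A)^\top)$. Next I would prove $(3) \Rightarrow (1)$: from Gaussianity of every $\theta^\top X$ and the stated mean/variance, read off $\mathbb{E}[X]=e$ and $\mathrm{Cov}(X)=AA^\top=\Sigma$ by specializing $\theta$ to basis vectors and their pairwise sums; the characteristic function of $X$ evaluated at $\theta$ equals the characteristic function of the 1D Gaussian $\theta^\top X$ at $1$, which gives the canonical form $\exp\!\bigl(i\theta^\top e - \tfrac{1}{2}\theta^\top \Sigma\, \theta\bigr)$, and Cramér--Wold forces the law of $X$ to be the unique measure with that characteristic function, namely $\mathcal{N}(e,\Sigma)$.

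Finally, $(1) \Rightarrow (2)$ is by explicit construction. Let $G$ be sampled from the standard Gaussian density \eqref{eq:Sd_Gaussian} and set $Y = AG + e$. Applying the chain $(2) \Rightarrow (3) \Rightarrow (1)$ already established, $Y \sim \mathcal{N}(e,\Sigma)$. By the uniqueness of the Gaussian with prescribed mean and covariance (established during $(3) \Rightarrow (1)$ via characteristic functions), $X$ and $Y$ have the same law, which gives the representation.

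The main obstacle is the degenerate case where $\Sigma$ has rank $k < d$, so that $\mathcal{N}(e,\Sigma)$ has no Lebesgue density on $\mathbb{R}^d$ and is instead supported on the affine subspace $e + \mathrm{Col}(A)$; this is precisely the regime relevant to Assumption~\ref{asp:lowdsupp}. Working with characteristic functions, rather than densities, makes the degenerate and non-degenerate cases uniform, and the factorization $\Sigma = AA^\top$ together with the representation $X = AG + e$ furnishes a clean definition insensitive to whether $A$ is square or tall. A secondary care point in $(3) \Rightarrow (1)$ is being explicit that the family of 1D Gaussian laws assumed in $(3)$ is consistent with a unique joint law, which is exactly the content of Cramér--Wold combined with the explicit Gaussian characteristic function.
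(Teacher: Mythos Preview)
Your proof plan is sound and handles the degenerate case cleanly via characteristic functions, which is the right move given that the paper's own Assumption~\ref{asp:lowdsupp} concerns measures supported on proper subspaces. However, the paper does not actually prove Proposition~\ref{pro:Gaussian_def}: it is introduced as ``the following well-known proposition'' and used purely as a definition of the multivariate Gaussian, with no argument supplied. So there is no paper proof to compare against; your cycle $(2)\Rightarrow(3)\Rightarrow(1)\Rightarrow(2)$ via Cram\'er--Wold is a standard and correct way to fill in what the authors left as background.

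One minor point worth tightening in your write-up: in the step $(1)\Rightarrow(2)$ you invoke ``uniqueness of the Gaussian with prescribed mean and covariance,'' but since the proposition is itself serving as the definition of $\mathcal{N}(e,\Sigma)$, item~(1) has no independent content until the equivalence is established. Your opening remark that you take~(2) as the working definition resolves this, but then $(1)\Rightarrow(2)$ is really just the observation that~(1), read as ``$X$ has the characteristic function $\exp(i\theta^\top e - \tfrac{1}{2}\theta^\top\Sigma\theta)$,'' coincides with the law of $AG+e$---which you have already shown in the chain $(2)\Rightarrow(3)\Rightarrow(1)$. So the loop closes trivially once you are explicit about which item anchors the definition.
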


From the proposition, it is straightforward to verify the following: 
\begin{proposition}[Basic properties of Gaussian distribution]\label{pro:Gaussian}
Suppose $X\sim \mathcal{N}(e,\Sigma)$, then we have: 
\begin{enumerate}
    \item [(1)]If $\text{rank}(\Sigma)=d$, then $\mathcal{N}(e,\Sigma)$ admits the density function: 
    \begin{align}
    f(x)=\frac{1}{(2\pi)^{d/2}\text{det}(\Sigma)^{1/2}}e^{-\frac{(x-e)^T\Sigma^{-1}(x-e)}{2}} \nonumber 
    \end{align}
    \item[(2)] Choose $B\in \mathbb{R}^{d\times k},\beta\in \mathbb{R}^k$, and let $T_{B,e,\beta}(x):=B(x-e)+\beta$, then we have  
    $$B(X-e)+\beta \sim (T_{B,e,\beta})_\# \mathcal{N}(e,\Sigma)=\mathcal{N}(\beta, B^\top \Sigma B ).$$
    \item[(3)] Suppose $Z\sim \mathcal{N}(0,I_d)$, then the absolute $p$-th power of $Z$ is given by 
    $$\mathbb{E}[\|Z\|^p]=2^{p/2}\frac{\Gamma(\frac{p+d}{2})}{\Gamma(d/2)}.$$
    \item[(4)] Suppose $Z\sim \mathcal{N}(0,I_d)$,  then $r=\|Z\|, \theta= \frac{Z}{\|Z\|}$ are independent. 
\end{enumerate}
\end{proposition}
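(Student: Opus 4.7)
The plan is to derive each of the four properties from the characterization of Gaussians in Proposition \ref{pro:Gaussian_def}, using the representation $X = AG + e$ with $G \sim \mathcal{N}(0, I_d)$ and $\Sigma = AA^\top$. For (1), I would apply the change of variables formula to the standard density in \eqref{eq:Sd_Gaussian}. When $\text{rank}(\Sigma) = d$, one may take $A$ invertible (for instance $A = \Sigma^{1/2}$), so $g \mapsto Ag + e$ is a diffeomorphism with Jacobian determinant $|\det A|$. Combined with $\det \Sigma = (\det A)^2$ and the identity $(A^{-1})^\top A^{-1} = (AA^\top)^{-1} = \Sigma^{-1}$, the pushforward density of $X$ simplifies to the stated Gaussian density. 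For (2), I would verify the projection characterization in Proposition \ref{pro:Gaussian_def}(3) for $Y := B(X-e)+\beta$: for any $\eta \in \mathbb{R}^k$, $\eta^\top Y = (B^\top \eta)^\top (X-e) + \eta^\top \beta$, and applying the third characterization to $X$ at the direction $B^\top \eta$ gives that this is 1D Gaussian with mean $\eta^\top \beta$ and variance $\eta^\top (B^\top \Sigma B)\eta$. Since this holds for every $\eta$, $Y$ is Gaussian with the claimed parameters.

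For (3), I would use the fact that coordinates of $Z$ are i.i.d.\ $\mathcal{N}(0,1)$, so $\|Z\|^2$ follows a $\chi^2_d$ distribution with density $\frac{y^{d/2-1}e^{-y/2}}{2^{d/2}\Gamma(d/2)}$ on $(0,\infty)$. Then
\begin{equation*}
\mathbb{E}[\|Z\|^p] = \mathbb{E}\bigl[(\|Z\|^2)^{p/2}\bigr] = \int_0^\infty y^{p/2}\,\frac{y^{d/2-1}e^{-y/2}}{2^{d/2}\Gamma(d/2)}\,dy,
\end{equation*}
and the substitution $u = y/2$ turns the integrand into a Gamma integral $\int_0^\infty u^{(d+p)/2 - 1} e^{-u}\,du = \Gamma((d+p)/2)$, yielding $2^{p/2}\Gamma((d+p)/2)/\Gamma(d/2)$ after cancellation of the $2$-powers.

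For (4), I would use spherical symmetry: the density in \eqref{eq:Sd_Gaussian} depends on $z$ only through $\|z\|$. Applying the polar change of variables $z = r\theta$ on $\mathbb{R}^d \setminus \{0\}$ with $(r,\theta)\in (0,\infty)\times \mathbb{S}^{d-1}$ and Jacobian factor $r^{d-1}$, the joint density of $(r,\theta)$ factors into a function of $r$ alone (proportional to $r^{d-1}e^{-r^2/2}$) times a constant function on $\mathbb{S}^{d-1}$. By the factorization criterion for independence, $r$ and $\theta$ are independent, with $\theta$ uniform on $\mathbb{S}^{d-1}$ and $r$ following a chi distribution with $d$ degrees of freedom. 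The main obstacle is in (4): one must carefully set up the polar decomposition on $\mathbb{R}^d \setminus \{0\}$ (a null-set omission that does not affect the law) and identify the constant-density marginal on $\mathbb{S}^{d-1}$ with the uniform probability measure $\mathcal{U}(\mathbb{S}^{d-1})$ via the correct surface-measure normalization $|\mathbb{S}^{d-1}| = 2\pi^{d/2}/\Gamma(d/2)$, confirming consistency with the $\chi$-density obtained for the radial marginal.
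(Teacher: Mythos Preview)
Your proposal is correct, and in fact supplies more than the paper does: the paper does not give a proof of this proposition at all. It simply states, immediately after Proposition~\ref{pro:Gaussian_def}, that ``From the proposition, it is straightforward to verify the following,'' and then lists the four properties without further argument. Your outline---change of variables for (1), the one-dimensional projection characterization for (2), the $\chi^2_d$ moment computation for (3), and the polar factorization of the isotropic density for (4)---is a standard and complete route to these facts, fully consistent with the paper's intended derivation from Proposition~\ref{pro:Gaussian_def}. One minor caution on (2): the paper's dimension convention for $B\in\mathbb{R}^{d\times k}$ (with the induced map $x\mapsto B^\top x$) differs from the usual matrix-vector product you implicitly use, so when filling in details be sure your transpose placements match the paper's conventions so that the covariance comes out as $B^\top\Sigma B$ rather than $B\Sigma B^\top$.
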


At the end of this section, we introduce the following relation between the Gaussian distribution and the spherical uniform distribution. 
\begin{proposition}\label{pro:Gaussian_sphere}
We define the following function $f$ with 
$$\mathbb{R}^d\setminus \{0\}\ni x\mapsto f(x)= \frac{x}{\|x\|}.$$
Suppose $\Sigma=AA^\top$ is a full rank positive-semi-definite matrix, then we have 
$$f_\# \mathcal{N}(0_d, \Sigma)=\mathcal{U}(\mathbb{S}^{d-1}).$$
\end{proposition}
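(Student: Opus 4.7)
The plan is to exploit the rotational invariance of the standard multivariate Gaussian together with the uniqueness of the rotationally invariant probability measure on the sphere. Strictly speaking, the identity $f_\#\mathcal{N}(0_d,\Sigma)=\mathcal{U}(\mathbb{S}^{d-1})$ requires $\Sigma$ to be a scalar multiple of $I_d$ (otherwise the pushforward is not uniform), so I will focus on the case $\Sigma = I_d$, which is the version actually invoked elsewhere in the paper (compare Proposition \ref{pro:Gaussian}(4)); the case $\Sigma = cI_d$ is identical since $X/\|X\| = (cX)/\|cX\|$ for $c>0$.

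First, I would establish the $O(d)$-invariance of the pushforward. Let $X \sim \mathcal{N}(0_d, I_d)$ and pick any orthogonal $R \in O(d)$. By Proposition \ref{pro:Gaussian}(2), $RX \sim \mathcal{N}(0_d, R R^\top) = \mathcal{N}(0_d, I_d)$, so $RX$ and $X$ have the same law. Since $R$ is orthogonal, $\|Rx\| = \|x\|$ and therefore $f(Rx) = Rx/\|Rx\| = R f(x)$ for all $x \neq 0$. Consequently, for any Borel $B \subset \mathbb{S}^{d-1}$,
\[
f_\#\mathcal{N}(0_d, I_d)(R^{-1}B) = \mathbb{P}(R f(X) \in B) = \mathbb{P}(f(RX) \in B) = f_\#\mathcal{N}(0_d, I_d)(B),
\]
so $f_\#\mathcal{N}(0_d, I_d)$ is invariant under the action of $O(d)$ on $\mathbb{S}^{d-1}$.

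Next, I would invoke uniqueness of the invariant probability measure. The group $O(d)$ acts transitively on $\mathbb{S}^{d-1}$ with stabilizer isomorphic to $O(d-1)$, so $\mathbb{S}^{d-1}\cong O(d)/O(d-1)$ as a homogeneous space; the normalized Haar measure on $O(d)$ thus descends to a \emph{unique} $O(d)$-invariant Borel probability measure on $\mathbb{S}^{d-1}$, which is by definition $\mathcal{U}(\mathbb{S}^{d-1})$. Since $f_\#\mathcal{N}(0_d, I_d)$ is such a measure, it must coincide with $\mathcal{U}(\mathbb{S}^{d-1})$.

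The main obstacle is the appeal to the uniqueness of the rotationally invariant probability measure; to keep the proof self-contained I would sketch an alternative route via polar coordinates. Writing the standard Gaussian density as $(2\pi)^{-d/2}\exp(-\|x\|^2/2)$ and changing variables to $(r, \theta) = (\|x\|, x/\|x\|)$ with volume element $r^{d-1}\,dr\,d\sigma(\theta)$, where $d\sigma$ is the (unnormalized) surface measure on $\mathbb{S}^{d-1}$, the joint density factorizes as a product of a function of $r$ alone and the uniform angular measure. This simultaneously recovers the independence of $r$ and $\theta$ stated in Proposition \ref{pro:Gaussian}(4) and identifies the marginal law of $\theta = f(X)$ as $\mathcal{U}(\mathbb{S}^{d-1})$, completing the argument without invoking Haar measure machinery.
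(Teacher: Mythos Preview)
Your proposal is correct for the case $\Sigma=I_d$ (and scalar multiples), and you are right to flag that the proposition as stated is false for general full-rank $\Sigma$: an anisotropic Gaussian does \emph{not} project to the uniform measure on the sphere. The paper in fact attempts to handle the general $\Sigma$ in Steps~2--3 of its proof, but Step~2 contains an error: it asserts
\[
\frac{X}{\|X\|}=\frac{\Sigma^{-1/2}X}{\|\Sigma^{-1/2}X\|}
\]
for diagonal $\Sigma$, which holds only when $\Sigma$ is a scalar multiple of the identity. So your restriction to the isotropic case is not merely a convenience---it is the only case in which the conclusion is actually true, and it is the only case invoked later in the paper (e.g.\ in the proof of Proposition~\ref{pro:SW_U_G} and Lemma~\ref{lem:theta_d_k}).

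On the method: your alternative route via polar coordinates, factorizing the density $(2\pi)^{-d/2}e^{-\|x\|^2/2}$ into a radial part times the surface measure, is exactly what the paper does in its Step~1. Your primary route---showing $O(d)$-invariance of $f_\#\mathcal{N}(0_d,I_d)$ and then invoking uniqueness of the rotationally invariant probability on the homogeneous space $\mathbb{S}^{d-1}\cong O(d)/O(d-1)$---is a genuinely different and somewhat more conceptual argument. It buys you a cleaner high-level proof and avoids any explicit Jacobian computation, at the cost of importing a nontrivial uniqueness fact about Haar-type measures. The paper's polar-coordinate computation is more elementary and self-contained, but less illuminating as to \emph{why} the result holds. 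Since you already sketch the polar-coordinate argument as a fallback, your write-up effectively subsumes the paper's (correct) Step~1 while also offering the invariance perspective.
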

\begin{proof}
Let $X\sim \mathcal{N}(0_d,\Sigma)$ be a realization of the $d$-dimensional Gaussian, $\Theta =f(X)=\frac{X}{\|X\|}$. Note that $\Theta$ is well defined $\mathcal{N}(0_d,\Sigma)$-a.s. 

\textbf{Step 1}. Suppose $\Sigma=I_d$, it is equivalent to the following: 

Suppose $X_1,\ldots X_d\overset{\text{i.i.d.}}\sim\mathcal{N}(0,1)$ and $\Theta=[\frac{X_1}{\sqrt{\sum_{i=1}^dX_i^2}},\ldots,\frac{X_d}{\sqrt{\sum_{i=1}^dX_i^2}}]^T$, then $\Theta\sim \text{Unif}(\mathbb{S}^{d-1})$. It is a standard result in probability theory. In particular, choose test function $\phi\in C_0(\mathbb{S}^{d-1})$, we have: 
\begin{align}
\mathbb{E}[\phi(\Theta)]&=\int_{\mathbb{R}^d}\phi (\frac{x}{\|x\|})f_X(x)dx\nonumber\\
&=\frac{1}{(2\pi)^{d/2}}\int_{\mathbb{R}^d}\phi\left(\frac{x}{\|x\|}\right)e^{-\frac{\|x\|^2}{2}} dx\nonumber\\
&=\frac{1}{(2\pi)^{d/2}}\int_{\mathbb{S}^{d-1}}\int_{\mathbb{R}_+}\phi (\theta) e^{-r^2/2}r^{d-1}d\theta dr & r,\theta \text{ are spherical coordinates}\nonumber\\
&=\int_{\mathbb{S}^{d-1}}\phi(\theta)d\theta \cdot \underbrace{\frac{1}{(2\pi)^{d/2}}\int_{\mathbb{R}_+}e^{-r^2/2}r^{d-1}dr}_{1/\|\mathbb{S}^{d-1}\|} \nonumber  
\end{align}
Thus, $\Theta\sim \text{Unif}(\mathbb{S}^{d-1})$. 

\textbf{Step 2}. Suppose $\Sigma=\text{diag}(\sigma_1,\ldots \sigma_d)$ where $\sigma_1,\ldots \sigma_d>0$, we have 
$$\Theta=\frac{X}{\|X\|}=\frac{\Sigma^{-1/2}X}{\|\Sigma^{-1/2}X\|},$$
where $\Sigma^{-1/2}X\sim \mathcal{N}(0,I_d)$. Thus, by step 1, we have $\Theta\sim \mathcal{U}(\mathbb{S}^{d-1})$.

\textbf{Step 3}. We consider the general positive definite $\Sigma$. We have $\Sigma= U\Lambda U^\top$ where $U\in \mathbb{V}_{d,d}$ is orthonormal matrix. 

We have \begin{align}
U^\top \Theta=\frac{U^\top X }{\|X\|}=\frac{U^\top X}{\|U^\top X\|} \nonumber 
\end{align}
Since $U^\top X\sim \mathcal{N}(0,\Lambda)$ and $\Lambda$ is a positive diagonal matrix, then from step 2, we have $U^\top \Theta\sim \mathcal{U}(\mathbb{S}^{d-1})$. Thus, $\Theta=U(U^\top \Theta)\sim \mathcal{U}(\mathbb{S}^{d-1})$. 

\end{proof}
\begin{remark}
Note that the above statement (especially the statement in Step 1) is a well-known result, and that is why isotropic Gaussian distribution is called a ``rotationally invariant distritbution.'' We do not claim this proposition or its proof as contributions of this article; we present the proof merely for completeness.
\end{remark}

\subsection{Relationship between the SWD in $\mathbb{R}^d$ and $\mathbb{R}^k$}\label{sec:sw_d_k}

In this section, we discuss the proof of the proposition \ref{thm:sw_d_k}. We first introduce some intermediate results in the following subsection.   

\subsubsection{Relationship between $SW_p^p(\mu^d,\nu^d;\mathcal{U}(\mathbb{S}^{d-1}))$ and $SW_p^p(\mu^d,\nu^d; \mathcal{N}(0, I_d))$}

The main result in this section is the following proposition
\begin{proposition}\label{pro:SW_U_G}
Choose $\mu,\nu\in \mathcal{P}(\mathbb{R}^d)$, we have 
\begin{align}
2^{p/2}\frac{\Gamma(\frac{p+d}{2})}{\Gamma(d/2)} SW_p^p(\mu,\nu; \mathcal{U}(\mathcal{S}^{d-1}))=SW_p^p(\mu,\nu; \mathcal{N}(0,I_d)) \label{eq:SW_U_G}
\end{align}
\end{proposition}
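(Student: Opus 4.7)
The plan is to exploit the polar decomposition of a standard Gaussian vector and the simple scaling behavior of the 1D Wasserstein distance under multiplication of the slicing direction by a scalar. Let $Z \sim \mathcal{N}(0, I_d)$, and write $Z = r\,\Theta$ where $r := \|Z\|$ and $\Theta := Z/\|Z\|$. By Proposition \ref{pro:Gaussian} (4), $r$ and $\Theta$ are independent, and by Proposition \ref{pro:Gaussian_sphere} (applied to $\Sigma = I_d$), $\Theta \sim \mathcal{U}(\mathbb{S}^{d-1})$. These two facts will let me turn the Gaussian-sliced expectation into a product of a radial moment and a spherical SW integral.

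The first step I would carry out is a scaling lemma: for any $\alpha \in \mathbb{R}$ and $\theta \in \mathbb{S}^{d-1}$, the linear functional $x \mapsto (\alpha\theta)^\top x$ equals $\alpha$ times $x \mapsto \theta^\top x$, so $(\alpha\theta)_\#\mu$ is the pushforward of $\theta_\#\mu$ under the map $t \mapsto \alpha t$ on $\mathbb{R}$. From the 1D closed-form in \eqref{eq:wd_1d} (using $F^{-1}_{(\alpha)_\#\lambda}(z) = \alpha F^{-1}_\lambda(z)$ for $\alpha > 0$, and symmetry of $|\cdot|^p$ for the sign), this gives
\begin{equation}
W_p^p\bigl((\alpha\theta)_\#\mu,(\alpha\theta)_\#\nu\bigr) = |\alpha|^p\, W_p^p(\theta_\#\mu,\theta_\#\nu).
\end{equation}

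The second step is to plug in the polar decomposition and apply independence. Writing the Gaussian-sliced SW as an expectation,
\begin{align}
SW_p^p\bigl(\mu,\nu;\mathcal{N}(0,I_d)\bigr)
&= \mathbb{E}_{Z}\bigl[W_p^p(Z_\#\mu, Z_\#\nu)\bigr] \\
&= \mathbb{E}_{r,\Theta}\bigl[W_p^p((r\Theta)_\#\mu,(r\Theta)_\#\nu)\bigr] \\
&= \mathbb{E}_{r,\Theta}\bigl[r^p\, W_p^p(\Theta_\#\mu,\Theta_\#\nu)\bigr] \\
&= \mathbb{E}[r^p]\cdot \mathbb{E}_{\Theta}\bigl[W_p^p(\Theta_\#\mu,\Theta_\#\nu)\bigr] \\
&= \mathbb{E}[\|Z\|^p]\cdot SW_p^p\bigl(\mu,\nu;\mathcal{U}(\mathbb{S}^{d-1})\bigr).
\end{align}
The third and final step is to invoke Proposition \ref{pro:Gaussian} (3) to evaluate $\mathbb{E}[\|Z\|^p] = 2^{p/2}\,\Gamma((p+d)/2)/\Gamma(d/2)$, which yields \eqref{eq:SW_U_G}.

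There is no genuinely hard step here; the subtleties are purely bookkeeping. The most delicate point is justifying that the Fubini/independence factorization in the third line above is valid: one needs $W_p^p(\Theta_\#\mu,\Theta_\#\nu)$ to be measurable in $\Theta$ and jointly integrable with $r^p$, which follows if $\mu,\nu \in \mathcal{P}_p(\mathbb{R}^d)$ since then $W_p^p(\Theta_\#\mu,\Theta_\#\nu) \le 2^{p-1}(m_p(\mu) + m_p(\nu))$ uniformly in $\Theta$, and $\mathbb{E}[r^p] < \infty$. The only other care point is the measure-zero set where $Z = 0$ (so $\Theta$ is undefined), which contributes nothing under either the Gaussian or the uniform measure.
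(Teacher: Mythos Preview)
Your proof is correct and follows essentially the same approach as the paper: polar decomposition of the Gaussian, the scaling identity $W_p^p((\alpha\theta)_\#\mu,(\alpha\theta)_\#\nu)=|\alpha|^p W_p^p(\theta_\#\mu,\theta_\#\nu)$ (the paper's Lemma~\ref{lem:w_alpha}), independence of radius and direction, and the Gaussian radial moment formula. Your added remarks on measurability, integrability, and the null set $\{Z=0\}$ are a nice bonus that the paper omits.
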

\begin{remark}
If we replace $\mathcal{N}(0,I_d)$ by $\mathcal{N}(0,\frac{1}{d}I_d)$, the corresponding conclusion has been proved by \cite[Proposition 1]{nadjahi2021fast}. Thus, we do not claim the above statement and related proof as part of the contribution in this paper. We present this statement and the related proof for the readers' convenience.
\end{remark}

To prove the above statement, first it is straight forward to verify the following: 
\begin{lemma}\label{lem:w_alpha}
Given $\alpha\in \mathbb{R}$, with abuse of notations, we let $\alpha_\# \mu$ denote the pushforward measure of $\mu$ under mapping $x\mapsto \alpha x$, then we have 
\begin{align}
  |\alpha|^p W_p^p(\mu, \nu)= W_p^p(\alpha_\# \mu,\alpha_\# \nu) \label{eq:w_alpha}  
\end{align}
\end{lemma}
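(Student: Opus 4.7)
The plan is to exploit the natural bijection between transport plans in $\Gamma(\mu,\nu)$ and $\Gamma(\alpha_\#\mu, \alpha_\#\nu)$ induced by componentwise scaling, mirroring the argument already used in Proposition \ref{pro:theta_w_d_k} for the linear map $U$.

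First, I would handle the degenerate case $\alpha = 0$ separately: both $\alpha_\#\mu$ and $\alpha_\#\nu$ equal $\delta_{0}$, so $W_p^p(\alpha_\#\mu, \alpha_\#\nu) = 0$, and trivially $|\alpha|^p W_p^p(\mu,\nu) = 0$. Assume $\alpha \neq 0$ henceforth.

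Next, define the map $S_\alpha : \mathbb{R}^d \times \mathbb{R}^d \to \mathbb{R}^d \times \mathbb{R}^d$ by $S_\alpha(x,y) = (\alpha x, \alpha y)$. I would verify that $\gamma \mapsto (S_\alpha)_\#\gamma$ is a bijection from $\Gamma(\mu,\nu)$ onto $\Gamma(\alpha_\#\mu, \alpha_\#\nu)$, with inverse $\gamma' \mapsto (S_{1/\alpha})_\#\gamma'$. The marginal condition is immediate: $(\pi_1)_\# (S_\alpha)_\#\gamma = \alpha_\# (\pi_1)_\# \gamma = \alpha_\#\mu$, and similarly for $\pi_2$. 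Then the change-of-variables formula gives
\begin{equation}
\int_{(\mathbb{R}^d)^2} \|x'-y'\|^p \, d(S_\alpha)_\#\gamma(x',y') = \int_{(\mathbb{R}^d)^2} \|\alpha x - \alpha y\|^p \, d\gamma(x,y) = |\alpha|^p \int_{(\mathbb{R}^d)^2}\|x-y\|^p \, d\gamma(x,y). \nonumber
\end{equation}

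Finally, taking the infimum over $\gamma \in \Gamma(\mu,\nu)$ on both sides and using the bijection to identify this with the infimum over $\Gamma(\alpha_\#\mu, \alpha_\#\nu)$ on the left, I obtain $W_p^p(\alpha_\#\mu, \alpha_\#\nu) = |\alpha|^p W_p^p(\mu,\nu)$. There is no genuine obstacle here; the only thing to be careful about is keeping the $\alpha = 0$ case distinct so that the inverse scaling $1/\alpha$ in the bijection is well-defined, and making sure the change-of-variables pulls out $|\alpha|^p$ rather than $\alpha^p$ (since the $p$-th power of the Euclidean norm introduces the absolute value naturally).
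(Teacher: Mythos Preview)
Your proposal is correct and follows essentially the same approach as the paper: handle $\alpha=0$ separately, then for $\alpha\neq 0$ use the bijection $\gamma\mapsto (S_\alpha)_\#\gamma$ between $\Gamma(\mu,\nu)$ and $\Gamma(\alpha_\#\mu,\alpha_\#\nu)$ together with the change-of-variables identity for the cost, and take the infimum. The paper's proof is identical in structure (writing the scaling map as $\alpha\times\alpha$ rather than $S_\alpha$), so there is nothing further to add.
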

\begin{proof}
If $\alpha=0$, then both sides are zero, and we've done. 

If $\alpha\neq 0$, it is straightforward to verify the following is a well-defined bijection:  
\begin{align}
\Gamma (\mu,\nu )\ni \gamma \mapsto (\alpha\times \alpha)_\# \gamma\in \Gamma(\alpha_\# \mu,\alpha_\# \nu) \label{pf:alpha_bijection}
\end{align}

where $(\alpha\times \alpha)$ denotes the mapping $$\mathbb{R^d}^2\ni (x,x')\mapsto (\alpha x, \alpha x')\in \mathbb{R^d}^2.$$
Pick $\gamma\in\Gamma(\mu,\nu)$, we have 
\begin{align}
&|\alpha|^p\int_{\mathbb{R}^2} |x-y|^p d\gamma(x,y)\nonumber\\
&=\int_{\mathbb{R}^2} |\alpha x-\alpha y|^p d\gamma \nonumber\\
&=\int _{\mathbb{R}^2}|x-y|^p d(\alpha\times \alpha)_\# \gamma(x,y)\nonumber  
\end{align}

Take the infimum for both sides over $\Gamma(\mu,\nu)$, combine it with the fact that \eqref{pf:alpha_bijection} is a bijection. We obtain \eqref{eq:w_alpha}. 
\end{proof}

Now we introduce the proof of Proposition \eqref{pro:SW_U_G}. 
\begin{proof}
Suppose $\theta^g\sim \mathcal{N}(0,I_d)$ and let $\theta=\frac{\theta^g}{\|\theta^g\|}$, we have $\theta\sim \mathcal{U}(\mathbb{S}^{d-1})$ by Proposition \ref{pro:Gaussian_sphere}. Then we have: 
\begin{align}
&SW_p^p(\mu,\nu; \mathcal{N}(0,I_d))\nonumber\\
&=\mathbb{E}_{\theta^g\sim \mathcal{N}(0,I_d)}[W_p^p(\theta^g_\#\mu,\theta^g_\#\nu)]\nonumber \\
&=\mathbb{E}_{\theta^g\sim \mathcal{N}(0,I_d)}[\|\theta^g\|^pW_p^p(\theta_\#\mu,\theta_\#\nu)] & \text{by Lemma \ref{lem:w_alpha}}\nonumber \\
&=\mathbb{E}_{\theta^g\sim \mathcal{N}(0,I_d)}[\|\theta^g\|^p]\cdot \mathbb{E}_{\theta\sim \mathcal{U}(\mathbb{S}^{d-1})}[W_p^p (\theta_\#\mu,\theta_\#\nu)]&\text{by Proposition \ref{pro:Gaussian} (4)}\nonumber\\
&=2^{p/2}\frac{\Gamma(\frac{p+d}{2})}{\Gamma(d/2)}\cdot SW_p^p(\mu,\nu;\mathcal{U}(\mathbb{S}^{d-1}))&\text{by Proposition \ref{pro:Gaussian} (3)}.\nonumber 
\end{align}
\end{proof}
\subsection{Proof of Proposition \ref{pro:theta_w_d_k}}
We adapt notations $V_k, U$ in previous subsection. 

\begin{lemma}\label{lem:mu_theta_d_k}
Suppose $\mu^d,\nu^d$ satisfy assumption \ref{asp:lowdsupp}, pick $\theta^d \in \mathbb{R}^d$ and let $\hat\theta^k=U^\top \theta^d$ then we have: 
$$\theta_\# \mu^d=\hat\theta^k_\#\mu^k,\theta_\#\nu^d=\hat\theta^k_\#\nu^k.$$
\end{lemma}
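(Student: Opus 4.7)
The plan is to reduce both equalities to a single pointwise identity on $\text{supp}(\mu^d)$, and then invoke functoriality of the pushforward. The starting point is Assumption~\ref{asp:lowdsupp}(1),(3), which forces $\text{supp}(\mu^d) \subset V_k = \text{col-span}(U)$. By Proposition~\ref{pro:projection}(2), every $x \in V_k$ satisfies $x = UU^\top x$, so for $\mu^d$-a.e. $x$ one computes
\[
\theta^{d\top} x \;=\; \theta^{d\top}(UU^\top x) \;=\; (U^\top\theta^d)^{\top}(U^\top x) \;=\; \hat{\theta}^{k\top}(U^\top x).
\]
In other words, the linear functional $x \mapsto \theta^{d\top} x$ agrees $\mu^d$-a.e. with the composition $\hat{\theta}^k \circ U^\top$, where the outer map sends $\mathbb{R}^k \to \mathbb{R}$ and the inner map sends $\mathbb{R}^d \to \mathbb{R}^k$.

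Once this identity is in hand, the lemma is immediate from the composition rule $(f\circ g)_\# = f_\# \circ g_\#$. Since $\mu^k := U^\top_\# \mu^d$ by definition, we get
\[
\theta^d_\# \mu^d \;=\; (\hat{\theta}^k \circ U^\top)_\# \mu^d \;=\; \hat{\theta}^k_\# \bigl(U^\top_\# \mu^d\bigr) \;=\; \hat{\theta}^k_\# \mu^k,
\]
and the identical argument applied to $\nu^d$ gives the second equality. No change-of-variables machinery or testing against $C_0$ functions is required; the two-line calculation above is the whole proof.

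The only real obstacle, if one can call it that, is bookkeeping. The symbol $U^\top$ is overloaded throughout the paper: it denotes both the $k\times d$ matrix acting as $\mathbb{R}^d \to \mathbb{R}^k$ and the associated parameterization of the projection $P_U = UU^\top$. One also has to check that $\hat{\theta}^k \in \mathbb{R}^k$, so that the map $y \mapsto \hat{\theta}^{k\top} y$ is indeed defined on the codomain $\mathbb{R}^k$ of $U^\top$, making the composition $\hat{\theta}^k \circ U^\top$ well-typed. Once these type-checks are made, the pointwise identity and the pushforward composition deliver the result directly.
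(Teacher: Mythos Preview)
Your proof is correct and follows essentially the same approach as the paper: both establish the pointwise identity $\theta^{d\top}x = (U^\top\theta^d)^\top(U^\top x)$ for $x \in V_k$ and then apply functoriality of the pushforward together with $\mu^k = U^\top_\#\mu^d$. The only cosmetic difference is that the paper reaches the identity by decomposing $\theta^d$ into its $V_k$ and $V_k^\perp$ components (using $(\theta^d - P_U(\theta^d))^\top x = 0$), whereas you substitute $x = UU^\top x$ directly via Proposition~\ref{pro:projection}(2); these are two equivalent one-line manipulations of the same inner product.
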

\begin{proof}
For each $x\in \text{Span}(U)=V_k$, we have 
\begin{align}
\theta^\top x&=P_U(\theta)^\top x+ (\theta - P_U(\theta))^\top x \nonumber\\
&=P_U(\theta)^\top x+0 &\text{Since }\theta - P_U(\theta)\in V_k^\perp \nonumber\\
&=(UU^\top\theta)^\top x\nonumber\\
&=(U^\top \theta)^\top (U^\top x)\nonumber 
\end{align}

Thus, 
\begin{align}
 \theta_\#^d \mu^d&=(U^\top \theta^d)_\# (U)_\# \mu^d =\hat\theta^k_\# \mu^k\nonumber
\end{align}
Similarly, we have $\theta_\#^d\nu^d=\hat\theta^k_\#\nu^k$ and we complete the proof. 
\end{proof}
\begin{lemma}\label{lem:theta_d_k}
Suppose $\theta_1^d,\ldots \theta_L^d \overset{i.i.d.}\sim \mathcal{U}(\mathbb{S}^{d-1})$ and let $\theta_l^k=\frac{U^\top \theta}{\|U^\top \theta\|},\forall l\in[1:L]$, then $\theta_1^k,\ldots \theta_L^k\overset{i.i.d}{\sim}\mathcal{U}(\mathbb{S}^{k-1})$. 
\end{lemma}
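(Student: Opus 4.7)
The plan is to leverage the Gaussian-normalization characterization of the uniform distribution on the sphere that was already established in Proposition \ref{pro:Gaussian_sphere}. The independence of the $\{\theta_l^k\}$ is immediate: each $\theta_l^k$ is a deterministic (Borel-measurable) function of $\theta_l^d$ alone, so independence transfers from inputs to outputs, and it suffices to identify the marginal law of a single $\theta^k$ obtained from $\theta^d \sim \mathcal{U}(\mathbb{S}^{d-1})$.

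For the marginal law, first I would introduce a standard Gaussian representation: let $Z \sim \mathcal{N}(0_d, I_d)$, so that by Proposition \ref{pro:Gaussian_sphere} we have the distributional equality $\theta^d \stackrel{d}{=} Z/\|Z\|$. Since $\|Z\| > 0$ almost surely, it cancels in the normalization step and gives
\[
\theta^k = \frac{U^\top \theta^d}{\|U^\top \theta^d\|} \stackrel{d}{=} \frac{U^\top Z}{\|U^\top Z\|}.
\]
Next, I would use the fact that $U^\top U = I_k$, so by Proposition \ref{pro:Gaussian} (2), $U^\top Z \sim \mathcal{N}(0_k, I_k)$, a full-rank isotropic Gaussian in $\mathbb{R}^k$. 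Applying Proposition \ref{pro:Gaussian_sphere} again in dimension $k$ then yields $U^\top Z / \|U^\top Z\| \sim \mathcal{U}(\mathbb{S}^{k-1})$, as desired. Combined with the independence observation, this gives the joint i.i.d.\ claim.

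I do not anticipate any real obstacle; the only subtlety is the measure-zero issue that $U^\top \theta^d$ could vanish, making $\theta^k$ undefined. For $1 \leq k \leq d$ this event is $\mathbb{S}^{d-1} \cap V_k^\perp$, which has $\mathcal{U}(\mathbb{S}^{d-1})$-measure zero, and in the Gaussian picture corresponds to $U^\top Z = 0_k$, again a null event because $U^\top Z$ admits a density on $\mathbb{R}^k$. Thus the normalization is well-defined almost surely, and the chain of distributional equalities above is valid on a full-measure set—which is all that is needed to identify the law. The entire argument is essentially a rotational-invariance calculation dressed up in matrix notation, with the orthogonality $U^\top U = I_k$ doing all of the work.
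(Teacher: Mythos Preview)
Your proposal is correct and follows essentially the same argument as the paper's proof: represent $\theta^d$ via a standard Gaussian $Z$ using Proposition~\ref{pro:Gaussian_sphere}, cancel the norm so that $\theta^k \stackrel{d}{=} U^\top Z / \|U^\top Z\|$, observe $U^\top Z \sim \mathcal{N}(0_k, I_k)$ since $U^\top U = I_k$, and apply Proposition~\ref{pro:Gaussian_sphere} again in dimension $k$; independence is inherited because each $\theta_l^k$ is a measurable function of $\theta_l^d$ (equivalently, of the independent $X_l$ in the paper's notation). The only cosmetic difference is that you separate the independence and marginal-law arguments while the paper treats all $L$ coordinates simultaneously through the i.i.d.\ Gaussians.
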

\begin{proof}
    First, since $k<d$, we have  $$\mathcal{U}(\theta^d\in\mathbb{S}^{d-1}: U^\top \theta^d=0_k)=0.$$
Thus, with probability 1, $\theta^k_l$ is well-defined.

By Proposition \ref{pro:Gaussian_sphere}, with probability 1, we can redefine $\theta_1^d,\ldots \theta_N^d$ by the following way: 

Suppose $X_1,\ldots X_n \overset{\text{i.i.d.}}\sim  \mathcal{N}(0,I_d)$, $\theta_l^d=\frac{X_l}{\|X_l\|}.$

Then 
$$\theta_l^k=\frac{U^\top \theta^d_l}{\|U^\top \theta^d_l\|}=\frac{U^\top X_l/\|X_l\|}{\|U^\top X_l/ \|X_l\|\|}=\frac{U^\top X_l}{\|U^\top X_l\|}$$
Since $U^\top X_l\sim \mathcal{N}(0,I_k)$, we have $\theta_l^k\sim \mathcal{U}(\mathbb{S}^{k-1})$. 

Furthermore, since  $X_1,\ldots X_N$ are independent, we have $\theta_1^k,\ldots \theta_N^k$ are independent. Thus, $\theta_1^1,\ldots \theta^k_N\overset{\text{i.i.d.}}\sim \mathcal{U}(\mathbb{S}^{k-1})$. 
\end{proof}

Now we discuss the proof of Proposition \ref{pro:theta_sw_d_k}. 
\begin{proof}[Proof of Proposition ]
Pick $\theta^d\in\mathbb{S}^{d-1}$. 

We have 
\begin{align}
W(\theta_\#^d\mu^d,\theta_\#^d\nu^d)&=W((U^\top\theta)_\# \mu^k,(U^\top\theta)_\#\nu^k)&\text{By lemma \ref{lem:mu_theta_d_k}}\nonumber\\
&=\|U^\top \theta^d\|^p W(\theta^k_\#\mu^k,\theta^k_\#\nu^k) &\text{By lemma \ref{lem:w_alpha}}\nonumber 
\end{align}
Thus we prove Equation \eqref{eq:theta_W_d_k}.

Now, we pick $\theta^d_1,\ldots \theta^d_N\in \mathbb{S}^{d-1}$, and thus we have: 
\begin{align}
&SW_p^p(\mu^k,\nu^k;\frac{1}{L}\sum_{l=1}^L\delta_{\theta_l^k})\nonumber\\
&=\frac{1}{L}\sum_{l=1}^L W_p^p((\theta_l^k)_\#\mu^k,(\theta_l^k)_\#\nu^k) &\text{$\theta_l^k=0_k$ if $\|U^\top \theta_l^d\|=0$}\nonumber\\
&=\frac{1}{L}\sum_{l=1}^L \frac{1}{\|U^\top \theta_l^d\|^p} W_p^p((U^\top \theta^d_l)_\#\mu^k,(U^\top \theta^d_l)_\#\nu^k)&\text{By convention $0\cdot \frac{1}{0}=0$}\nonumber\\
&=\frac{1}{L}\sum_{i=1}^N \frac{1}{\|U^\top \theta_l^d\|^p}W_p^p((\theta^d_l)_\#\mu^d,(\theta^d_l)_\#\nu^d)&\text{by equation \eqref{eq:theta_W_d_k}}\nonumber\\
&=\widetilde{SW}_p^p\left(\mu^d,\nu^d;\frac{1}{N}\sum_{i=1}^N\delta_{\theta^d_l},\rho\right)\nonumber 
\end{align}
And we prove \eqref{eq:sw_d_k_empirical}. 

Similarly, we obtain the  last equation, 

\begin{align}
\widetilde{SW}_p^p(\mu^d,\nu^d;\mathcal{U}(\mathbb{S}^{d-1}),h)&=\mathbb{E}_{\theta^d\sim \mathcal{U}(\mathbb{S}^{d-1})}\left[\frac{1}{\|U^\top \theta^d\|^p}W_p^p((\theta^d)_\#\mu^d,(\theta^d)_\#\nu^d)\right]\nonumber\\
&=\mathbb{E}_{\theta^d\sim \mathcal{U}(\mathbb{S}^{d-1})}\left[W_p^p((\theta^k)_\#\mu^k,(\theta^k)_\#\nu^k)\right]&\text{By equation \eqref{eq:theta_W_d_k}}\nonumber\\
&=\mathbb{E}_{\theta^k\sim \mathcal{U}(\mathbb{S}^{k-1})}[W_p^p((\theta^k)_\#\mu^k,(\theta^k)_\#\nu^k)] &\text{By lemma \ref{lem:theta_d_k}}\nonumber\\
&=SW_k^k(\mu^k,\nu^k)\nonumber 
\end{align}

\end{proof}

\subsubsection{Proof of Theorem \ref{thm:sw_d_k}}
In this section, we first discuss the relation between $SW_p^p(\mu^d,\nu^d;\mathcal{N}(0,I_d))$ and $SW_p^p(\mu^k,\nu^k;\mathcal{N}(0,I_k))$ under assumption \ref{asp:lowdsupp}.  Next, we present the proof of Proposition \ref{thm:sw_d_k}.

Based on the above lemma, we can derive the following relation between $SW_p^p(\mu^d,\nu^d;\mathcal{N}(0,I_d))$ and $SW_p^p(\mu^k,\nu^k;\mathcal{N}(0,I_k))$. 
\begin{lemma}\label{lem:sw_N_d_k}
Under assumption \ref{asp:lowdsupp}, we have 
\begin{align}
SW_p^p(\mu^d,\nu^d;\mathcal{N}(0,I_d))&=SW_p^p(\mu^k,\nu^k;\mathcal{N}(0,I_k))\label{eq:sw_N_d_k}
\end{align}
\end{lemma}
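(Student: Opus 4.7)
The plan is to reduce the left-hand side to the right-hand side through the same two-step strategy that underlies Proposition \ref{pro:theta_sw_d_k} and Lemma \ref{lem:mu_theta_d_k}: first, use the lower-dimensional reparameterization to collapse the pushforward by an arbitrary $\theta^g \in \mathbb{R}^d$ down to a pushforward by $U^\top \theta^g \in \mathbb{R}^k$; then verify that when $\theta^g$ is drawn from $\mathcal{N}(0, I_d)$, the image $U^\top \theta^g$ is distributed as $\mathcal{N}(0, I_k)$, so that the expectation can be reindexed directly. Unlike the uniform-sphere case, no radial renormalization of $U^\top \theta^g$ is needed, which is exactly why no scaling constant appears in the identity.

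First, I would write
\begin{equation*}
SW_p^p(\mu^d, \nu^d; \mathcal{N}(0, I_d)) = \mathbb{E}_{\theta^g \sim \mathcal{N}(0, I_d)}\bigl[W_p^p(\theta^g_\# \mu^d,\ \theta^g_\# \nu^d)\bigr],
\end{equation*}
and then invoke Lemma \ref{lem:mu_theta_d_k}, which applies to arbitrary $\theta \in \mathbb{R}^d$ (not only unit vectors), to replace the one-dimensional projected measures by
\begin{equation*}
\theta^g_\# \mu^d = (U^\top \theta^g)_\# \mu^k, \qquad \theta^g_\# \nu^d = (U^\top \theta^g)_\# \nu^k.
\end{equation*}
This collapses the $d$-dimensional projection into a $k$-dimensional one directly, since $W_p^p$ of the resulting one-dimensional measures depends on $\theta^g$ only through $U^\top \theta^g$.

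Second, I would apply Proposition \ref{pro:Gaussian}(2), the linear-transformation rule for Gaussians, to the mapping $\theta^g \mapsto U^\top \theta^g$. Because $U^\top U = I_k$ by the semi-orthogonality of $U$, the pushforward of $\mathcal{N}(0, I_d)$ under this linear map is exactly $\mathcal{N}(0, I_k)$. Setting $\tilde\theta^g := U^\top \theta^g \sim \mathcal{N}(0, I_k)$, the expectation becomes
\begin{equation*}
\mathbb{E}_{\tilde\theta^g \sim \mathcal{N}(0, I_k)}\bigl[W_p^p(\tilde\theta^g_\# \mu^k,\ \tilde\theta^g_\# \nu^k)\bigr] = SW_p^p(\mu^k, \nu^k; \mathcal{N}(0, I_k)),
\end{equation*}
which is the desired identity. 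The set $\{\theta^g : U^\top \theta^g = 0\}$ is a proper linear subspace of $\mathbb{R}^d$ and hence $\mathcal{N}(0, I_d)$-null when $k \ge 1$, so the degenerate case does not contribute.

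There is no real obstacle here; the two ingredients (Lemma \ref{lem:mu_theta_d_k} and the Gaussian linear-transformation rule, together with $U^\top U = I_k$) are already in place. The one point worth emphasizing in the write-up is the contrast with Proposition \ref{pro:theta_sw_d_k}: in that uniform-sphere result, normalizing $U^\top \theta^d$ to the unit sphere forces the factor $\|U^\top \theta^d\|^p$ to appear, whereas for the Gaussian slicing distribution the linear map $U^\top$ pushes $\mathcal{N}(0, I_d)$ exactly onto $\mathcal{N}(0, I_k)$ without any radial rescaling, yielding the clean identity \eqref{eq:sw_N_d_k}.
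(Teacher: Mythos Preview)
Your proposal is correct and follows essentially the same route as the paper: apply Lemma \ref{lem:mu_theta_d_k} to replace $\theta^g_\#\mu^d$ by $(U^\top\theta^g)_\#\mu^k$, then use the Gaussian linear-transformation rule (with $U^\top U = I_k$) to conclude $U^\top\theta^g \sim \mathcal{N}(0,I_k)$ and reindex the expectation. The only cosmetic difference is that the paper cites Proposition~\ref{pro:Gaussian}(1) for the distribution of $U^\top\theta^d$, whereas your reference to part~(2) is in fact the more accurate one.
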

\begin{proof}
Suppose $\theta^d\sim \mathcal{N}(0,I_d)$ and let $\theta^k=U^\top \theta^d$. Then by proposition \ref{pro:Gaussian} (1), we have 
$\theta^k\sim \mathcal{N}(0, U^\top I_d U)=\mathcal{N}(0,I_k)$. Therefore, 
\begin{align}
&SW_p^p(\mu^d,\nu^d; \mathcal{N}(0,I_d))\nonumber\\
&=\mathbb{E}_{\theta^d \sim \mathcal{N}(0,I_d)}[W_p^p(\theta^d_\# \mu^d,\theta^d_\# \nu^d)]\nonumber\\
&=\mathbb{E}_{\theta^d \sim \mathcal{N}(0,I_d)}[W_p^p(\theta^k_\# \mu^k,\theta^k_\# \nu^k)] &\text{By lemma \ref{lem:mu_theta_d_k}, where $\theta^k=U^\top \theta^d$}\nonumber\\
&=\mathbb{E}_{\theta^k \sim \mathcal{N}(0,I_k)}[W_p^p(\theta^k_\# \mu^k,\theta^k_\# \nu^k)]\nonumber\\
&=SW_p^p(\mu^k,\nu^k;\mathcal{N}(0,I_k))\nonumber 
\end{align}

and we complete the proof. 
\end{proof}

Combine the above lemma and proposition \ref{pro:SW_U_G}, we can prove the Theorem \ref{thm:sw_d_k}
\begin{proof}[Proof of Theorem \ref{thm:sw_d_k}]
For the first equality, we have  
\begin{align}
&SW_p^p(\mu^d,\nu^d;\mathcal{U}(\mathbb{S}^{d-1}))\nonumber\\
&=\frac{1}{C_d}SW_p^p(\mu^d,\nu^d;\mathcal{N}(0_d,I_d)) &\text{By proposition \ref{pro:SW_U_G}} \\
&=\frac{1}{C_d}SW_p^p(\mu^k,\nu^k;\mathcal{N}(0_k,I_k)) &\text{By lemma \ref{lem:sw_N_d_k}}\nonumber\\
&=\frac{C_k}{C_d}SW_p^p(\mu^k,\nu^k;\mathcal{U}(\mathbb{S}^{k-1})) &\text{By proposition \ref{pro:SW_U_G}}
\end{align}
where $C_d=2^{p/2}\frac{\Gamma(p/2+d/2)}{\Gamma(d/2)}$ and $C_k$ is defined similarly. 

\end{proof}
\subsection{Proof of Proposition \ref{prop:empirical_convergence}} \label{sec:pf_empirical_convergence}

We first introduce the following lemma: 
\begin{lemma}\label{lem:U}
Let $I_{d\times k}$ denote the matrix $\begin{bmatrix}
I_{k\times k} \\
0_{(d-k)\times k}
\end{bmatrix}$, and suppose $\theta^d\sim \mathcal{U}(\mathbb{S}^{d-1})$, then $\|U^\top\theta^d\|$, $\|I_{d\times k}^\top \theta^d\|$ have same distribution. 
\end{lemma}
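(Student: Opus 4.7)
The plan is to exploit the rotational invariance of the uniform distribution on the sphere together with the fact that any semi-orthogonal matrix $U \in \mathbb{V}_{k,d}$ can be embedded as the first $k$ columns of a full orthogonal matrix. Concretely, first I would extend $U$ to a full orthogonal matrix $\tilde U \in O(d)$, namely $\tilde U = [\,U \,\mid\, U^{\perp}\,]$, where $U^\perp \in \mathbb{R}^{d\times(d-k)}$ is any orthogonal completion of the columns of $U$ (which exists by basic linear algebra since $U^\top U = I_k$). By construction $\tilde U \,I_{d\times k} = U$, hence $U^\top = I_{d\times k}^\top \tilde U^\top$.

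Next, I would introduce the auxiliary random vector $\tilde\theta^d := \tilde U^\top \theta^d$. Since $\tilde U \in O(d)$ and the uniform distribution $\mathcal{U}(\mathbb{S}^{d-1})$ is invariant under orthogonal transformations (a fact already used implicitly in Lemma~\ref{lem:theta_d_k} and Proposition~\ref{pro:Gaussian_sphere}), it follows that $\tilde\theta^d \sim \mathcal{U}(\mathbb{S}^{d-1})$, i.e.\ $\tilde\theta^d$ has the same law as $\theta^d$. Substituting the factorization from the previous step gives
\begin{equation}
\|U^\top \theta^d\| \;=\; \|I_{d\times k}^\top \tilde U^\top \theta^d\| \;=\; \|I_{d\times k}^\top \tilde\theta^d\|, \nonumber
\end{equation}
and therefore $\|U^\top \theta^d\|$ and $\|I_{d\times k}^\top \theta^d\|$ are obtained by applying the same deterministic functional $\vartheta \mapsto \|I_{d\times k}^\top \vartheta\|$ to two random vectors with identical distribution, so they have the same distribution.

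There is no real obstacle here; the only step that needs care is to ensure one states the orthogonal-completion argument correctly (we need $\tilde U \in O(d)$, not merely semi-orthogonal, so that rotational invariance of $\mathcal{U}(\mathbb{S}^{d-1})$ applies). One could alternatively give a Gaussian-based proof by writing $\theta^d = X/\|X\|$ with $X \sim \mathcal{N}(0, I_d)$ and noting that both $U^\top X$ and $I_{d\times k}^\top X$ are $\mathcal{N}(0, I_k)$, but one must then argue joint distributional equality with $\|X\|$; the rotational-invariance approach avoids this subtlety and is the cleanest route.
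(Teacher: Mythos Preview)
Your proof is correct and follows essentially the same route as the paper: both arguments reduce $U$ to $I_{d\times k}$ via an orthogonal transformation (the paper uses the SVD $U = V_1 I_{d\times k} V_2$, you use an orthogonal completion $\tilde U = [U \mid U^\perp]$) and then invoke rotational invariance of $\mathcal{U}(\mathbb{S}^{d-1})$. Your formulation is slightly cleaner in that it avoids the extra $k\times k$ orthogonal factor $V_2$ and the associated norm-preservation step, but the substance is identical.
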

\begin{proof}
We write SVD decomposition of $U$, since $U$ is orthonormal matrix, we have 
$U= V_1 I_{d\times k} V_2$ where $V_1\in \mathbb{R}^{d\times d}, V_2\in \mathbb{R}^{k\times k}$ are orthogonormal matrix. 

Then we have \begin{align}
\|U^\top \theta^d\|=\|V_2^\top I_{d\times k}^\top V_1^\top \theta^d\|=\|I_{d\times k}^\top V_1^\top \theta^d\| \nonumber
\end{align}
Since $\theta^d\sim \mathcal{U}(\mathbb{S}^{d-1})$, then $V_1^\top \theta^d\sim \mathcal{U}(\mathbb{S}^{d-1})$. 

Thus, $I_{d\times k}^\top \theta^d, I_{d\times k}^\top V_1^\top \theta^d$ have same distribution. Thus $\|I_{d\times k}^\top \theta^d\|,\|I_{d\times k}^\top V_1^\top \theta^d\|=\|U^\top \theta^d\|$ have same distribution. 
\end{proof}

Based on this, we can prove the statment (1) in proposition \ref{prop:empirical_convergence}. 
\begin{proof}[Proof of Proposition \ref{prop:empirical_convergence} (1)]
By the above lemma, it is sufficient to consider $U=I_{d\times k}$. 

Let $\theta^{d,g}\sim \mathcal{N}(0,I_d)$, and let $\theta^{d,g}[i],i\in[1:d]$ denote each component of $\theta^{d,g}$. Thus $\theta^{d,g}[1],\ldots \theta^{d,g}[d]\overset{i.i.d}{\sim} \mathcal{N}(0,1)$. 
We can redefine $\theta^d$ as $\theta^d=\frac{\theta^{d,g}}{\|\theta^{d,g}\|}$, thus, 
\begin{align}
\|U^\top \theta^d\|^2&=\frac{\|U^\top\theta^{d,g}\|^2}{\|\theta^{d,g}\|^2}  \nonumber\\
&=\frac{\sum_{i=1}^k \theta^{d,g}[i]^2}{\sum_{i=1}^d \theta^{d,g}[i]^2}\sim \text{Beta}(\frac{k}{2},\frac{d-k}{2})\nonumber
\end{align}

Thus, we have 
\begin{align}
\mathbb{E}[\|U^\top \theta^d\|^p]=\mathbb{E}[(\|U^\top \theta^d\|^2)^{p/2}]=\frac{\Gamma(k/2+p/2)\Gamma(d/2)}{\Gamma(k/2)\Gamma(d/2+p/2)}=\frac{C_k}{C_d}\nonumber. 
\end{align}

Note, $\|U^\top \theta_1^d\|,\ldots \|U^\top \theta_L^d\|$ are i.i.d. random variables, thus, we have 
\begin{align}
\mathbb{E}[ \widehat{ESSF(L)}] = \frac{1}{L}\mathbb{E}[\sum_{l=1}^L \| U^{\top} \theta_l^d \|^p]=\frac{C_k}{C_d}. \nonumber 
\end{align}
Similarly, 

\begin{align}
&\text{Var}[\widehat{ESSF(L)}]
=\frac{1}{L}\text{Var}[\|U^\top \theta^d_l\|^p]\nonumber 
\end{align}
where $\text{Var}[\|U^\top \theta^d_l\|^p]>0$, is the variance of the $p/2-$th power of a $\text{Beta}(k/2,(d-k)/2)$ variable, which is a constant only depends on $(d,k,p)$. 

\end{proof}

\begin{proof}[Proof of Proposition \ref{prop:empirical_convergence}(2)]
For each $\theta$, we have 
$W_p^p(\theta_\#\mu^d,\theta_\#\nu^d)=\|U^\top\theta^d\|^pW_p^p(\theta_\#\mu^k,\theta_\#\nu^k)$. Thus 
\begin{align}
\text{error}_N=\frac{1}{L}|\sum_{l=1}^L(1-\sum_{l'=1}^L\frac{\|U^\top \theta_{l'}\|^p}{\|U^\top\theta_l\|^p})\underbrace{W_p^p((\theta_l)_\#\mu^d,(\theta_l)_\#\nu^d)}_{A(\theta_l)}|\nonumber
\end{align}
where $A(\theta_l)$ is a function from $\mathbb{S}^{d-1}$ to $\mathbb{R}$ is a function.

Furthermore, from assumption \ref{asp:lowdsupp}, we have $A(\theta_l)=A(UU^\top\theta_l)$, thus,
\begin{align}
 |A(\theta_l)|&= |A(UU^\top\theta_l)|\nonumber\\
 &=|W_p^p((UU^\top\theta_l)_\#\mu,(UU^\top\theta_l)_\#\nu)|\nonumber\\
 &\leq \max_{x\in \text{supp}(\mu),y\in\text{supp}(\nu)}\|UU^\top \theta_l x-UU^\top\theta_l y\|^p\nonumber\\
 &\leq \underbrace{\max_{x\in\text{supp}(\mu),y\in\text{supp}(\nu)}\|x-y\|^p}_{K}\cdot \|UU^\top \theta_l\|^p &\text{By Cauchy–Schwarz inequality} \nonumber \\
 &=K \|U^\top \theta_l\|^p
\end{align}
where constant $K<\infty$ since $\mu,\nu$ are supported on compact sets. 

Thus, we have that
\begin{align}
\text{error}_N\leq K\cdot \left|\underbrace{\frac{1}{L}\sum_{l=1}^L\left(\|U^\top \theta_l\|^p-\sum_{l'=1}^L\|U^\top\theta_{l'}\|^p\right)}_{B_n}\right|\nonumber. 
\end{align}

By law of large numbers, with probability 1, $B_n\to 0$. Thus, $\text{error}_N\to 0$. 

It remains to show the convergence rate of $\text{error}_N$. Since each $\|U^\top\theta_l\|^p\in[0,1]$, for each $t>0$, by Hoeffding theorem, we have 
\begin{align}
\mathbb{P}(|B_n|\ge \epsilon)\leq e^{-2\epsilon^2L}\nonumber
\end{align}
Replacing $\epsilon$ by $\epsilon/K$, we have
$\mathbb{P}(\text{error}_L\leq \epsilon)\ge 1-2 e^{\frac{2\epsilon^2L}{K^2}}$ and we complete the proof.

\end{proof}



\subsubsection{Proof of Theorem \ref{prop:empirical_convergence}}

\subsection{Special case: Learning rate bound for the SW Gradient Flow problem}\label{sec:sw_gf}

In this section, we consider the following sliced gradient flow problem \cite{bonet2021sliced}:



\begin{align}
&\mu_{t+1}\gets \arg\min_{\mu\in\mathcal{P}_2(\mathbb{R}^k)}\frac{1}{2\tau} SW_2^2(\mu,\mu_t)+F(\mu)\nonumber\\  
& s.t. \mu_0=\mu^k\nonumber \nonumber \\
&\text{where }F(\mu):=SW_2^2(\mu,\nu^k), \text{for some }\nu^k \nonumber, \tau>0 
\end{align}


In the discrete setting, $\mu^k=\sum_{i=1}^n q_i^1\delta_{x_i},\nu^k=\sum_{j=1}^mq_j^2\delta_{y_j}$. Furthermore, we assume that the pmf of $\mu_t$ is fixed. Then  the above problem can be transferred to the following:
\begin{align}
&X^{t+1}\gets X^t-h_t\odot\nabla_{X}SW_2^2(\mu_t,\nu^k), \text{ where }\mu_t=\sum_{i=1}^nq_i^1\delta_{x_i^t},X^t=[x_1,\ldots, x_n]\label{eq:sw_gf_step}
\end{align}
where $\odot$ denote the element-wise product operator, and $h_t\in \mathbb{R}_+^n$. 

We will discuss how to select the appropriate learning rate $h_t$. 

\paragraph{Gradient and Hessian of Sliced Wasserstein distance.}
First, we discuss the gradient and Hessian matrix of the function $X\mapsto SW_2^2(\mu,\nu^k)$: 

Pick $\theta\in \mathbb{S}^{d-1}$ and suppose that $\gamma_\theta$ is an optimal transportation plan for $W_2^2(\theta_\# \mu,\theta_\# \nu^k)$. 

Then by \cite{bonneel2019spot}, we have:
\begin{align}
\nabla_{x_i} W_2^2(\theta_\# \mu,\theta_\# \nu^k)=2\theta\theta^\top (q_i^1 x_i-\sum_{j=1}^m y_j\gamma_{i,j}^\theta ),\forall x_i\nonumber 
\end{align}

Note, when $W_2^2(\theta_\#\mu,\theta_\#\nu^k)$ is induced by a Monge mapping, the above formulation can be simplified to $q_i^1\theta \theta^\top (x_i-T(x_i)).$

Thus the Hessian matrix is
$$\left[\frac{\partial^2 W_2^2(\theta_\#\mu,\theta_\#\nu^k)}{\partial x_i[l] \partial x_i[l']}\right]_{l,l'\in[1:d]}=2q_i^1\theta \theta^\top.$$

Therefore, the gradient for mapping $X\mapsto SW_2^2(\mu,\nu^k)$ with respect to each $x_i$ is given by: 

\begin{align}
g(x_i)&:=\nabla_{x_i}SW_2^2(\mu,\nu^k)=2\int_{\mathbb{S}^{d-1}} \theta\theta^\top (q_i^1 x_i- \sum_{i=1}^m y_j \gamma_{i,j}^\theta) d\mathcal{U}(\mathbb{S}^{d-1})(\theta)\nonumber\\
&\approx \frac{2}{N}\sum_{l=1}^L \theta_l\theta_l^\top (q_i^1 x_i-\sum_{j=1}^m y_j \gamma_{i,j}^{\theta_l}) \nonumber 
\end{align}

where the second line is the Monte carlo approximation. 

Similarly, the Hession matrix and the Monte carlo approximation are given by 
\begin{align}
H(x_i):=H_{x_i}(SW_2^2(\mu,\nu^k))&=2q_i^1\int \theta \theta^\top d\mathcal{U}(\mathbb{S}^{d-1})(\theta)=2q_i^1\frac{1}{k}I_k\nonumber \\
&\approx \frac{2q_i^1}{N}\sum_{i=1}^N \theta\theta^\top \nonumber 
\end{align}

By classical machine learning theory, the optimal learning rate for $x_i$, is given by 
\begin{align}
(h_t)_i=\frac{g(x_i)^\top g(x_i) }{g(x_i)^\top H g(x_i)}=\frac{k}{2q_i^1},\forall i\in[1:n]\label{eq:opt_learning_rate}
\end{align}

\begin{remark}
We consider a simplified case to intuitively understand the above learning rate. Suppose $\mu^k = q_i^1\delta_{x_i}$ and $\nu^k = q_i^1\delta_{y_j}$ (relaxing the assumption that $\mu^k$ and $\nu^k$ are probability measures). Then, we have:

\begin{align}
&SW_2^2(\mu^d, \nu^d) \nonumber\\
&= q_i^1 \mathbb{E}_{\theta \sim \mathcal{U}(\mathbb{S}^{d-1})}[(\theta^\top x_i - \theta^\top y_j)^2] \nonumber\\
&= q_i^1 \mathbb{E}_{\theta \sim \mathcal{U}(\mathbb{S}^{d-1})}[\|\theta\theta^\top x_i - \theta\theta^\top y_j\|^2] \nonumber\\
&= q_i^1 (x_i - y_j)^\top \mathbb{E}[\theta\theta^\top] (x_i - y_j) \nonumber\\
&= \frac{1}{k} q_i^1 \|x_i - y_j\|^2_2 \nonumber\\
&= \frac{1}{k} W_2^2(\mu, \nu). \nonumber
\end{align}

Thus the gradient with respect to $x_i$ becomes

$$
g(x_i) = 2\frac{q_i^1}{k}(x_i - y_j).
$$

Letting $t = 0$, we plug the learning rate from \eqref{eq:opt_learning_rate} and the gradient into \eqref{eq:sw_gf_step}, obtaining:

\begin{align}
x^{t+1}_i \gets x^t_i - (y_j - x_i) = y_j. \nonumber
\end{align}

Intuitively, the learning rate $(h_t)_i$ for $x_i$ is chosen such that the (negative) gradient becomes the displacement given by the classical OT transportation plan, i.e.,

$$
-g(x_i) \approx y_j - x_i.
$$

That is, when $\theta$ is sufficiently large (i.e., $\frac{1}{L}\sum_{i=1}^L \theta\theta^\top \approx \frac{1}{k}I_k$), $\mu^k_t$ will converge to $\nu^k$ in one step.

\end{remark}

\subsection{Proof of Proposition \ref{pro:gradient_error}}\label{sec:gradient_error}
Pick $x_i$ from $\{x_1,\ldots, x_n\}$. 
Note, based on assumption \ref{asp:lowdsupp}, $x_i=UU^\top x_i=Ux^k_i,\forall i\in[1:n]$. Thus, we have
\begin{align}
&\nabla_{x_i}SW_p^p(\hat\mu,\hat\nu;\frac{1}{L}\sum_{l=1}^L\delta_{\theta_l})\nonumber\\
&=U\nabla_{x_i^k}SW_p^p(\hat\mu,\hat\nu;\frac{1}{L}\sum_{l=1}^L\delta_{\theta_l})\nonumber \\
&=U\nabla_{x_i^k}\sum_{l=1}^L\frac{1}{L}\sum_{i,j}(q_i^1\|U^\top \theta_l\|(\theta^k_l)^\top(x^k_i-\frac{1}{q_i^1}y^k_j\gamma_{i,j}^{\theta_l}))^p\nonumber\\
&=q_i^1p U\frac{1}{L}\sum_{l=1}^L \|U^\top \theta_l\|(\theta^k_l)^\top(x_i^k-\frac{1}{q_i^1}y_j^k\gamma_{i,j}^{\theta_l})^{p-1}. \nonumber  
\end{align}

Similarly, 
\begin{align}
&\nabla_{x_i}SW_p^p(\hat\mu^k,\hat\nu^k;\frac{1}{L}\sum_{l=1}^L\delta_{\theta_l^k})\nonumber\\
&=q_i^1pU\frac{1}{L}\sum_{l=1}^L (\theta_l^k)^\top (x_i^k-\frac{1}{q_i^1}y_j^k\gamma_{i,j}^{\theta_l})^{p-1}
\end{align}

where $\gamma^{\theta_{l}}$ is the optimal transportation plan for 1D problem $W_p^p((\theta_l)_\#\hat\mu,(\theta_l)_\#\hat\nu)=W_p^p((\theta_l^k)_\#\hat\mu^k,(\theta_l^k)_\#\hat\nu^k)$. 




Thus, 
\begin{align}
\epsilon_L(x_i)&=\nabla_{x_i} SW_p^p(\hat{\mu}_d,\hat{\nu}_d;\sum_{l=1}^L\delta_{\theta_l^d}) -\widehat{ESSF}(L) \cdot \nabla_{x_i}  SW_p^p(\hat{\mu}_k,\hat{\nu}_k;\sum_{l=1}^L\delta_{\theta_l^k})\nonumber\\
&=\frac{pq_i^1}{L}U\sum_{l=1}^L (\|U^\top \theta_l\|^p-\frac{1}{L}\sum_{l'=1}^L\|U^\top \theta_{l'}\|^p)\underbrace{\theta_l^k((\theta_l^k)^\top (x_i^k-\frac{1}{q_i^1}\sum_{j=1}^{m}y_j^k\gamma^{\theta_l}_{i,j}))^{p-1}}_{A(\theta_l^k)}.\nonumber
\end{align}
where $A(\theta_l^k)$ is a vector function from $\mathbb{S}^{k-1}$ to $\mathbb{R}^k$.
By Cauchy Schwatz inequality, and the fact $\|\theta_l^k\|=1$, we have
\begin{align}
\|A(\theta_l^k)\|\leq \max_{x_i,y_j} \|x_i^k-y_j^k\|^{p-1} =\max_{x_i,y_j}\|x_i-y_j\|^{p-1}\nonumber  
\end{align}

Then we have: 
\begin{align}
\|\epsilon_L(x_i)\|&= pq_i^1\left|\underbrace{\sum_{l=1}^L\frac{1}{L}\left(\|U^\top \theta_l\|^{p}-\frac{1}{L}\sum_{l'=1}^L\|U^\top \theta_{l'}\|^p\right)}_{B_L}\right| \|UA(\theta_l^k)\|\nonumber\\
&= pq_i^1 \|A(\theta_l^k)\| |B_L|\nonumber \\
&\leq pq_i^1 K|B_L|\nonumber.
\end{align}

By law of large number, with probability 1, $B_L\to 0$, thus $\|(\epsilon_L)\|\to 0$, that is $\epsilon_L\to 0_d$.

It remains to bound the convergence rate of $\|\epsilon_L\|$. 

By Hoeffding inequality and the fact $\|U^\top\theta_l\|^2\in[0,1]$, we have 
$$\mathbb{P}(|B_L|\ge\epsilon)\leq 2e^{-\epsilon^2L}.$$

Replacing $\epsilon$ by $\epsilon/(pq_i^1K)$, we obtain:
\begin{align}
\mathbb{P}(\|\epsilon_L(x_i)\|\leq \epsilon)\ge 1-2e^{-\epsilon^2L/(pq_i^1K)^2.}\nonumber 
\end{align} 
and we complete the proof.



\section{Additional Visualization and Experimental Results}
\label{app:numerical}

\begin{sidewaystable}[t]
\centering
\tiny
\setlength{\tabcolsep}{0.2pt} 
\scalebox{0.9}{ 
}
\captionsetup{justification=centering} 
\caption{Numerical results for Gradient Flow with the Swiss dataset ($d=100$)}
\label{tab:appendix_gf_swiss100}
\end{sidewaystable}

\begin{figure}[h]
    \centering
    \includegraphics[width=\textwidth]{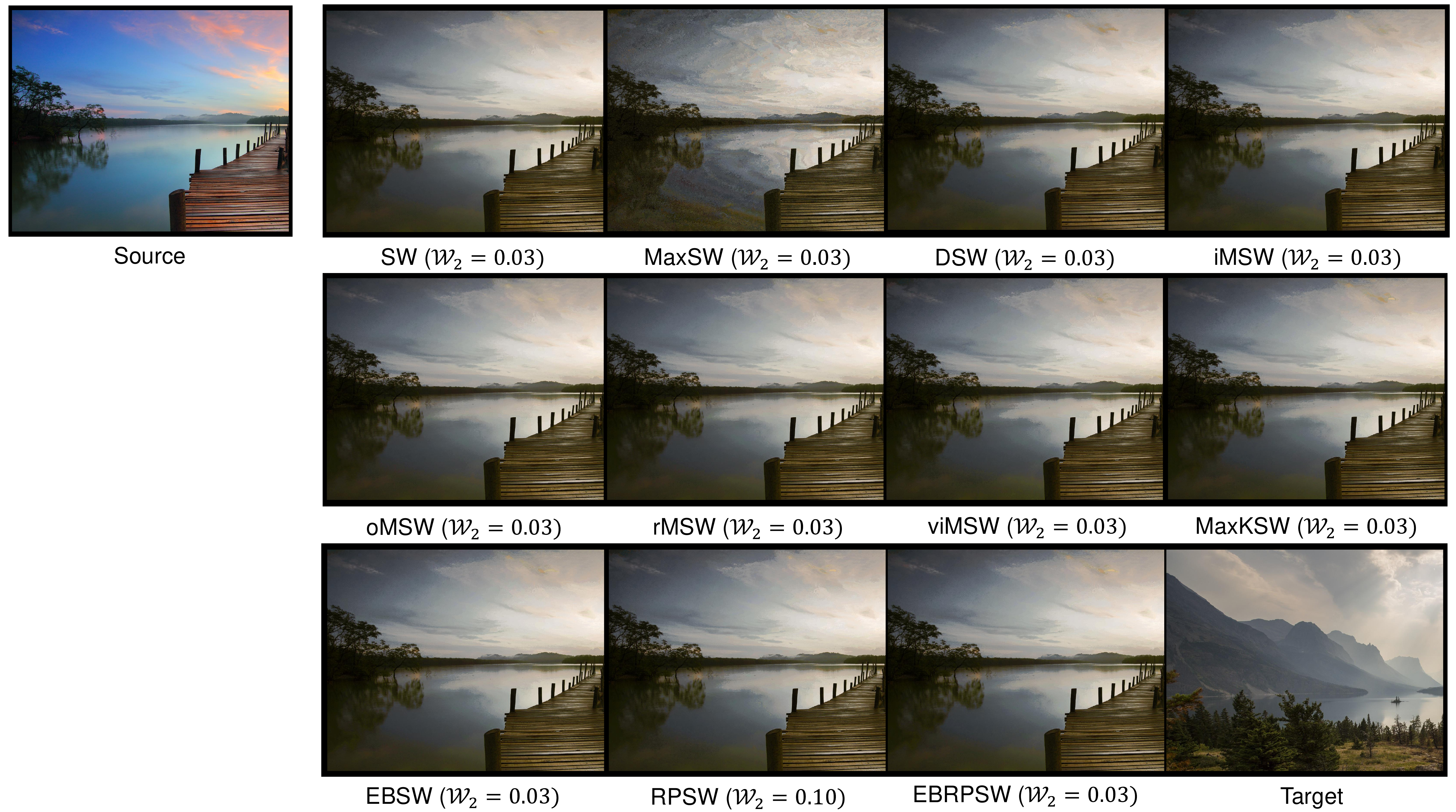}
    \captionsetup{justification=centering} 
    \caption{Set 1 (Color Transfer)}
    \label{fig:appendix_ct1}
\end{figure}

\begin{figure}[h]
    \centering
    \includegraphics[width=\textwidth]{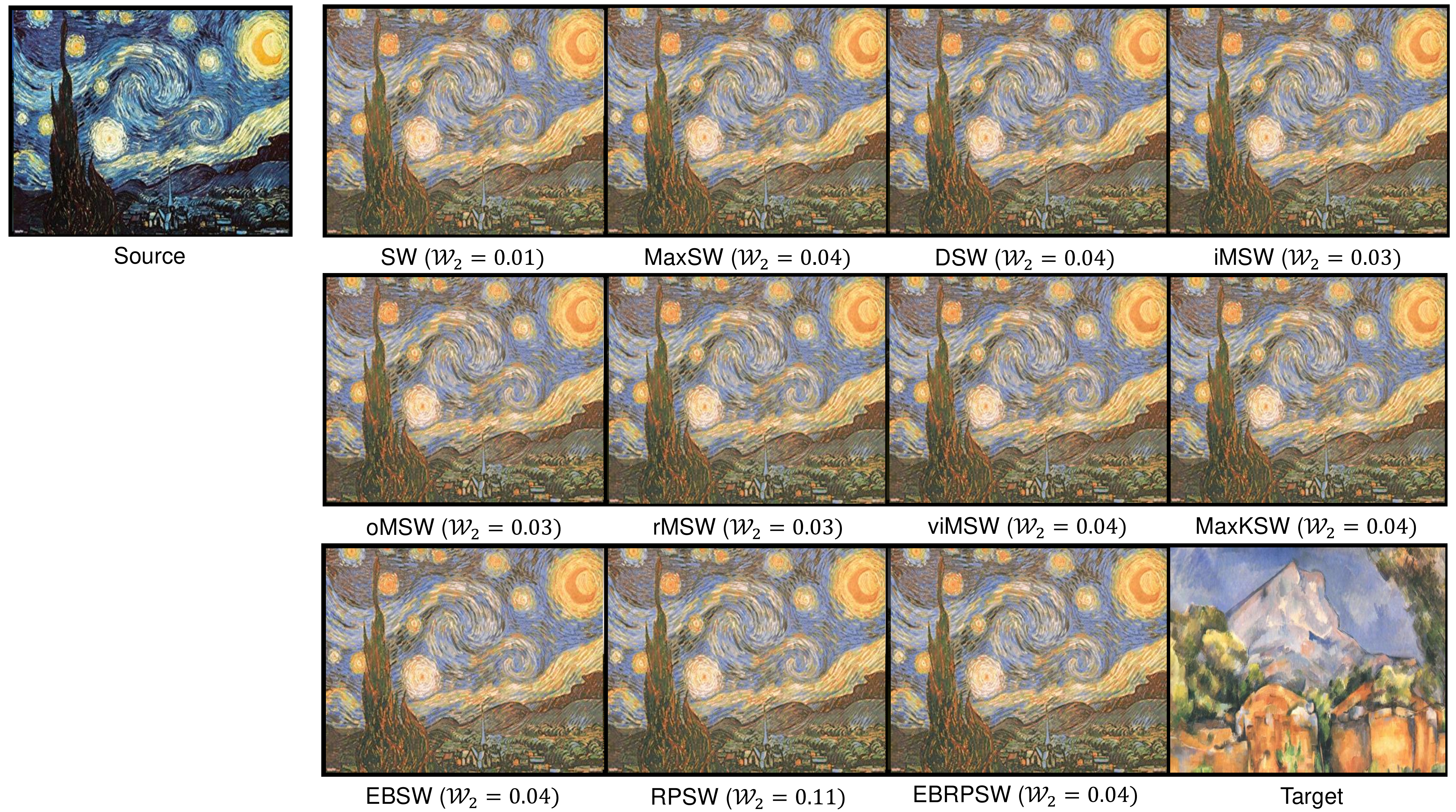}
    \captionsetup{justification=centering} 
    \caption{Set 2 (Color Transfer)}
    \label{fig:appendix_ct2}
\end{figure}

\begin{figure}[h]
    \centering
    \includegraphics[width=\textwidth]{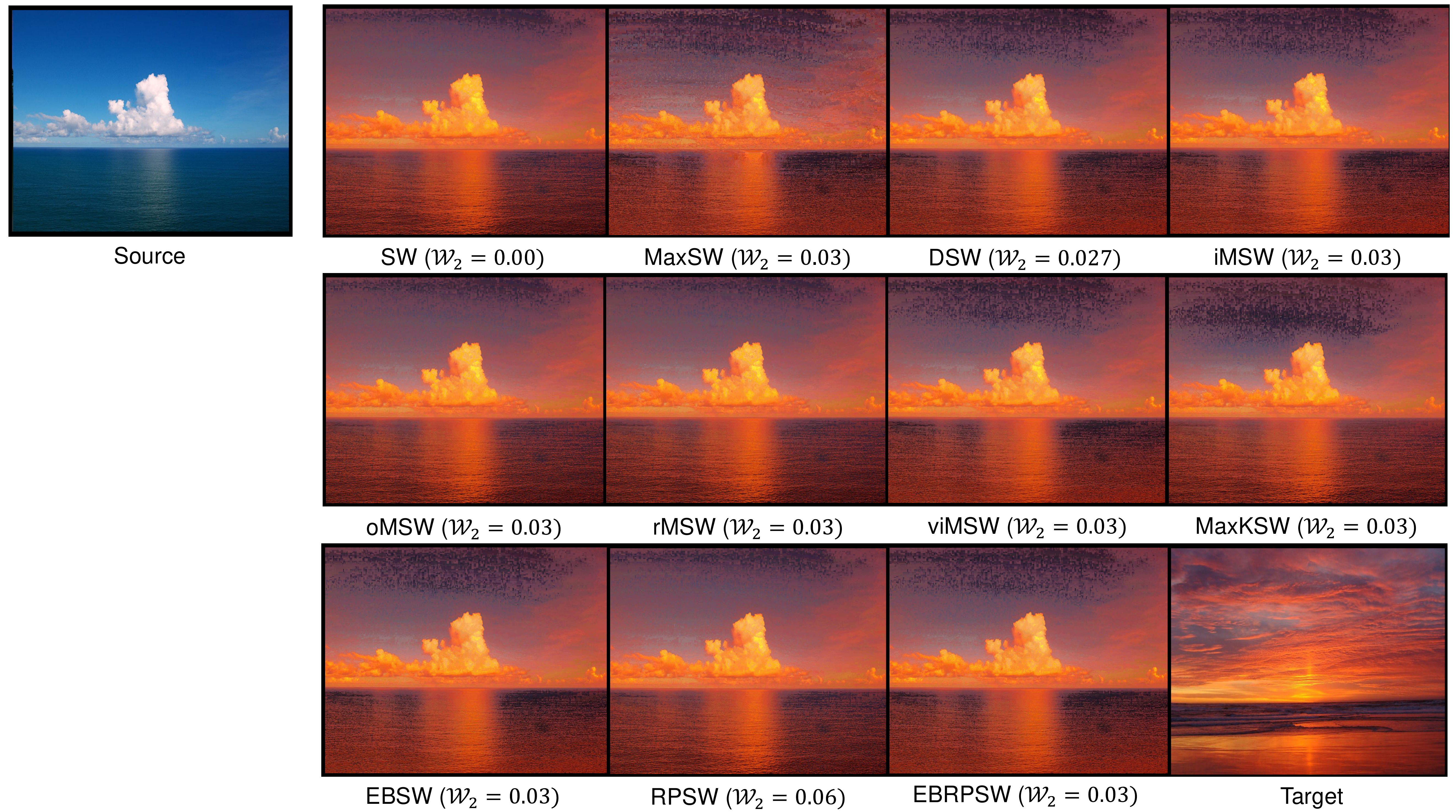}
    \captionsetup{justification=centering} 
    \caption{Set 3 (Color Transfer)}
    \label{fig:appendix_ct3}
\end{figure}

\begin{sidewaystable}[t]
\centering
\small
\setlength{\tabcolsep}{0.2pt}
\renewcommand{\arraystretch}{0.9}
\begin{tabular}{@{}lcccccccccccc@{}}
\toprule
\textbf{LR} & \textbf{SW} & \textbf{MaxSW} & \textbf{DSW} & \textbf{MaxKSW} & \textbf{iMSW} & \textbf{viMSW} & \textbf{oMSW} & \textbf{rMSW} & \textbf{EBSW} & \textbf{RPSW} & \textbf{EBRPSW} \\
\midrule
100 & 294.00 {\scriptsize $\pm$ 0.38} & 287.00 {\scriptsize $\pm$ 4.65} & 227.00 {\scriptsize $\pm$ 4.40} & 275.00 {\scriptsize $\pm$ 7.53} & 286.00 {\scriptsize $\pm$ 7.55} & 286.00 {\scriptsize $\pm$ 7.36} & 294.00 {\scriptsize $\pm$ 0.38} & 201.00 {\scriptsize $\pm$ 3.34} & 291.00 {\scriptsize $\pm$ 0.71} & 291.00 {\scriptsize $\pm$ 0.79} & 291.00 {\scriptsize $\pm$ 0.52} \\
80 & 294.00 {\scriptsize $\pm$ 0.38} & 286.00 {\scriptsize $\pm$ 4.00} & 227.00 {\scriptsize $\pm$ 4.49} & 275.00 {\scriptsize $\pm$ 7.71} & 286.00 {\scriptsize $\pm$ 7.55} & 286.00 {\scriptsize $\pm$ 7.36} & 294.00 {\scriptsize $\pm$ 0.38} & 168.00 {\scriptsize $\pm$ 4.27} & 289.00 {\scriptsize $\pm$ 0.56} & 289.00 {\scriptsize $\pm$ 0.54} & 290.00 {\scriptsize $\pm$ 0.47} \\
50 & 294.00 {\scriptsize $\pm$ 0.38} & 280.00 {\scriptsize $\pm$ 4.74} & 228.00 {\scriptsize $\pm$ 5.45} & 276.00 {\scriptsize $\pm$ 0.85} & 286.00 {\scriptsize $\pm$ 7.55} & 286.00 {\scriptsize $\pm$ 7.36} & 294.00 {\scriptsize $\pm$ 0.38} & 147.00 {\scriptsize $\pm$ 0.45} & 282.00 {\scriptsize $\pm$ 0.86} & 282.00 {\scriptsize $\pm$ 1.65} & 284.00 {\scriptsize $\pm$ 0.81} \\
30 & 294.00 {\scriptsize $\pm$ 0.38} & 263.00 {\scriptsize $\pm$ 6.38} & 232.00 {\scriptsize $\pm$ 11.00} & 258.00 {\scriptsize $\pm$ 1.30} & 286.00 {\scriptsize $\pm$ 7.50} & 286.00 {\scriptsize $\pm$ 7.36} & 294.00 {\scriptsize $\pm$ 0.38} & 162.00 {\scriptsize $\pm$ 7.25} & 232.00 {\scriptsize $\pm$ 35.60} & 272.00 {\scriptsize $\pm$ 8.80} & 268.00 {\scriptsize $\pm$ 0.79} \\
10 & 294.00 {\scriptsize $\pm$ 0.38} & 187.00 {\scriptsize $\pm$ 15.70} & 179.00 {\scriptsize $\pm$ 1.86} & 173.00 {\scriptsize $\pm$ 1.69} & 286.00 {\scriptsize $\pm$ 7.50} & 286.00 {\scriptsize $\pm$ 7.36} & 145.00 {\scriptsize $\pm$ 1.13} & 95.90 {\scriptsize $\pm$ 0.61} & 161.00 {\scriptsize $\pm$ 3.78} & 191.00 {\scriptsize $\pm$ 1.47} & 163.00 {\scriptsize $\pm$ 1.10} \\
8 & 294.00 {\scriptsize $\pm$ 0.38} & 166.00 {\scriptsize $\pm$ 17.30} & 168.00 {\scriptsize $\pm$ 3.83} & 151.00 {\scriptsize $\pm$ 3.05} & 286.00 {\scriptsize $\pm$ 7.50} & 286.00 {\scriptsize $\pm$ 7.36} & 122.00 {\scriptsize $\pm$ 0.20} & 89.70 {\scriptsize $\pm$ 0.32} & 152.00 {\scriptsize $\pm$ 0.88} & 169.00 {\scriptsize $\pm$ 4.85} & 181.00 {\scriptsize $\pm$ 1.50} \\
5 & 165.00 {\scriptsize $\pm$ 5.93} & 123.00 {\scriptsize $\pm$ 17.90} & 134.00 {\scriptsize $\pm$ 10.20} & 101.00 {\scriptsize $\pm$ 4.33} & 286.00 {\scriptsize $\pm$ 7.50} & 286.00 {\scriptsize $\pm$ 7.36} & 53.40 {\scriptsize $\pm$ 0.29} & 50.90 {\scriptsize $\pm$ 0.31} & 143.00 {\scriptsize $\pm$ 5.02} & 123.00 {\scriptsize $\pm$ 4.24} & 133.00 {\scriptsize $\pm$ 5.45} \\
3 & 48.40 {\scriptsize $\pm$ 0.01} & 85.60 {\scriptsize $\pm$ 15.60} & 88.40 {\scriptsize $\pm$ 8.86} & 71.40 {\scriptsize $\pm$ 1.34} & 293.90 {\scriptsize $\pm$ 0.40} & 293.90 {\scriptsize $\pm$ 0.40} & 29.60 {\scriptsize $\pm$ 0.01} & 34.60 {\scriptsize $\pm$ 1.36} & 85.50 {\scriptsize $\pm$ 3.18} & 43.30 {\scriptsize $\pm$ 2.77} & 80.20 {\scriptsize $\pm$ 6.35} \\
1 & 0.03 {\scriptsize $\pm$ 0.02} & 28.60 {\scriptsize $\pm$ 4.04} & 27.70 {\scriptsize $\pm$ 3.07} & 22.40 {\scriptsize $\pm$ 2.20} & 293.90 {\scriptsize $\pm$ 0.40} & 293.90 {\scriptsize $\pm$ 0.40} & 11.60 {\scriptsize $\pm$ 0.01} & 10.10 {\scriptsize $\pm$ 1.78} & 29.20 {\scriptsize $\pm$ 1.93} & 23.90 {\scriptsize $\pm$ 8.57} & 30.00 {\scriptsize $\pm$ 2.60} \\
$8\times10^{-1}$ & 0.04 {\scriptsize $\pm$ 0.04} & 23.70 {\scriptsize $\pm$ 7.05} & 25.50 {\scriptsize $\pm$ 6.84} & 16.60 {\scriptsize $\pm$ 0.06} & 293.90 {\scriptsize $\pm$ 0.40} & 293.90 {\scriptsize $\pm$ 0.40} & 8.59 {\scriptsize $\pm$ 0.43} & 7.33 {\scriptsize $\pm$ 0.25} & 27.50 {\scriptsize $\pm$ 0.30} & 23.70 {\scriptsize $\pm$ 4.18} & 25.90 {\scriptsize $\pm$ 0.43} \\
$5\times10^{-1}$ & 0.04 {\scriptsize $\pm$ 0.04} & 13.90 {\scriptsize $\pm$ 3.30} & 11.50 {\scriptsize $\pm$ 4.65} & 11.30 {\scriptsize $\pm$ 0.70} & 293.90 {\scriptsize $\pm$ 0.40} & 293.90 {\scriptsize $\pm$ 0.40} & 4.99 {\scriptsize $\pm$ 0.01} & 3.86 {\scriptsize $\pm$ 0.16} & 14.00 {\scriptsize $\pm$ 0.19} & 7.90 {\scriptsize $\pm$ 2.08} & 15.40 {\scriptsize $\pm$ 0.36} \\
$3\times10^{-1}$ & 0.09 {\scriptsize $\pm$ 0.07} & 10.30 {\scriptsize $\pm$ 3.68} & 9.24 {\scriptsize $\pm$ 0.24} & 6.73 {\scriptsize $\pm$ 0.14} & 293.90 {\scriptsize $\pm$ 0.40} & 293.90 {\scriptsize $\pm$ 0.40} & 2.97 {\scriptsize $\pm$ 0.00} & 2.50 {\scriptsize $\pm$ 0.55} & 10.20 {\scriptsize $\pm$ 0.38} & 5.19 {\scriptsize $\pm$ 2.25} & 11.20 {\scriptsize $\pm$ 0.88} \\
$1\times10^{-1}$ & 0.03 {\scriptsize $\pm$ 0.01} & 5.55 {\scriptsize $\pm$ 1.22} & 2.98 {\scriptsize $\pm$ 0.39} & 4.33 {\scriptsize $\pm$ 0.01} & 293.90 {\scriptsize $\pm$ 0.40} & 293.90 {\scriptsize $\pm$ 0.40} & 0.97 {\scriptsize $\pm$ 0.00} & 0.89 {\scriptsize $\pm$ 0.01} & 4.99 {\scriptsize $\pm$ 0.22} & 1.69 {\scriptsize $\pm$ 0.49} & 5.55 {\scriptsize $\pm$ 0.00} \\
$8\times10^{-2}$ & 0.04 {\scriptsize $\pm$ 0.00} & 5.30 {\scriptsize $\pm$ 1.26} & 2.90 {\scriptsize $\pm$ 0.09} & 3.86 {\scriptsize $\pm$ 0.18} & 293.90 {\scriptsize $\pm$ 0.40} & 293.90 {\scriptsize $\pm$ 0.40} & 0.78 {\scriptsize $\pm$ 0.00} & 0.60 {\scriptsize $\pm$ 0.02} & 4.17 {\scriptsize $\pm$ 0.00} & 1.32 {\scriptsize $\pm$ 0.41} & 4.89 {\scriptsize $\pm$ 0.01} \\
$5\times10^{-2}$ & 0.03 {\scriptsize $\pm$ 0.01} & 4.85 {\scriptsize $\pm$ 1.70} & 1.91 {\scriptsize $\pm$ 0.04} & 3.13 {\scriptsize $\pm$ 0.01} & 210.00 {\scriptsize $\pm$ 7.20} & 184.00 {\scriptsize $\pm$ 11.00} & 0.49 {\scriptsize $\pm$ 0.00} & 0.55 {\scriptsize $\pm$ 0.17} & 3.81 {\scriptsize $\pm$ 0.03} & 0.87 {\scriptsize $\pm$ 0.28} & 3.85 {\scriptsize $\pm$ 0.03} \\
$3\times10^{-2}$ & 0.06 {\scriptsize $\pm$ 0.04} & 3.68 {\scriptsize $\pm$ 1.01} & 1.18 {\scriptsize $\pm$ 0.06} & 2.74 {\scriptsize $\pm$ 0.08} & 5.59 {\scriptsize $\pm$ 0.00} & 5.07 {\scriptsize $\pm$ 0.03} & 0.29 {\scriptsize $\pm$ 0.00} & 0.22 {\scriptsize $\pm$ 0.00} & 3.30 {\scriptsize $\pm$ 0.00} & 0.56 {\scriptsize $\pm$ 0.15} & 3.47 {\scriptsize $\pm$ 0.06} \\
$1\times10^{-2}$ & 1.18 {\scriptsize $\pm$ 0.08} & 2.48 {\scriptsize $\pm$ 0.48} & 0.25 {\scriptsize $\pm$ 0.07} & 2.00 {\scriptsize $\pm$ 0.01} & 0.35 {\scriptsize $\pm$ 0.00} & 0.31 {\scriptsize $\pm$ 0.01} & 0.10 {\scriptsize $\pm$ 0.00} & 0.07 {\scriptsize $\pm$ 0.00} & 2.45 {\scriptsize $\pm$ 0.04} & 0.21 {\scriptsize $\pm$ 0.04} & 2.55 {\scriptsize $\pm$ 0.01} \\
$8\times10^{-3}$ & 1.36 {\scriptsize $\pm$ 0.10}& 2.26 {\scriptsize $\pm$ 0.47} & 0.17 {\scriptsize $\pm$ 0.04} & 1.81 {\scriptsize $\pm$ 0.02} & 0.25 {\scriptsize $\pm$ 0.00} & 0.23 {\scriptsize $\pm$ 0.00} & 0.07 {\scriptsize $\pm$ 0.00} & 0.06 {\scriptsize $\pm$ 0.00} & 2.34 {\scriptsize $\pm$ 0.02} & 0.19 {\scriptsize $\pm$ 0.03} & 2.41 {\scriptsize $\pm$ 0.02} \\
$5\times10^{-3}$ & 1.73 {\scriptsize $\pm$ 0.08} & 1.70 {\scriptsize $\pm$ 0.42} & 0.17 {\scriptsize $\pm$ 0.00} & 1.30 {\scriptsize $\pm$ 0.01} & 0.13 {\scriptsize $\pm$ 0.03} & 0.13 {\scriptsize $\pm$ 0.02} & 0.05 {\scriptsize $\pm$ 0.00} & 0.04 {\scriptsize $\pm$ 0.00} & 2.00 {\scriptsize $\pm$ 0.03} & 0.13 {\scriptsize $\pm$ 0.02} & 2.03 {\scriptsize $\pm$ 0.02} \\
$3\times10^{-3}$ & 2.12 {\scriptsize $\pm$ 0.05} & 1.14 {\scriptsize $\pm$ 0.40} & 0.11 {\scriptsize $\pm$ 0.01} & 0.76 {\scriptsize $\pm$ 0.01} & 0.09 {\scriptsize $\pm$ 0.00} & 0.06 {\scriptsize $\pm$ 0.00} & 0.04 {\scriptsize $\pm$ 0.00} & 0.03 {\scriptsize $\pm$ 0.00} & 1.48 {\scriptsize $\pm$ 0.02} & 0.10 {\scriptsize $\pm$ 0.00} & 1.50 {\scriptsize $\pm$ 0.01} \\
$1\times10^{-3}$ & 3.05 {\scriptsize $\pm$ 0.03} & 0.15 {\scriptsize $\pm$ 0.10} & 0.03 {\scriptsize $\pm$ 0.00} & 0.09 {\scriptsize $\pm$ 0.05} & 0.03 {\scriptsize $\pm$ 0.00} & 0.03 {\scriptsize $\pm$ 0.00} & 0.03 {\scriptsize $\pm$ 0.00} & 0.03 {\scriptsize $\pm$ 0.00} & 0.03 {\scriptsize $\pm$ 0.00} & 0.13 {\scriptsize $\pm$ 0.02} & 0.10 {\scriptsize $\pm$ 0.07} \\
$8\times10^{-4}$ & 3.34 {\scriptsize $\pm$ 0.02} & 0.18 {\scriptsize $\pm$ 0.14} & 0.03 {\scriptsize $\pm$ 0.00} & 0.04 {\scriptsize $\pm$ 0.00} & 0.03 {\scriptsize $\pm$ 0.00} & 0.03 {\scriptsize $\pm$ 0.00} & 0.06 {\scriptsize $\pm$ 0.03} & 0.03 {\scriptsize $\pm$ 0.00} & 0.03 {\scriptsize $\pm$ 0.00} & 0.13 {\scriptsize $\pm$ 0.00} & 0.04 {\scriptsize $\pm$ 0.01} \\
$5\times10^{-4}$ & 4.25 {\scriptsize $\pm$ 0.00} & 0.09 {\scriptsize $\pm$ 0.06} & 0.03 {\scriptsize $\pm$ 0.00} & 0.03 {\scriptsize $\pm$ 0.00} & 0.03 {\scriptsize $\pm$ 0.00} & 0.03 {\scriptsize $\pm$ 0.00} & 0.05 {\scriptsize $\pm$ 0.03} & 0.13 {\scriptsize $\pm$ 0.10} & 0.07 {\scriptsize $\pm$ 0.01} & 0.14 {\scriptsize $\pm$ 0.05} & 0.03 {\scriptsize $\pm$ 0.01} \\
$3\times10^{-4}$ & 6.02 {\scriptsize $\pm$ 0.04} & 0.03 {\scriptsize $\pm$ 0.00} & 0.05 {\scriptsize $\pm$ 0.02} & 0.03 {\scriptsize $\pm$ 0.00} & 0.03 {\scriptsize $\pm$ 0.00} & 0.03 {\scriptsize $\pm$ 0.00} & 0.06 {\scriptsize $\pm$ 0.03} & 0.03 {\scriptsize $\pm$ 0.00} & 0.05 {\scriptsize $\pm$ 0.03} & 0.16 {\scriptsize $\pm$ 0.01} & 0.08 {\scriptsize $\pm$ 0.03} \\
$1\times10^{-4}$ & 13.91 {\scriptsize $\pm$ 0.32} & 6.76 {\scriptsize $\pm$ 4.76} & 0.84 {\scriptsize $\pm$ 0.07} & 1.19 {\scriptsize $\pm$ 0.20} & 0.03 {\scriptsize $\pm$ 0.00} & 0.03 {\scriptsize $\pm$ 0.00} & 1.38 {\scriptsize $\pm$ 0.11} & 1.42 {\scriptsize $\pm$ 0.08} & 1.32 {\scriptsize $\pm$ 0.11} & 5.20 {\scriptsize $\pm$ 0.05} & 1.57 {\scriptsize $\pm$ 0.02} \\
\bottomrule
\end{tabular}
\captionsetup{justification=centering} 
\caption{Results for Color Transfer (Set 1).}
\label{tab:ct_full1}
\end{sidewaystable}

\begin{sidewaystable}[t]
\centering
\small
\setlength{\tabcolsep}{0.2pt}
\renewcommand{\arraystretch}{0.9}
\begin{tabular}{@{}lcccccccccccc@{}}
\toprule
\textbf{LR} & \textbf{SW} & \textbf{MaxSW} & \textbf{DSW} & \textbf{MaxKSW} & \textbf{iMSW} & \textbf{viMSW} & \textbf{oMSW} & \textbf{rMSW} & \textbf{EBSW} & \textbf{RPSW} & \textbf{EBRPSW} \\
\midrule
100 & 271.54 {\scriptsize $\pm$ 0.22} & 255.40 {\scriptsize $\pm$ 2.41} & 268.04 {\scriptsize $\pm$ 0.71} & 266.08 {\scriptsize $\pm$ 0.17} & 250.17 {\scriptsize $\pm$ 2.50} & 250.21 {\scriptsize $\pm$ 2.46} & 269.68 {\scriptsize $\pm$ 2.09} & 184.92 {\scriptsize $\pm$ 0.00} & 269.68 {\scriptsize $\pm$ 2.09} & 267.88 {\scriptsize $\pm$ 0.27} & 267.89 {\scriptsize $\pm$ 0.27} \\
80 & 271.54 {\scriptsize $\pm$ 0.22} & 267.88 {\scriptsize $\pm$ 0.27} & 268.93 {\scriptsize $\pm$ 0.93} & 265.72 {\scriptsize $\pm$ 0.30} & 250.17 {\scriptsize $\pm$ 2.50} & 250.21 {\scriptsize $\pm$ 2.46} & 269.68 {\scriptsize $\pm$ 2.09} & 183.33 {\scriptsize $\pm$ 0.08} & 271.54 {\scriptsize $\pm$ 0.22} & 267.84 {\scriptsize $\pm$ 0.03} & 267.86 {\scriptsize $\pm$ 0.03} \\
50 & 271.54 {\scriptsize $\pm$ 0.22} & 267.67 {\scriptsize $\pm$ 0.25} & 270.75 {\scriptsize $\pm$ 0.57} & 264.57 {\scriptsize $\pm$ 0.25} & 250.17 {\scriptsize $\pm$ 2.50} & 250.21 {\scriptsize $\pm$ 2.46} & 269.68 {\scriptsize $\pm$ 2.09} & 181.88 {\scriptsize $\pm$ 0.37} & 271.54 {\scriptsize $\pm$ 0.22} & 267.48 {\scriptsize $\pm$ 0.22} & 267.66 {\scriptsize $\pm$ 0.25} \\
30 & 206.09 {\scriptsize $\pm$ 0.32} & 266.45 {\scriptsize $\pm$ 0.25} & 271.54 {\scriptsize $\pm$ 0.22} & 263.23 {\scriptsize $\pm$ 0.32} & 250.21 {\scriptsize $\pm$ 2.46} & 250.21 {\scriptsize $\pm$ 2.46} & 269.33 {\scriptsize $\pm$ 1.63} & 150.88 {\scriptsize $\pm$ 0.68} & 266.07 {\scriptsize $\pm$ 0.27} & 265.47 {\scriptsize $\pm$ 0.10} & 266.50 {\scriptsize $\pm$ 0.30} \\
10 & 267.89 {\scriptsize $\pm$ 0.27} & 231.34 {\scriptsize $\pm$ 0.18} & 237.76 {\scriptsize $\pm$ 8.30} & 207.42 {\scriptsize $\pm$ 14.88} & 250.45 {\scriptsize $\pm$ 2.28} & 250.45 {\scriptsize $\pm$ 2.28} & 145.92 {\scriptsize $\pm$ 0.14} & 110.15 {\scriptsize $\pm$ 0.05} & 258.87 {\scriptsize $\pm$ 0.32} & 209.55 {\scriptsize $\pm$ 8.08} & 188.90 {\scriptsize $\pm$ 1.52} \\
8 & 267.89 {\scriptsize $\pm$ 0.27} & 202.37 {\scriptsize $\pm$ 0.25} & 208.99 {\scriptsize $\pm$ 0.19} & 170.61 {\scriptsize $\pm$ 3.62} & 267.89 {\scriptsize $\pm$ 0.27} & 267.89 {\scriptsize $\pm$ 0.27} & 122.14 {\scriptsize $\pm$ 0.03} & 70.60 {\scriptsize $\pm$ 0.03} & 241.47 {\scriptsize $\pm$ 0.33} & 197.62 {\scriptsize $\pm$ 2.41} & 189.23 {\scriptsize $\pm$ 7.49} \\
5 & 267.89 {\scriptsize $\pm$ 0.27} & 155.92 {\scriptsize $\pm$ 7.23} & 160.85 {\scriptsize $\pm$ 2.04} & 104.98 {\scriptsize $\pm$ 7.47} & 250.96 {\scriptsize $\pm$ 2.44} & 250.78 {\scriptsize $\pm$ 2.66} & 63.83 {\scriptsize $\pm$ 0.05} & 52.42 {\scriptsize $\pm$ 0.06} & 145.10 {\scriptsize $\pm$ 23.12} & 109.62 {\scriptsize $\pm$ 6.34} & 138.06 {\scriptsize $\pm$ 4.07} \\
3 & 67.92 {\scriptsize $\pm$ 3.09} & 81.71 {\scriptsize $\pm$ 3.84} & 82.30 {\scriptsize $\pm$ 16.46} & 62.80 {\scriptsize $\pm$ 2.47} & 249.98 {\scriptsize $\pm$ 2.76} & 249.98 {\scriptsize $\pm$ 2.76} & 29.07 {\scriptsize $\pm$ 0.02} & 26.25 {\scriptsize $\pm$ 0.45} & 93.20 {\scriptsize $\pm$ 1.46} & 66.69 {\scriptsize $\pm$ 3.35} & 89.42 {\scriptsize $\pm$ 2.56} \\
1 & 4.82 {\scriptsize $\pm$ 0.05} & 39.43 {\scriptsize $\pm$ 1.40} & 17.67 {\scriptsize $\pm$ 9.12} & 21.70 {\scriptsize $\pm$ 0.29} & 259.34 {\scriptsize $\pm$ 0.43} & 259.94 {\scriptsize $\pm$ 0.03} & 9.43 {\scriptsize $\pm$ 0.19} & 10.18 {\scriptsize $\pm$ 1.32} & 30.18 {\scriptsize $\pm$ 1.27} & 22.54 {\scriptsize $\pm$ 3.17} & 34.46 {\scriptsize $\pm$ 2.11} \\
$8\times10^{-1}$ & 0.01 {\scriptsize $\pm$ 0.00} & 31.62 {\scriptsize $\pm$ 2.06} & 17.77 {\scriptsize $\pm$ 4.39} & 18.39 {\scriptsize $\pm$ 0.09} & 267.62 {\scriptsize $\pm$ 0.01} & 267.62 {\scriptsize $\pm$ 0.01} & 9.74 {\scriptsize $\pm$ 0.08} & 7.77 {\scriptsize $\pm$ 1.12} & 26.50 {\scriptsize $\pm$ 1.50} & 18.69 {\scriptsize $\pm$ 0.09} & 25.70 {\scriptsize $\pm$ 1.05} \\
$5\times10^{-1}$ & 0.01 {\scriptsize $\pm$ 0.00} & 19.61 {\scriptsize $\pm$ 0.76} & 11.12 {\scriptsize $\pm$ 0.13} & 15.37 {\scriptsize $\pm$ 1.67} & 267.89 {\scriptsize $\pm$ 0.27} & 267.89 {\scriptsize $\pm$ 0.27} & 5.38 {\scriptsize $\pm$ 0.33} & 4.35 {\scriptsize $\pm$ 0.32} & 17.35 {\scriptsize $\pm$ 0.83} & 12.48 {\scriptsize $\pm$ 0.86} & 17.65 {\scriptsize $\pm$ 0.39} \\
$3\times10^{-1}$ & 0.03 {\scriptsize $\pm$ 0.01} & 12.00 {\scriptsize $\pm$ 0.85} & 1.31 {\scriptsize $\pm$ 0.01} & 8.26 {\scriptsize $\pm$ 0.09} & 267.89 {\scriptsize $\pm$ 0.27} & 267.89 {\scriptsize $\pm$ 0.27} & 3.31 {\scriptsize $\pm$ 0.00} & 2.58 {\scriptsize $\pm$ 0.29} & 10.92 {\scriptsize $\pm$ 0.30} & 7.50 {\scriptsize $\pm$ 0.91} & 11.34 {\scriptsize $\pm$ 0.26} \\
$1\times10^{-1}$ & 0.03 {\scriptsize $\pm$ 0.00} & 6.39 {\scriptsize $\pm$ 0.52} & 3.40 {\scriptsize $\pm$ 0.21} & 5.72 {\scriptsize $\pm$ 0.00} & 267.89 {\scriptsize $\pm$ 0.27} & 267.89 {\scriptsize $\pm$ 0.27} & 1.06 {\scriptsize $\pm$ 0.01} & 0.81 {\scriptsize $\pm$ 0.04} & 6.36 {\scriptsize $\pm$ 0.04} & 2.54 {\scriptsize $\pm$ 0.29} & 7.46 {\scriptsize $\pm$ 0.09} \\
$8\times10^{-2}$ & 0.03 {\scriptsize $\pm$ 0.00} & 6.62 {\scriptsize $\pm$ 0.65} & 2.65 {\scriptsize $\pm$ 0.13} & 3.82 {\scriptsize $\pm$ 0.02} & 267.89 {\scriptsize $\pm$ 0.27} & 267.89 {\scriptsize $\pm$ 0.27} & 0.95 {\scriptsize $\pm$ 0.01} & 0.70 {\scriptsize $\pm$ 0.01} & 5.10 {\scriptsize $\pm$ 0.47} & 1.97 {\scriptsize $\pm$ 0.23} & 6.87 {\scriptsize $\pm$ 0.09} \\
$5\times10^{-2}$ & 0.33 {\scriptsize $\pm$ 0.27} & 5.60 {\scriptsize $\pm$ 0.38} & 2.12 {\scriptsize $\pm$ 0.10} & 3.28 {\scriptsize $\pm$ 0.08} & 250.80 {\scriptsize $\pm$ 2.77} & 251.21 {\scriptsize $\pm$ 2.88} & 0.59 {\scriptsize $\pm$ 0.05} & 0.35 {\scriptsize $\pm$ 0.00} & 5.81 {\scriptsize $\pm$ 0.00} & 1.28 {\scriptsize $\pm$ 0.15} & 6.03 {\scriptsize $\pm$ 0.05} \\
$3\times10^{-2}$ & 0.63 {\scriptsize $\pm$ 0.57} & 4.60 {\scriptsize $\pm$ 0.46} & 1.18 {\scriptsize $\pm$ 0.00} & 2.74 {\scriptsize $\pm$ 0.00} & 3.36 {\scriptsize $\pm$ 1.76} & 3.19 {\scriptsize $\pm$ 1.57} & 0.30 {\scriptsize $\pm$ 0.00} & 0.22 {\scriptsize $\pm$ 0.01} & 5.41 {\scriptsize $\pm$ 1.05} & 0.76 {\scriptsize $\pm$ 0.10} & 5.51 {\scriptsize $\pm$ 1.02} \\
$1\times10^{-2}$ & 2.20 {\scriptsize $\pm$ 0.51} & 2.71 {\scriptsize $\pm$ 0.12} & 0.25 {\scriptsize $\pm$ 0.00} & 2.00 {\scriptsize $\pm$ 0.00} & 0.34 {\scriptsize $\pm$ 0.01} & 0.33 {\scriptsize $\pm$ 0.01} & 0.10 {\scriptsize $\pm$ 0.00} & 0.08 {\scriptsize $\pm$ 0.01} & 4.22 {\scriptsize $\pm$ 0.04} & 0.22 {\scriptsize $\pm$ 0.02} & 4.25 {\scriptsize $\pm$ 0.02} \\
$8\times10^{-3}$ & 2.57 {\scriptsize $\pm$ 0.61} & 2.31 {\scriptsize $\pm$ 0.02} & 0.15 {\scriptsize $\pm$ 0.01} & 1.83 {\scriptsize $\pm$ 0.01} & 0.24 {\scriptsize $\pm$ 0.00} & 0.24 {\scriptsize $\pm$ 0.00} & 0.08 {\scriptsize $\pm$ 0.00} & 0.07 {\scriptsize $\pm$ 0.01} & 3.96 {\scriptsize $\pm$ 0.01} & 0.21 {\scriptsize $\pm$ 0.02} & 4.00 {\scriptsize $\pm$ 0.02} \\
$5\times10^{-3}$ & 3.00 {\scriptsize $\pm$ 0.63} & 1.57 {\scriptsize $\pm$ 0.06} & 0.10 {\scriptsize $\pm$ 0.03} & 1.32 {\scriptsize $\pm$ 0.01} & 0.13 {\scriptsize $\pm$ 0.00} & 0.13 {\scriptsize $\pm$ 0.00} & 0.05 {\scriptsize $\pm$ 0.00} & 0.04 {\scriptsize $\pm$ 0.00} & 3.28 {\scriptsize $\pm$ 0.64} & 0.17 {\scriptsize $\pm$ 0.02} & 3.30 {\scriptsize $\pm$ 0.64} \\
$3\times10^{-3}$ & 4.07 {\scriptsize $\pm$ 0.98} & 0.92 {\scriptsize $\pm$ 0.11} & 0.11 {\scriptsize $\pm$ 0.00} & 0.76 {\scriptsize $\pm$ 0.00} & 0.09 {\scriptsize $\pm$ 0.00} & 0.07 {\scriptsize $\pm$ 0.01} & 0.04 {\scriptsize $\pm$ 0.00} & 0.03 {\scriptsize $\pm$ 0.00} & 1.49 {\scriptsize $\pm$ 0.01} & 0.10 {\scriptsize $\pm$ 0.00} & 1.50 {\scriptsize $\pm$ 0.00} \\
$1\times10^{-3}$ & 8.48 {\scriptsize $\pm$ 2.72} & 0.15 {\scriptsize $\pm$ 0.00} & 0.04 {\scriptsize $\pm$ 0.01} & 0.10 {\scriptsize $\pm$ 0.06} & 0.03 {\scriptsize $\pm$ 0.00} & 0.03 {\scriptsize $\pm$ 0.00} & 0.03 {\scriptsize $\pm$ 0.00} & 0.03 {\scriptsize $\pm$ 0.00} & 0.03 {\scriptsize $\pm$ 0.00} & 0.13 {\scriptsize $\pm$ 0.00} & 0.07 {\scriptsize $\pm$ 0.02} \\
$8\times10^{-4}$ & 9.10 {\scriptsize $\pm$ 2.88} & 0.12 {\scriptsize $\pm$ 0.03} & 0.04 {\scriptsize $\pm$ 0.00} & 0.04 {\scriptsize $\pm$ 0.00} & 0.03 {\scriptsize $\pm$ 0.00} & 0.03 {\scriptsize $\pm$ 0.00} & 0.06 {\scriptsize $\pm$ 0.00} & 0.03 {\scriptsize $\pm$ 0.00} & 0.03 {\scriptsize $\pm$ 0.00} & 0.16 {\scriptsize $\pm$ 0.02} & 0.04 {\scriptsize $\pm$ 0.00} \\
$5\times10^{-4}$ & 11.10 {\scriptsize $\pm$ 3.42} & 0.06 {\scriptsize $\pm$ 0.01} & 0.04 {\scriptsize $\pm$ 0.01} & 0.03 {\scriptsize $\pm$ 0.00} & 0.03 {\scriptsize $\pm$ 0.00} & 0.03 {\scriptsize $\pm$ 0.00} & 0.05 {\scriptsize $\pm$ 0.00} & 0.07 {\scriptsize $\pm$ 0.03} & 0.04 {\scriptsize $\pm$ 0.02} & 0.15 {\scriptsize $\pm$ 0.00} & 0.03 {\scriptsize $\pm$ 0.00} \\
$3\times10^{-4}$ & 13.97 {\scriptsize $\pm$ 3.97} & 0.03 {\scriptsize $\pm$ 0.00} & 0.04 {\scriptsize $\pm$ 0.00} & 0.03 {\scriptsize $\pm$ 0.00} & 0.03 {\scriptsize $\pm$ 0.00} & 0.03 {\scriptsize $\pm$ 0.00} & 0.05 {\scriptsize $\pm$ 0.01} & 0.03 {\scriptsize $\pm$ 0.00} & 0.04 {\scriptsize $\pm$ 0.01} & 0.15 {\scriptsize $\pm$ 0.00} & 0.06 {\scriptsize $\pm$ 0.01} \\
$1\times10^{-4}$ & 11.85 {\scriptsize $\pm$ 0.63} & 4.04 {\scriptsize $\pm$ 1.36} & 0.86 {\scriptsize $\pm$ 0.01} & 1.25 {\scriptsize $\pm$ 0.03} & 0.03 {\scriptsize $\pm$ 0.00} & 0.03 {\scriptsize $\pm$ 0.00} & 1.15 {\scriptsize $\pm$ 0.12} & 1.16 {\scriptsize $\pm$ 0.13} & 1.32 {\scriptsize $\pm$ 0.00} & 4.59 {\scriptsize $\pm$ 0.31} & 1.49 {\scriptsize $\pm$ 0.04} \\
\bottomrule
\end{tabular}
\captionsetup{justification=centering} 
\caption{Results for Color Transfer (Set 2).}
\label{tab:ct_full1}
\end{sidewaystable}

\begin{sidewaystable}[t]
\centering
\small
\setlength{\tabcolsep}{0.2pt}
\renewcommand{\arraystretch}{0.9}
\begin{tabular}{@{}lcccccccccccc@{}}
\toprule
\textbf{LR} & \textbf{SW} & \textbf{MaxSW} & \textbf{DSW} & \textbf{MaxKSW} & \textbf{iMSW} & \textbf{viMSW} & \textbf{oMSW} & \textbf{rMSW} & \textbf{EBSW} & \textbf{RPSW} & \textbf{EBRPSW} \\
\midrule
100 & 320.55 {\scriptsize $\pm$ 0.09} & 320.55 {\scriptsize $\pm$ 0.09} & 279.16 {\scriptsize $\pm$ 0.05} & 308.63 {\scriptsize $\pm$ 0.18} & 320.55 {\scriptsize $\pm$ 0.09} & 320.55 {\scriptsize $\pm$ 0.09} & 320.55 {\scriptsize $\pm$ 0.09} & 157.93 {\scriptsize $\pm$ 0.04} & 320.55 {\scriptsize $\pm$ 0.09} & 320.55 {\scriptsize $\pm$ 0.09} & 320.55 {\scriptsize $\pm$ 0.09} \\
80 & 320.55 {\scriptsize $\pm$ 0.09} & 320.55 {\scriptsize $\pm$ 0.09} & 279.16 {\scriptsize $\pm$ 0.05} & 307.70 {\scriptsize $\pm$ 0.20} & 320.55 {\scriptsize $\pm$ 0.09} & 320.55 {\scriptsize $\pm$ 0.09} & 320.55 {\scriptsize $\pm$ 0.09} & 168.12 {\scriptsize $\pm$ 1.25} & 320.55 {\scriptsize $\pm$ 0.09} & 320.55 {\scriptsize $\pm$ 0.09} & 320.55 {\scriptsize $\pm$ 0.09} \\
50 & 320.55 {\scriptsize $\pm$ 0.09} & 320.55 {\scriptsize $\pm$ 0.09} & 279.16 {\scriptsize $\pm$ 0.05} & 304.23 {\scriptsize $\pm$ 0.21} & 320.55 {\scriptsize $\pm$ 0.09} & 320.55 {\scriptsize $\pm$ 0.09} & 320.55 {\scriptsize $\pm$ 0.09} & 124.61 {\scriptsize $\pm$ 0.79} & 320.55 {\scriptsize $\pm$ 0.09} & 320.55 {\scriptsize $\pm$ 0.09} & 320.55 {\scriptsize $\pm$ 0.09} \\
30 & 320.55 {\scriptsize $\pm$ 0.09} & 320.55 {\scriptsize $\pm$ 0.09} & 279.16 {\scriptsize $\pm$ 0.05} & 298.63 {\scriptsize $\pm$ 0.38} & 320.55 {\scriptsize $\pm$ 0.09} & 320.55 {\scriptsize $\pm$ 0.09} & 320.45 {\scriptsize $\pm$ 0.11} & 123.85 {\scriptsize $\pm$ 10.63} & 149.82 {\scriptsize $\pm$ 0.07} & 320.52 {\scriptsize $\pm$ 0.06} & 320.57 {\scriptsize $\pm$ 0.05} \\
10 & 320.55 {\scriptsize $\pm$ 0.09} & 286.84 {\scriptsize $\pm$ 0.29} & 279.14 {\scriptsize $\pm$ 0.03} & 202.30 {\scriptsize $\pm$ 1.04} & 320.17 {\scriptsize $\pm$ 0.03} & 320.16 {\scriptsize $\pm$ 0.03} & 183.41 {\scriptsize $\pm$ 0.22} & 62.75 {\scriptsize $\pm$ 0.48} & 157.47 {\scriptsize $\pm$ 28.22} & 266.66 {\scriptsize $\pm$ 2.21} & 205.07 {\scriptsize $\pm$ 78.24} \\
8 & 320.52 {\scriptsize $\pm$ 0.07} & 251.89 {\scriptsize $\pm$ 0.26} & 210.92 {\scriptsize $\pm$ 6.75} & 165.78 {\scriptsize $\pm$ 0.11} & 311.04 {\scriptsize $\pm$ 0.20} & 311.02 {\scriptsize $\pm$ 0.19} & 143.62 {\scriptsize $\pm$ 0.14} & 91.55 {\scriptsize $\pm$ 0.01} & 171.06 {\scriptsize $\pm$ 18.71} & 222.87 {\scriptsize $\pm$ 15.73} & 165.19 {\scriptsize $\pm$ 37.58} \\
5 & 254.50 {\scriptsize $\pm$ 0.29} & 175.13 {\scriptsize $\pm$ 4.67} & 161.67 {\scriptsize $\pm$ 0.60} & 95.66 {\scriptsize $\pm$ 15.39} & 219.26 {\scriptsize $\pm$ 19.68} & 232.89 {\scriptsize $\pm$ 0.29} & 72.48 {\scriptsize $\pm$ 0.04} & 30.00 {\scriptsize $\pm$ 1.59} & 123.75 {\scriptsize $\pm$ 58.87} & 111.06 {\scriptsize $\pm$ 3.36} & 124.90 {\scriptsize $\pm$ 7.31} \\
3 & 59.33 {\scriptsize $\pm$ 0.07} & 108.52 {\scriptsize $\pm$ 1.52} & 107.15 {\scriptsize $\pm$ 8.57} & 57.36 {\scriptsize $\pm$ 3.19} & 320.55 {\scriptsize $\pm$ 0.09} & 320.55 {\scriptsize $\pm$ 0.09} & 34.35 {\scriptsize $\pm$ 0.22} & 29.95 {\scriptsize $\pm$ 0.29} & 87.52 {\scriptsize $\pm$ 10.23} & 66.06 {\scriptsize $\pm$ 1.43} & 107.12 {\scriptsize $\pm$ 1.10} \\
1 & 0.00 {\scriptsize $\pm$ 0.00} & 35.58 {\scriptsize $\pm$ 1.42} & 39.61 {\scriptsize $\pm$ 2.14} & 21.60 {\scriptsize $\pm$ 0.55} & 320.55 {\scriptsize $\pm$ 0.09} & 320.55 {\scriptsize $\pm$ 0.09} & 9.70 {\scriptsize $\pm$ 0.01} & 10.02 {\scriptsize $\pm$ 0.49} & 34.00 {\scriptsize $\pm$ 0.45} & 19.71 {\scriptsize $\pm$ 3.98} & 32.68 {\scriptsize $\pm$ 4.24} \\
$8\times10^{-1}$ & 0.00 {\scriptsize $\pm$ 0.00} & 28.51 {\scriptsize $\pm$ 1.06} & 17.42 {\scriptsize $\pm$ 0.97} & 17.33 {\scriptsize $\pm$ 0.41} & 320.55 {\scriptsize $\pm$ 0.09} & 320.55 {\scriptsize $\pm$ 0.09} & 7.68 {\scriptsize $\pm$ 0.01} & 7.67 {\scriptsize $\pm$ 0.39} & 27.35 {\scriptsize $\pm$ 1.00} & 18.13 {\scriptsize $\pm$ 2.59} & 27.86 {\scriptsize $\pm$ 1.00} \\
$5\times10^{-1}$ & 0.00 {\scriptsize $\pm$ 0.00} & 19.69 {\scriptsize $\pm$ 1.77} & 12.75 {\scriptsize $\pm$ 4.32} & 11.51 {\scriptsize $\pm$ 0.11} & 320.55 {\scriptsize $\pm$ 0.09} & 320.55 {\scriptsize $\pm$ 0.09} & 4.80 {\scriptsize $\pm$ 0.01} & 5.08 {\scriptsize $\pm$ 0.89} & 16.78 {\scriptsize $\pm$ 0.14} & 10.86 {\scriptsize $\pm$ 0.59} & 18.69 {\scriptsize $\pm$ 1.52} \\
$3\times10^{-1}$ & 0.20 {\scriptsize $\pm$ 0.25} & 12.93 {\scriptsize $\pm$ 1.49} & 7.88 {\scriptsize $\pm$ 1.00} & 8.03 {\scriptsize $\pm$ 0.05} & 320.55 {\scriptsize $\pm$ 0.09} & 320.55 {\scriptsize $\pm$ 0.09} & 3.38 {\scriptsize $\pm$ 0.01} & 2.75 {\scriptsize $\pm$ 0.22} & 11.10 {\scriptsize $\pm$ 0.32} & 6.10 {\scriptsize $\pm$ 0.27} & 11.51 {\scriptsize $\pm$ 0.58} \\
$1\times10^{-1}$ & 0.03 {\scriptsize $\pm$ 0.00} & 6.24 {\scriptsize $\pm$ 0.71} & 3.97 {\scriptsize $\pm$ 0.02} & 5.10 {\scriptsize $\pm$ 0.24} & 320.15 {\scriptsize $\pm$ 0.03} & 320.16 {\scriptsize $\pm$ 0.03} & 0.95 {\scriptsize $\pm$ 0.00} & 0.79 {\scriptsize $\pm$ 0.01} & 6.17 {\scriptsize $\pm$ 0.16} & 1.94 {\scriptsize $\pm$ 0.23} & 6.96 {\scriptsize $\pm$ 0.52} \\
$8\times10^{-2}$ & 0.20 {\scriptsize $\pm$ 0.24} & 5.93 {\scriptsize $\pm$ 0.44} & 2.75 {\scriptsize $\pm$ 0.40} & 4.72 {\scriptsize $\pm$ 0.06} & 311.04 {\scriptsize $\pm$ 0.20} & 311.02 {\scriptsize $\pm$ 0.19} & 0.77 {\scriptsize $\pm$ 0.00} & 0.73 {\scriptsize $\pm$ 0.01} & 5.51 {\scriptsize $\pm$ 0.21} & 1.59 {\scriptsize $\pm$ 0.29} & 6.35 {\scriptsize $\pm$ 0.41} \\
$5\times10^{-2}$ & 0.17 {\scriptsize $\pm$ 0.24} & 4.33 {\scriptsize $\pm$ 0.74} & 2.02 {\scriptsize $\pm$ 0.07} & 4.47 {\scriptsize $\pm$ 0.08} & 219.26 {\scriptsize $\pm$ 19.68} & 232.89 {\scriptsize $\pm$ 0.29} & 0.48 {\scriptsize $\pm$ 0.00} & 0.49 {\scriptsize $\pm$ 0.06} & 5.03 {\scriptsize $\pm$ 0.16} & 0.97 {\scriptsize $\pm$ 0.14} & 5.30 {\scriptsize $\pm$ 0.10} \\
$3\times10^{-2}$ & 0.95 {\scriptsize $\pm$ 0.01} & 3.68 {\scriptsize $\pm$ 0.00} & 1.17 {\scriptsize $\pm$ 0.01} & 4.06 {\scriptsize $\pm$ 0.08} & 5.99 {\scriptsize $\pm$ 0.56} & 5.49 {\scriptsize $\pm$ 0.59} & 0.29 {\scriptsize $\pm$ 0.00} & 0.24 {\scriptsize $\pm$ 0.02} & 4.56 {\scriptsize $\pm$ 0.06} & 0.58 {\scriptsize $\pm$ 0.05} & 4.78 {\scriptsize $\pm$ 0.08} \\
$1\times10^{-2}$ & 2.16 {\scriptsize $\pm$ 0.04} & 2.31 {\scriptsize $\pm$ 0.12} & 0.28 {\scriptsize $\pm$ 0.05} & 2.93 {\scriptsize $\pm$ 0.01} & 0.34 {\scriptsize $\pm$ 0.01} & 0.32 {\scriptsize $\pm$ 0.01} & 0.10 {\scriptsize $\pm$ 0.00} & 0.07 {\scriptsize $\pm$ 0.00} & 3.51 {\scriptsize $\pm$ 0.05} & 0.28 {\scriptsize $\pm$ 0.08} & 3.59 {\scriptsize $\pm$ 0.01} \\
$8\times10^{-3}$ & 2.37 {\scriptsize $\pm$ 0.07} & 2.09 {\scriptsize $\$\pm$ 0.12} & 0.22 {\scriptsize $\pm$ 0.07} & 2.64 {\scriptsize $\pm$ 0.01} & 0.24 {\scriptsize $\pm$ 0.01} & 0.23 {\scriptsize $\pm$ 0.00} & 0.09 {\scriptsize $\pm$ 0.02} & 0.07 {\scriptsize $\pm$ 0.01} & 3.32 {\scriptsize $\pm$ 0.02} & 0.23 {\scriptsize $\pm$ 0.06} & 3.42 {\scriptsize $\pm$ 0.01} \\
$5\times10^{-3}$ & 2.99 {\scriptsize $\pm$ 0.02} & 1.61 {\scriptsize $\pm$ 0.06} & 0.18 {\scriptsize $\pm$ 0.01} & 1.51 {\scriptsize $\pm$ 0.01} & 0.13 {\scriptsize $\pm$ 0.00} & 0.13 {\scriptsize $\pm$ 0.00} & 0.05 {\scriptsize $\pm$ 0.00} & 0.04 {\scriptsize $\pm$ 0.00} & 2.41 {\scriptsize $\pm$ 0.02} & 0.16 {\scriptsize $\pm$ 0.05} & 2.44 {\scriptsize $\pm$ 0.01} \\
$3\times10^{-3}$ & 3.24 {\scriptsize $\pm$ 0.03} & 1.15 {\scriptsize $\pm$ 0.01} & 0.12 {\scriptsize $\pm$ 0.01} & 0.93 {\scriptsize $\pm$ 0.12} & 0.09 {\scriptsize $\pm$ 0.00} & 0.06 {\scriptsize $\pm$ 0.00} & 0.04 {\scriptsize $\pm$ 0.00} & 0.04 {\scriptsize $\pm$ 0.01} & 1.76 {\scriptsize $\pm$ 0.01} & 0.10 {\scriptsize $\pm$ 0.00} & 1.79 {\scriptsize $\pm$ 0.01} \\
$1\times10^{-3}$ & 3.48 {\scriptsize $\pm$ 0.17} & 0.15 {\scriptsize $\pm$ 0.00} & 0.05 {\scriptsize $\pm$ 0.01} & 0.66 {\scriptsize $\pm$ 0.37} & 0.03 {\scriptsize $\pm$ 0.00} & 0.10 {\scriptsize $\pm$ 0.09} & 0.03 {\scriptsize $\pm$ 0.00} & 0.03 {\scriptsize $\pm$ 0.00} & 0.03 {\scriptsize $\pm$ 0.00} & 0.15 {\scriptsize $\pm$ 0.01} & 0.11 {\scriptsize $\pm$ 0.11} \\
$8\times10^{-4}$ & 3.84 {\scriptsize $\pm$ 0.35} & 0.12 {\scriptsize $\pm$ 0.05} & 0.03 {\scriptsize $\pm$ 0.00} & 0.04 {\scriptsize $\pm$ 0.00} & 0.08 {\scriptsize $\pm$ 0.08} & 0.03 {\scriptsize $\pm$ 0.00} & 0.03 {\scriptsize $\pm$ 0.00} & 0.03 {\scriptsize $\pm$ 0.00} & 0.05 {\scriptsize $\pm$ 0.02} & 0.09 {\scriptsize $\pm$ 0.05} & 0.05 {\scriptsize $\pm$ 0.03} \\
$5\times10^{-4}$ & 4.93 {\scriptsize $\pm$ 0.48} & 0.06 {\scriptsize $\pm$ 0.02} & 0.03 {\scriptsize $\pm$ 0.00} & 0.03 {\scriptsize $\pm$ 0.00} & 0.03 {\scriptsize $\pm$ 0.00} & 0.03 {\scriptsize $\pm$ 0.00} & 0.05 {\scriptsize $\pm$ 0.00} & 0.08 {\scriptsize $\pm$ 0.07} & 0.03 {\scriptsize $\pm$ 0.00} & 0.11 {\scriptsize $\pm$ 0.02} & 0.03 {\scriptsize $\pm$ 0.00} \\
$3\times10^{-4}$ & 6.37 {\scriptsize $\pm$ 0.25} & 0.03 {\scriptsize $\pm$ 0.00} & 0.08 {\scriptsize $\pm$ 0.02} & 0.03 {\scriptsize $\pm$ 0.00} & 0.03 {\scriptsize $\pm$ 0.00} & 0.03 {\scriptsize $\pm$ 0.00} & 0.05 {\scriptsize $\pm$ 0.01} & 0.03 {\scriptsize $\pm$ 0.00} & 0.07 {\scriptsize $\pm$ 0.02} & 0.15 {\scriptsize $\pm$ 0.01} & 0.05 {\scriptsize $\pm$ 0.02} \\
$1\times10^{-4}$ & 13.61 {\scriptsize $\pm$ 0.21} & 5.55 {\scriptsize $\pm$ 0.86} & 0.89 {\scriptsize $\pm$ 0.03} & 6.88 {\scriptsize $\pm$ 0.15} & 0.70 {\scriptsize $\pm$ 0.09} & 0.74 {\scriptsize $\pm$ 0.14} & 11.42 {\scriptsize $\pm$ 0.11} & 12.17 {\scriptsize $\pm$ 0.14} & 3.64 {\scriptsize $\pm$ 0.15} & 7.50 {\scriptsize $\pm$ 0.16} & 3.68 {\scriptsize $\pm$ 0.01} \\
\bottomrule
\end{tabular}
\captionsetup{justification=centering} 
\caption{Results for Color Transfer (Set 3).}
\label{tab:ct_full1}
\end{sidewaystable}

\begin{figure}[t]
    \centering
    \begin{subfigure}{\textwidth}
        \centering
        \includegraphics[width=\textwidth]{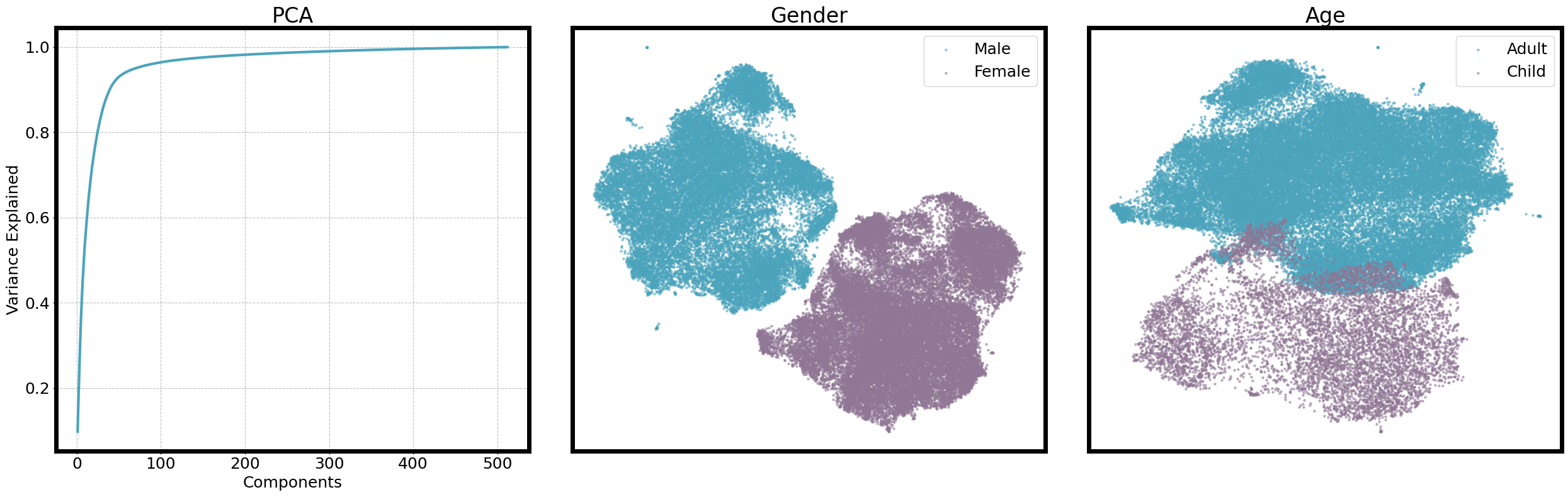}
        \captionsetup{justification=centering} 
        \caption{\textbf{Left:} Cumulative Explaning Variance plot for the FFHQ latents. \textbf{Middle/Right:} UMAP visualization of the Gender and Age splits.}
        \label{fig:ffhq}
    \end{subfigure}
    
    \vspace{1em}
    
    \begin{subfigure}{\textwidth}
        \centering
        \setlength{\tabcolsep}{10pt} 
        \small 
        \begin{tabular}{lcc}
        \toprule
        \textbf{FFHQ Subset} & \textbf{Train size} & \textbf{Test size} \\
        \midrule
        Adults ($\geq 18$) & 48786 & 8104 \\
        Children ($< 10$) & 8345 & 1405 \\
        \midrule
        Male & 26732 & 4351 \\
        Female & 32816 & 5572 \\
        \bottomrule
        \end{tabular}
        \captionsetup{justification=centering} 
        \caption{Subset size.}
        \label{tab:ffhq_subset_info}
    \end{subfigure}

    \captionsetup{justification=centering} 
    \caption{The FFHQ dataset (\cite{karras2019style})}
    \label{fig:ffhq_compound}
\end{figure}

\begin{figure}[h]
    \centering
    \includegraphics[width=\textwidth]{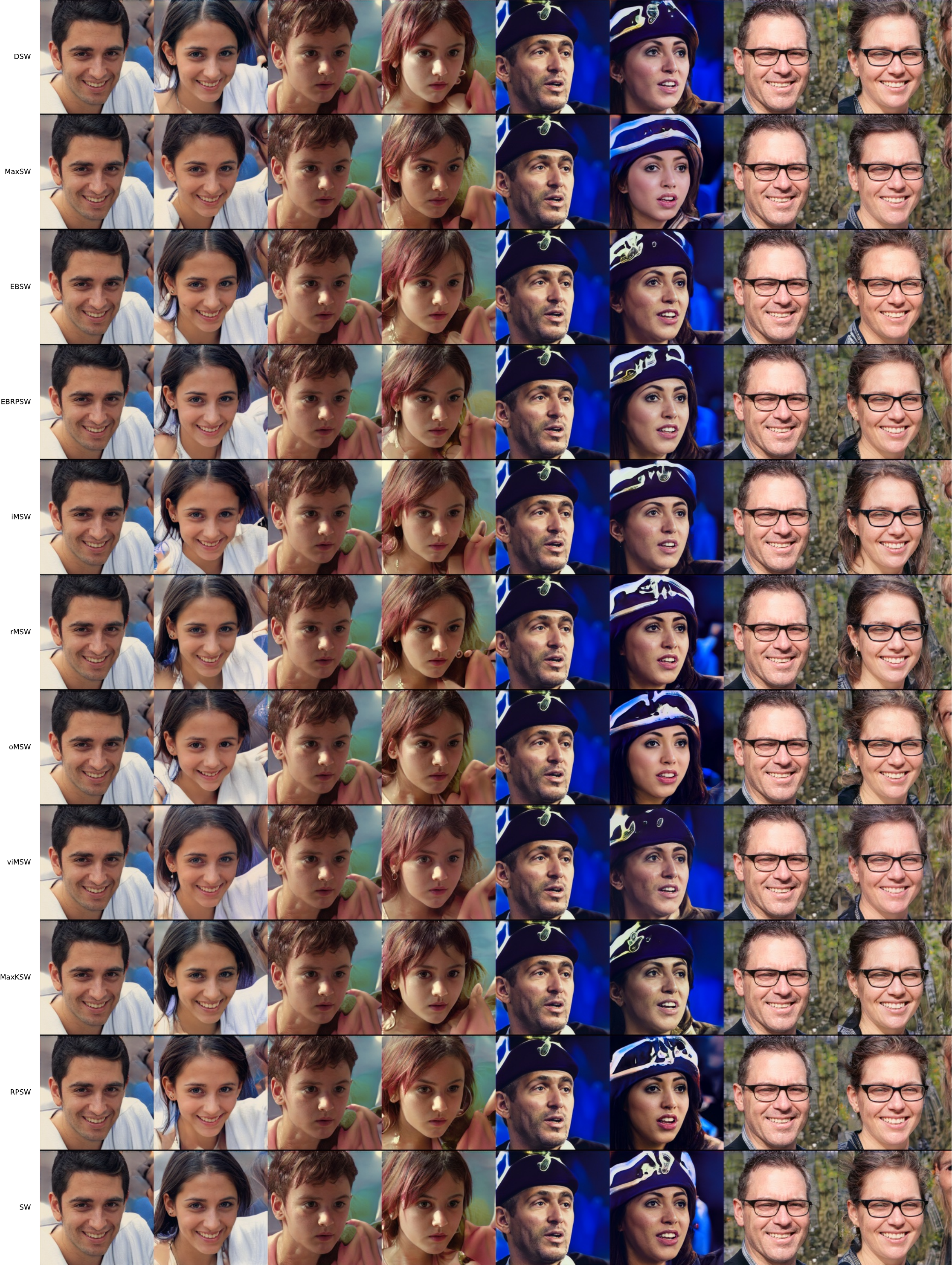}
    \caption{Visualization for the M2F translation task (using the model with the lowest $W_2$ for each method).}
    \label{fig:appendix_m2f}
\end{figure}

\begin{figure}[h]
    \centering
    \includegraphics[width=\textwidth]{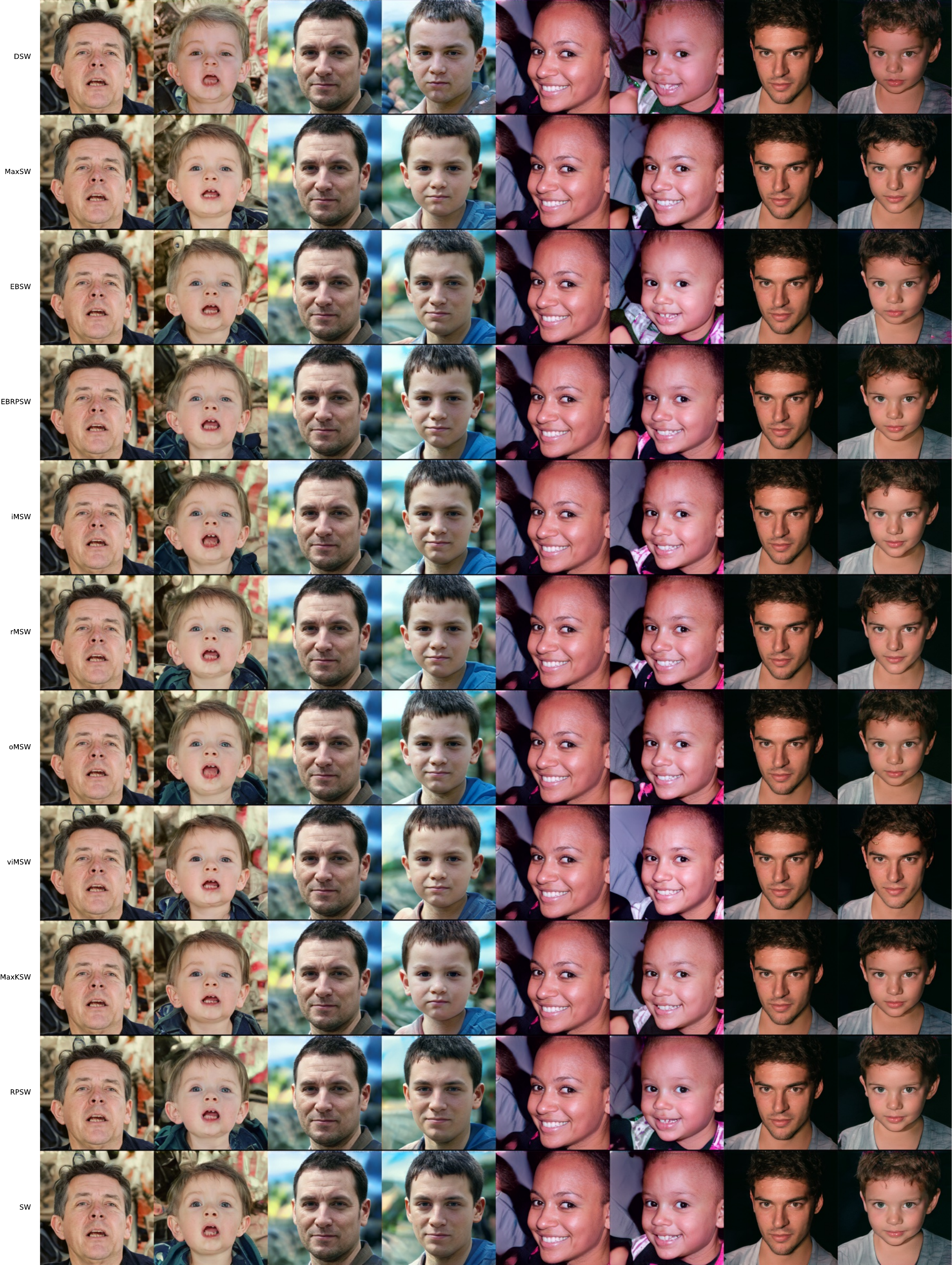}
    \caption{Visualization for the A2C translation task (using the model with the lowest $W_2$ for each method).}
    \label{fig:appendix_a2c}
\end{figure}

\begin{sidewaystable}[t]
\centering
\setlength{\tabcolsep}{3pt} 
\begin{tabular}{lcccccccccccc}
\toprule
\textbf{LR} & \textbf{SW} & \textbf{MaxSW} & \textbf{DSW} & \textbf{MaxKSW} & \textbf{iMSW} & \textbf{viMSW} & \textbf{oMSW} & \textbf{rMSW} & \textbf{EBSW} & \textbf{RPSW} & \textbf{EBRPSW} \\
\midrule
1 & 14.17\tiny{$\pm 0.02$} 
  & - 
  & - 
  & - 
  & - 
  & - 
  & - 
  & - 
  & - 
  & - 
  & - \\
$8 \times 10^{-1}$ & 14.16\tiny{$\pm 0.01$} 
                   & - 
                   & - 
                   & - 
                   & - 
                   & - 
                   & - 
                   & - 
                   & - 
                   & - 
                   & - \\

$5 \times 10^{-1}$ & 14.15\tiny{$\pm 0.02$} 
                   & - 
                   & -
                   & - 
                   & - 
                   & - 
                   & -
                   & - 
                   & - 
                   & - 
                   & - \\

$3 \times 10^{-1}$ & 14.16\tiny{$\pm 0.00$} 
                   & - 
                   & - 
                   & - 
                   & - 
                   & - 
                   & - 
                   & - 
                   & -
                   & 17.78\tiny{$\pm 0.18$} 
                   & - \\

$1 \times 10^{-1}$ & 14.22\tiny{$\pm 0.01$} 
                   & - 
                   & - 
                   & 14.50\tiny{$\pm 0.03$}  
                   & - 
                   & -
                   & 14.13\tiny{$\pm 0.02$}  
                   & 14.17\tiny{$\pm 0.02$} 
                   & - 
                   & 14.14\tiny{$\pm 0.00$} 
                   & - \\

$8 \times 10^{-2}$ & 14.26\tiny{$\pm 0.01$} 
                   & -
                   & 14.78\tiny{$\pm 0.02$} 
                   & 14.29\tiny{$\pm 0.01$}  
                   & - 
                   & - 
                   & 14.12\tiny{$\pm 0.01$} 
                   & 14.16\tiny{$\pm 0.01$} 
                   & - 
                   & 14.15\tiny{$\pm 0.02$} 
                   & - \\

$5 \times 10^{-2}$ & 14.33\tiny{$\pm 0.01$} 
                   & 14.54\tiny{$\pm 0.02$} 
                   & 14.46\tiny{$\pm 0.03$} 
                   & 14.20\tiny{$\pm 0.01$} 
                   & - 
                   & - 
                   & 14.19\tiny{$\pm 0.00$} 
                   & 14.18\tiny{$\pm 0.02$} 
                   & -
                   & 14.18\tiny{$\pm 0.02$} 
                   & - \\

$3 \times 10^{-2}$ & 14.37\tiny{$\pm 0.01$} 
                   & 14.26\tiny{$\pm 0.02$} 
                   & 14.25\tiny{$\pm 0.02$} 
                   & 14.27\tiny{$\pm 0.01$} 
                   & - 
                   & - 
                   & 14.20\tiny{$\pm 0.01$} 
                   & 14.19\tiny{$\pm 0.02$} 
                   & - 
                   & 14.18\tiny{$\pm 0.01$} 
                   & - \\

$1 \times 10^{-2}$ & 14.50\tiny{$\pm 0.01$} 
                   & 14.06\tiny{$\pm 0.02$} 
                   & 14.12\tiny{$\pm 0.02$} 
                   & 14.29\tiny{$\pm 0.02$} 
                   & - 
                   & - 
                   & 14.17\tiny{$\pm 0.01$} 
                   & 14.16\tiny{$\pm 0.01$} 
                   & 14.71\tiny{$\pm 0.02$}  
                   & 14.15\tiny{$\pm 0.02$} 
                   & 14.69\tiny{$\pm 0.02$}  \\

$8 \times 10^{-3}$ & 14.54\tiny{$\pm 0.01$} 
                   & 14.02\tiny{$\pm 0.02$} 
                   & 14.11\tiny{$\pm 0.02$} 
                   & 14.27\tiny{$\pm 0.01$} 
                   & - 
                   & - 
                   & 14.16\tiny{$\pm 0.02$} 
                   & 14.16\tiny{$\pm 0.01$} 
                   & 14.72\tiny{$\pm 0.04$}  
                   & 14.16\tiny{$\pm 0.02$} 
                   & 14.71\tiny{$\pm 0.02$}  \\

$5 \times 10^{-3}$ & 14.67\tiny{$\pm 0.00$} 
                   & 14.01\tiny{$\pm 0.02$}  
                   & 14.11\tiny{$\pm 0.02$} 
                   & 14.26\tiny{$\pm 0.03$} 
                   & - 
                   & - 
                   & 14.16\tiny{$\pm 0.01$} 
                   & 14.18\tiny{$\pm 0.01$} 
                   & 14.72\tiny{$\pm 0.02$}  
                   & 14.17\tiny{$\pm 0.02$} 
                   & 14.70\tiny{$\pm 0.04$}  \\

$3 \times 10^{-3}$ & 14.84\tiny{$\pm 0.01$} 
                   & 14.01\tiny{$\pm 0.02$} 
                   & 14.16\tiny{$\pm 0.02$} 
                   & 14.27\tiny{$\pm 0.02$} 
                   & 14.06\tiny{$\pm 0.01$} 
                   & 14.09\tiny{$\pm 0.01$} 
                   & 14.19\tiny{$\pm 0.01$} 
                   & 14.19\tiny{$\pm 0.01$} 
                   & 14.77\tiny{$\pm 0.03$}   
                   & 14.20\tiny{$\pm 0.01$} 
                   & 14.77\tiny{$\pm 0.05$}  \\

$1 \times 10^{-3}$ & 15.36\tiny{$\pm 0.01$} 
                   & 14.04\tiny{$\pm 0.02$} 
                   & 14.17\tiny{$\pm 0.02$} 
                   & 14.42\tiny{$\pm 0.01$} 
                   & 14.12\tiny{$\pm 0.01$} 
                   & 14.26\tiny{$\pm 0.00$} 
                   & 14.34\tiny{$\pm 0.01$} 
                   & 14.37\tiny{$\pm 0.01$} 
                   & 14.72\tiny{$\pm 0.02$}  
                   & 14.36\tiny{$\pm 0.00$} 
                   & 14.73\tiny{$\pm 0.05$}  \\

$8 \times 10^{-4}$ & 15.47\tiny{$\pm 0.02$}
                   & 14.08\tiny{$\pm 0.01$} 
                   & 14.17\tiny{$\pm 0.02$} 
                   & 14.47\tiny{$\pm 0.01$} 
                   & 14.14\tiny{$\pm 0.02$} 
                   & 14.31\tiny{$\pm 0.01$} 
                   & 14.35\tiny{$\pm 0.02$} 
                   & 14.38\tiny{$\pm 0.01$} 
                   & 14.74\tiny{$\pm 0.02$}  
                   & 14.38\tiny{$\pm 0.01$} 
                   & 14.76\tiny{$\pm 0.02$}   \\

$5 \times 10^{-4}$ & 15.73\tiny{$\pm 0.02$} 
                   & 14.12\tiny{$\pm 0.01$} 
                   & 14.22\tiny{$\pm 0.02$} 
                   & 14.48\tiny{$\pm 0.01$} 
                   & 14.26\tiny{$\pm 0.01$}
                   & 14.34\tiny{$\pm 0.01$} 
                   & 14.39\tiny{$\pm 0.01$} 
                   & 14.40\tiny{$\pm 0.01$} 
                   & 14.85\tiny{$\pm 0.03$}  
                   & 14.41\tiny{$\pm 0.01$} 
                   & 14.87\tiny{$\pm 0.03$}   \\

$3 \times 10^{-4}$ & 16.59\tiny{$\pm 0.04$} 
                   & 14.75\tiny{$\pm 0.02$} 
                   & 14.91\tiny{$\pm 0.04$} 
                   & 14.94\tiny{$\pm 0.02$} 
                   & 14.81\tiny{$\pm 0.02$} 
                   & 14.81\tiny{$\pm 0.02$} 
                   & 14.88\tiny{$\pm 0.03$} 
                   & 14.88\tiny{$\pm 0.01$} 
                   & 14.97\tiny{$\pm 0.02$}  
                   & 14.48\tiny{$\pm 0.01$} 
                   & 14.99\tiny{$\pm 0.04$}   \\

$1 \times 10^{-4}$ & 17.36\tiny{$\pm 0.04$} 
                   & 14.41\tiny{$\pm 0.02$} 
                   & 14.52\tiny{$\pm 0.01$} 
                   & 14.70\tiny{$\pm 0.01$} 
                   & 14.40\tiny{$\pm 0.01$} 
                   & 14.50\tiny{$\pm 0.01$} 
                   & 14.70\tiny{$\pm 0.01$} 
                   & 14.69\tiny{$\pm 0.00$} 
                   & 15.24\tiny{$\pm 0.01$}  
                   & 14.73\tiny{$\pm 0.02$} 
                   & 15.23\tiny{$\pm 0.02$}   \\

$8 \times 10^{-5}$ & 17.71\tiny{$\pm 0.11$}
                   & 14.41\tiny{$\pm 0.01$} 
                   & 14.53\tiny{$\pm 0.01$} 
                   & 14.79\tiny{$\pm 0.01$} 
                   & 14.42\tiny{$\pm 0.01$} 
                   & 14.53\tiny{$\pm 0.01$} 
                   & 14.77\tiny{$\pm 0.01$} 
                   & 14.77\tiny{$\pm 0.01$} 
                   & 15.15\tiny{$\pm 0.02$}  
                   & 14.81\tiny{$\pm 0.01$} 
                   & 15.15\tiny{$\pm 0.02$}   \\

$5 \times 10^{-5}$ & 18.11\tiny{$\pm 0.04$} 
                   & 14.45\tiny{$\pm 0.01$} 
                   & 14.55\tiny{$\pm 0.01$} 
                   & 14.97\tiny{$\pm 0.02$} 
                   & 14.47\tiny{$\pm 0.01$} 
                   & 14.61\tiny{$\pm 0.01$} 
                   & 14.97\tiny{$\pm 0.01$} 
                   & 14.96\tiny{$\pm 0.01$} 
                   & 15.00\tiny{$\pm 0.01$}  
                   & 14.99\tiny{$\pm 0.01$} 
                   & 15.01\tiny{$\pm 0.01$}   \\

$3 \times 10^{-5}$ & 18.49\tiny{$\pm 0.07$}
                   & 14.50\tiny{$\pm 0.01$}
                   & 14.61\tiny{$\pm 0.01$} 
                   & 15.22\tiny{$\pm 0.01$} 
                   & 14.55\tiny{$\pm 0.01$} 
                   & 14.72\tiny{$\pm 0.01$} 
                   & 15.20\tiny{$\pm 0.00$} 
                   & 15.23\tiny{$\pm 0.01$} 
                   & 14.96\tiny{$\pm 0.01$}  
                   & 15.57\tiny{$\pm 0.01$} 
                   & 14.95\tiny{$\pm 0.02$}   \\

$1 \times 10^{-5}$ & 18.84\tiny{$\pm 0.06$}
                   & 14.66\tiny{$\pm 0.01$} 
                   & 14.75\tiny{$\pm 0.01$} 
                   & 15.80\tiny{$\pm 0.00$} 
                   & 14.77\tiny{$\pm 0.01$} 
                   & 15.09\tiny{$\pm 0.01$} 
                   & 15.81\tiny{$\pm 0.02$} 
                   & 15.81\tiny{$\pm 0.03$} 
                   & 15.39\tiny{$\pm 0.01$}  
                   & 15.81\tiny{$\pm 0.02$} 
                   & 15.39\tiny{$\pm 0.02$}   \\

$8 \times 10^{-6}$ & 18.89\tiny{$\pm 0.04$}
                   & 14.70\tiny{$\pm 0.01$} 
                   & 14.80\tiny{$\pm 0.01$} 
                   & 15.94\tiny{$\pm 0.02$} 
                   & 14.84\tiny{$\pm 0.01$} 
                   & 15.20\tiny{$\pm 0.01$}
                   & 15.95\tiny{$\pm 0.02$} 
                   & 15.95\tiny{$\pm 0.02$}
                   & 15.50\tiny{$\pm 0.02$}  
                   & 15.93\tiny{$\pm 0.02$}  
                   & 15.50\tiny{$\pm 0.01$}   \\

$5 \times 10^{-6}$ & 19.08\tiny{$\pm 0.05$}
                   & 14.84\tiny{$\pm 0.01$} 
                   & 14.95\tiny{$\pm 0.02$} 
                   & 16.34\tiny{$\pm 0.02$} 
                   & 14.99\tiny{$\pm 0.01$}
                   & 15.44\tiny{$\pm 0.01$} 
                   & 16.35\tiny{$\pm 0.01$} 
                   & 16.34\tiny{$\pm 0.04$} 
                   & 15.81\tiny{$\pm 0.02$}  
                   & 16.30\tiny{$\pm 0.02$}  
                   & 15.79\tiny{$\pm 0.01$}   \\

$3 \times 10^{-6}$ & 19.10\tiny{$\pm 0.08$} 
                   & 15.03\tiny{$\pm 0.01$} 
                   & 15.20\tiny{$\pm 0.02$} 
                   & 16.93\tiny{$\pm 0.05$} 
                   & 15.26\tiny{$\pm 0.02$} 
                   & 15.77\tiny{$\pm 0.01$}
                   & 16.95\tiny{$\pm 0.03$}
                   & 16.94\tiny{$\pm 0.04$} 
                   & 16.25\tiny{$\pm 0.02$}  
                   & 16.81\tiny{$\pm 0.03$}  
                   & 16.23\tiny{$\pm 0.05$}   \\

$1 \times 10^{-6}$ & 19.05\tiny{$\pm 0.06$} 
                   & 15.76\tiny{$\pm 0.01$} 
                   & 16.03\tiny{$\pm 0.01$} 
                   & 18.18\tiny{$\pm 0.07$} 
                   & 15.96\tiny{$\pm 0.04$} 
                   & 16.49\tiny{$\pm 0.03$} 
                   & 18.18\tiny{$\pm 0.07$} 
                   & 18.15\tiny{$\pm 0.04$} 
                   & 17.30\tiny{$\pm 0.03$} 
                   & 18.10\tiny{$\pm 0.06$}  
                   & 17.30\tiny{$\pm 0.05$}   \\
\bottomrule
\end{tabular}
\captionsetup{justification=centering} 
\caption{Numerical results for the M2F task.}
\label{tab:full_w_results}
\end{sidewaystable}

\begin{sidewaystable}[t]
\centering

\setlength{\tabcolsep}{2pt} 
\begin{tabular}{lcccccccccccc}
\toprule
\textbf{LR} & \textbf{SW} & \textbf{MaxSW} & \textbf{DSW} & \textbf{MaxKSW} & \textbf{iMSW} & \textbf{viMSW} & \textbf{oMSW} & \textbf{rMSW} & \textbf{EBSW} & \textbf{RPSW} & \textbf{EBRPSW} \\
\midrule
1 & 14.58\tiny{$\pm 0.03$} 
  & - 
  & - 
  & - 
  & - 
  & - 
  & - 
  & - 
  & - 
  & - 
  & - \\
$8 \times 10^{-1}$ & 14.61\tiny{$\pm 0.03$} 
                   & - 
                   & - 
                   & - 
                   & - 
                   & - 
                   & - 
                   & - 
                   & - 
                   & - 
                   & - \\

$5 \times 10^{-1}$ & 14.62\tiny{$\pm 0.03$} 
                   & - 
                   & -
                   & - 
                   & - 
                   & - 
                   & -
                   & - 
                   & - 
                   & - 
                   & - \\

$3 \times 10^{-1}$ & 14.62\tiny{$\pm 0.02$} 
                   & - 
                   & - 
                   & - 
                   & - 
                   & - 
                   & - 
                   & - 
                   & - 
                   & 14.68\tiny{$\pm 0.13$} 
                   & - \\

$1 \times 10^{-1}$ & 14.70\tiny{$\pm 0.02$} 
                   & - 
                   & - 
                   & 14.98\tiny{$\pm 0.02$}  
                   & - 
                   & -
                   & 14.62\tiny{$\pm 0.02$}  
                   & 14.63\tiny{$\pm 0.01$} 
                   & - 
                   & 14.65\tiny{$\pm 0.03$} 
                   & - \\

$8 \times 10^{-2}$ & 14.73\tiny{$\pm 0.02$} 
                   & 16.48\tiny{$\pm 0.00$} 
                   & 15.35\tiny{$\pm 0.07$} 
                   & 14.81\tiny{$\pm 0.02$}  
                   & - 
                   & - 
                   & 14.62\tiny{$\pm 0.02$} 
                   & 14.63\tiny{$\pm 0.01$} 
                   & - 
                   & 14.64\tiny{$\pm 0.01$} 
                   & - \\

$5 \times 10^{-2}$ & 14.80\tiny{$\pm 0.02$} 
                   & 15.17\tiny{$\pm 0.08$} 
                   & 14.95\tiny{$\pm 0.02$} 
                   & 14.65\tiny{$\pm 0.02$} 
                   & - 
                   & - 
                   & 14.63\tiny{$\pm 0.01$} 
                   & 14.67\tiny{$\pm 0.03$} 
                   & -
                   & 14.65\tiny{$\pm 0.03$} 
                   & - \\

$3 \times 10^{-2}$ & 14.80\tiny{$\pm 0.03$} 
                   & 14.63\tiny{$\pm 0.04$} 
                   & 15.44\tiny{$\pm 0.00$} 
                   & 15.71\tiny{$\pm 0.00$} 
                   & - 
                   & - 
                   & 14.65\tiny{$\pm 0.02$} 
                   & 14.62\tiny{$\pm 0.02$} 
                   & -
                   & 14.62\tiny{$\pm 0.02$} 
                   & - \\

$1 \times 10^{-2}$ & 14.95\tiny{$\pm 0.02$} 
                   & 14.60\tiny{$\pm 0.02$} 
                   & 14.63\tiny{$\pm 0.04$} 
                   & 14.69\tiny{$\pm 0.03$} 
                   & - 
                   & - 
                   & 14.58\tiny{$\pm 0.03$} 
                   & 14.62\tiny{$\pm 0.02$} 
                   & 15.18\tiny{$\pm 0.04$}  
                   & 14.60\tiny{$\pm 0.01$} 
                   & 15.16\tiny{$\pm 0.03$}  \\

$8 \times 10^{-3}$ & 15.01\tiny{$\pm 0.03$} 
                   & 14.61\tiny{$\pm 0.02$} 
                   & 14.62\tiny{$\pm 0.03$} 
                   & 14.70\tiny{$\pm 0.02$} 
                   & - 
                   & - 
                   & 14.61\tiny{$\pm 0.01$} 
                   & 14.60\tiny{$\pm 0.02$} 
                   & 15.15\tiny{$\pm 0.03$}  
                   & 14.60\tiny{$\pm 0.02$} 
                   & 15.15\tiny{$\pm 0.05$}  \\

$5 \times 10^{-3}$ & 15.09\tiny{$\pm 0.03$} 
                   & 14.53\tiny{$\pm 0.03$}  
                   & 14.60\tiny{$\pm 0.04$} 
                   & 14.69\tiny{$\pm 0.05$} 
                   & - 
                   & - 
                   & 14.61\tiny{$\pm 0.03$} 
                   & 14.63\tiny{$\pm 0.02$} 
                   & 15.11\tiny{$\pm 0.04$}  
                   & 14.64\tiny{$\pm 0.03$} 
                   & 15.16\tiny{$\pm 0.04$}  \\

$3 \times 10^{-3}$ & 15.28\tiny{$\pm 0.01$} 
                   & 14.52\tiny{$\pm 0.02$} 
                   & 14.61\tiny{$\pm 0.02$} 
                   & 14.70\tiny{$\pm 0.03$} 
                   & 14.62\tiny{$\pm 0.04$} 
                   & 14.57\tiny{$\pm 0.01$} 
                   & 14.65\tiny{$\pm 0.04$} 
                   & 14.65\tiny{$\pm 0.04$} 
                   & 15.16\tiny{$\pm 0.01$}   
                   & 14.67\tiny{$\pm 0.02$} 
                   & 15.15\tiny{$\pm 0.05$}  \\

$1 \times 10^{-3}$ & 15.78\tiny{$\pm 0.01$} 
                   & 14.57\tiny{$\pm 0.03$} 
                   & 14.65\tiny{$\pm 0.01$} 
                   & 14.89\tiny{$\pm 0.01$} 
                   & 14.59\tiny{$\pm 0.01$} 
                   & 14.72\tiny{$\pm 0.03$}
                   & 14.80\tiny{$\pm 0.02$} 
                   & 14.83\tiny{$\pm 0.03$} 
                   & 15.13\tiny{$\pm 0.01$}  
                   & 14.78\tiny{$\pm 0.02$} 
                   & 15.10\tiny{$\pm 0.05$}  \\

$8 \times 10^{-4}$ & 15.88\tiny{$\pm 0.03$}
                   & 14.59\tiny{$\pm 0.03$} 
                   & 14.68\tiny{$\pm 0.02$} 
                   & 14.92\tiny{$\pm 0.02$} 
                   & 14.64\tiny{$\pm 0.02$} 
                   & 14.76\tiny{$\pm 0.01$} 
                   & 14.78\tiny{$\pm 0.02$} 
                   & 14.81\tiny{$\pm 0.02$} 
                   & 15.09\tiny{$\pm 0.04$}  
                   & 14.81\tiny{$\pm 0.02$}  
                   & 15.12\tiny{$\pm 0.02$}  \\

$5 \times 10^{-4}$ & 16.17\tiny{$\pm 0.03$} 
                   & 14.62\tiny{$\pm 0.02$} 
                   & 14.74\tiny{$\pm 0.03$} 
                   & 14.92\tiny{$\pm 0.02$} 
                   & 14.75\tiny{$\pm 0.01$}
                   & 14.77\tiny{$\pm 0.03$} 
                   & 14.82\tiny{$\pm 0.02$} 
                   & 14.85\tiny{$\pm 0.03$} 
                   & 15.26\tiny{$\pm 0.06$}  
                   & 14.85\tiny{$\pm 0.02$}  
                   & 15.25\tiny{$\pm 0.02$}  \\

$3 \times 10^{-4}$ & 16.59\tiny{$\pm 0.04$} 
                   & 14.75\tiny{$\pm 0.02$} 
                   & 14.91\tiny{$\pm 0.04$} 
                   & 14.94\tiny{$\pm 0.02$} 
                   & 14.81\tiny{$\pm 0.02$} 
                   & 14.81\tiny{$\pm 0.02$} 
                   & 14.88\tiny{$\pm 0.03$} 
                   & 14.88\tiny{$\pm 0.01$} 
                   & 15.50\tiny{$\pm 0.04$}  
                   & 14.94\tiny{$\pm 0.03$} 
                   & 15.49\tiny{$\pm 0.03$}  \\

$1 \times 10^{-4}$ & 18.05\tiny{$\pm 0.05$} 
                   & 14.89\tiny{$\pm 0.04$} 
                   & 14.96\tiny{$\pm 0.02$} 
                   & 15.16\tiny{$\pm 0.02$} 
                   & 14.81\tiny{$\pm 0.02$} 
                   & 14.97\tiny{$\pm 0.02$} 
                   & 15.14\tiny{$\pm 0.04$} 
                   & 15.13\tiny{$\pm 0.02$} 
                   & 15.63\tiny{$\pm 0.04$}  
                   & 15.17\tiny{$\pm 0.02$} 
                   & 15.66\tiny{$\pm 0.02$}  \\

$8 \times 10^{-5}$ & 18.26\tiny{$\pm 0.09$}
                   & 14.88\tiny{$\pm 0.03$} 
                   & 14.96\tiny{$\pm 0.03$} 
                   & 15.20\tiny{$\pm 0.03$} 
                   & 14.87\tiny{$\pm 0.01$} 
                   & 14.96\tiny{$\pm 0.03$} 
                   & 15.19\tiny{$\pm 0.03$} 
                   & 15.19\tiny{$\pm 0.02$} 
                   & 15.56\tiny{$\pm 0.04$}  
                   & 15.24\tiny{$\pm 0.01$}  
                   & 15.54\tiny{$\pm 0.03$}  \\

$5 \times 10^{-5}$ & 18.92\tiny{$\pm 0.06$} 
                   & 14.92\tiny{$\pm 0.04$} 
                   & 15.03\tiny{$\pm 0.04$} 
                   & 15.38\tiny{$\pm 0.03$} 
                   & 14.94\tiny{$\pm 0.02$} 
                   & 15.05\tiny{$\pm 0.04$} 
                   & 15.37\tiny{$\pm 0.03$} 
                   & 15.38\tiny{$\pm 0.04$} 
                   & 15.43\tiny{$\pm 0.01$}  
                   & 15.42\tiny{$\pm 0.02$}  
                   & 15.41\tiny{$\pm 0.05$}  \\

$3 \times 10^{-5}$ & 19.21\tiny{$\pm 0.05$} 
                   & 14.95\tiny{$\pm 0.02$}
                   & 15.07\tiny{$\pm 0.01$} 
                   & 15.61\tiny{$\pm 0.03$} 
                   & 15.00\tiny{$\pm 0.02$} 
                   & 15.14\tiny{$\pm 0.01$} 
                   & 15.60\tiny{$\pm 0.01$} 
                   & 15.59\tiny{$\pm 0.03$} 
                   & 15.35\tiny{$\pm 0.01$}  
                   & 15.64\tiny{$\pm 0.01$} 
                   & 15.39\tiny{$\pm 0.02$}  \\

$1 \times 10^{-5}$ & 19.64\tiny{$\pm 0.04$}
                   & 15.12\tiny{$\pm 0.01$} 
                   & 15.23\tiny{$\pm 0.02$} 
                   & 16.21\tiny{$\pm 0.03$} 
                   & 15.22\tiny{$\pm 0.04$} 
                   & 15.55\tiny{$\pm 0.03$} 
                   & 16.23\tiny{$\pm 0.03$} 
                   & 16.22\tiny{$\pm 0.03$} 
                   & 15.72\tiny{$\pm 0.04$} 
                   & 16.22\tiny{$\pm 0.02$} 
                   & 15.70\tiny{$\pm 0.04$}  \\

$8 \times 10^{-6}$ & 19.62\tiny{$\pm 0.05$}
                   & 15.19\tiny{$\pm 0.02$} 
                   & 15.27\tiny{$\pm 0.02$} 
                   & 16.43\tiny{$\pm 0.02$} 
                   & 15.28\tiny{$\pm 0.01$} 
                   & 15.61\tiny{$\pm 0.01$}
                   & 16.42\tiny{$\pm 0.03$} 
                   & 16.39\tiny{$\pm 0.03$}
                   & 15.91\tiny{$\pm 0.05$}  
                   & 15.94\tiny{$\pm 0.02$}  
                   & 15.91\tiny{$\pm 0.05$}  \\

$5 \times 10^{-6}$ & 19.79\tiny{$\pm 0.06$}
                   & 15.32\tiny{$\pm 0.01$} 
                   & 15.38\tiny{$\pm 0.02$}  
                   & 16.86\tiny{$\pm 0.02$} 
                   & 15.47\tiny{$\pm 0.03$}
                   & 15.87\tiny{$\pm 0.03$} 
                   & 16.85\tiny{$\pm 0.02$} 
                   & 16.85\tiny{$\pm 0.05$} 
                   & 16.25\tiny{$\pm 0.03$} 
                   & 16.30\tiny{$\pm 0.02$}  
                   & 16.26\tiny{$\pm 0.04$}  \\

$3 \times 10^{-6}$ & 19.85\tiny{$\pm 0.13$} 
                   & 15.56\tiny{$\pm 0.02$} 
                   & 15.65\tiny{$\pm 0.02$}  
                   & 17.57\tiny{$\pm 0.06$} 
                   & 15.70\tiny{$\pm 0.02$} 
                   & 16.15\tiny{$\pm 0.02$}
                   & 17.60\tiny{$\pm 0.09$}
                   & 17.53\tiny{$\pm 0.07$} 
                   & 16.72\tiny{$\pm 0.03$} 
                   & 16.81\tiny{$\pm 0.02$}  
                   & 16.74\tiny{$\pm 0.04$}  \\

$1 \times 10^{-6}$ & 19.84\tiny{$\pm 0.10$} 
                   & 16.23\tiny{$\pm 0.02$} 
                   & 16.46\tiny{$\pm 0.02$} 
                   & 18.95\tiny{$\pm 0.06$} 
                   & 16.39\tiny{$\pm 0.05$} 
                   & 16.93\tiny{$\pm 0.05$} 
                   & 18.93\tiny{$\pm 0.08$} 
                   & 18.88\tiny{$\pm 0.06$} 
                   & 17.92\tiny{$\pm 0.07$} 
                   & 18.05\tiny{$\pm 0.09$} 
                   & 18.02\tiny{$\pm 0.04$}  \\
\bottomrule
\end{tabular}
\captionsetup{justification=centering} 
\caption{Numerical results for the A2C translation task.}
\label{tab:full_w2_results}
\end{sidewaystable}

\end{document}